\newcommand\bcmdtab{\noindent\bgroup\tabcolsep=0pt%
  \begin{tabular}{@{}p{10pc}@{}p{20pc}@{}}}
\newcommand\ecmdtab{\end{tabular}\egroup}
\newcommand\tagthis{\addtocounter{equation}{1}\tag{\theequation}}
\def\taglabel#1{\tagthis\label{#1}}
\long\def\BOC#1\EOC{\message{(Commented text )}}
\long\def\BOCC#1\EOCC{\message{(Commented text )}}
\long\def\BOCCC#1\EOCCC{\message{(Commented text )}}
\long\def\optional#1{\empty}
\long\def\NB#1{}
\def\ar{\leftarrow}
\def\beq{\begin{equation}}
\def\eeq#1{\label{#1}\end{equation}}
\def\ba{\begin{array}}
\def\ea{\end{array}}
\def\bi{\begin{itemize}}
\def\ei{\end{itemize}}
\def\j#1{\hbox{\it #1\/}}
\def\mi#1{\mathit{#1}}	
\def\mu#1{\mathit{\underline{#1}}}
\def\rar{\rightarrow}
\def\lrar{\leftrightarrow}
\def\false{\hbox{\sc false}}
\def\true{\hbox{\sc true}}
\def\mvis{\!=\!}
\def\sm{\hbox{\rm SM}}
\def\comp{\hbox{\rm Comp}}
\def\bC{{\bf{c}}}
\newtheorem{thm}{Theorem}
\newtheorem{lemma}{Lemma}
\newtheorem{example}{Example}
\def\caused{\hbox{\bf caused}}
\def\iif{\hbox{\bf if}}
\def\iis{\hbox{\bf is}}
\def\derivative{\hbox{\bf derivative}}
\def\of{\hbox{\bf of}}
\def\after{\hbox{\bf after}}
\def\causes{\hbox{\bf causes}}
\def\inertial{\hbox{\bf inertial}}
\def\default{\hbox{\bf default}}
\def\constraint{\hbox{\bf constraint}}
\def\alwayst{\hbox{\bf always\_t}}
\def\nonexecutable{\hbox{\bf nonexecutable}}
\def\exogenous{\hbox{\bf exogenous}}
\begin{document}

\title{Representing Hybrid Automata by Action Language Modulo Theories} 

\author[J. Lee, N. Loney and Y. Meng]{
Joohyung Lee, Nikhil Loney\\ 
School of Computing, Informatics and Decision Systems Engineering \\
Arizona State University, Tempe, AZ, USA \\
\email{\{joolee, nloney\}@asu.edu}
\and
Yunsong Meng\\
Houzz, Inc. \\
Palo Alto, CA, USA \\
\email{Yunsong.Meng@asu.edu}
}

\pagerange{\pageref{firstpage}--\pageref{lastpage}}


\label{firstpage}

\maketitle

\begin{abstract}
Both hybrid automata and action languages are formalisms for describing the evolution of dynamic systems. This paper establishes a formal relationship between them. We show how to succinctly represent hybrid automata in an action language which in turn is defined as a high-level notation for answer set programming modulo theories (ASPMT) --- an extension of answer set programs to the first-order level similar to the way satisfiability modulo theories (SMT) extends propositional satisfiability (SAT).  We first show how to represent linear hybrid automata with convex invariants by an action language modulo theories. A further translation into SMT allows for computing them using SMT solvers that support arithmetic over reals.  Next, we extend the representation to the general class of non-linear hybrid automata allowing even non-convex invariants. We represent them by an action language modulo ODE (Ordinary Differential Equations), which can be compiled into satisfiability modulo ODE. We present a prototype system {\sc cplus2aspmt} based on these translations, which allows for a succinct representation of hybrid transition systems that can be computed effectively by the state-of-the-art SMT solver ${\tt dReal}$.
\end{abstract}

\begin{keywords}
Answer Set Programming, Action Languages, Hybrid Automata
\end{keywords}

\section{Introduction} \label{sec:intro}

Both hybrid automata \cite{henzinger96thetheory} and action languages \cite{gel98} are formal models for describing the evolution of dynamic systems. The focus of hybrid automata is to model continuous transitions as well as discrete changes, but, unlike action languages,  their discrete components are too simple to represent complex relations among fluents and various properties of actions. On the other hand, transitions described by most action languages are limited to discrete changes only, which hinders action languages from modeling real-time physical systems. One of the exceptions is an enhancement of action language ${\cal C}$+ \cite{lee13answer}, which extends the original, propositional language in the paper by \citeN{giu04} to the first-order level. The main idea there is to extend the propositional ${\cal C}$+ to the first-order level by defining it in terms of Answer Set Programming Modulo Theories (ASPMT) --- a tight integration of answer set programs and satisfiability modulo theories (SMT) to allow SMT-like effective first-order reasoning in ASP. 

This paper establishes a formal relationship between hybrid automata and action language~${\cal C}$+. We first show how to represent {\em linear hybrid automata} with {\em convex invariants} by the first-order ${\cal C}$+.  A further translation into SMT allows for computing them using state-of-the-art SMT solvers that support arithmetic over reals.  
However, many practical domains of hybrid systems involve non-linear polynomials, trigonometric functions, and differential equations that cannot be represented by linear hybrid automata. Although solving the formulas with these functions is undecidable in general, \citeN{gao13satisfiability} presented a novel approach called a ``$\delta$-complete decision procedure'' for computing such SMT formulas, which led to the concept of ``satisfiability modulo ODE.'' \footnote{%
A $\delta$-complete decision procedure for an SMT formula $F$ returns false if $F$ is unsatisfiable, and returns true if its syntactic ``numerical perturbation'' of $F$ by bound $\delta$ is satisfiable, where $\delta>0$ is number provided by the user to bound on numerical errors. The method is practically useful since it is not possible to sample exact values of physical parameters in reality.}
The procedure is implemented in the SMT solver ${\tt dReal}$ \cite{gao13dreal}, which is shown to be useful for formalizing the general class of hybrid automata. 
We embrace the concept into action language ${\cal C}$+ by introducing two new abbreviations of causal laws, one for representing the evolution of continuous variables as specified by ODEs and another for describing invariants that the continuous variables must satisfy when they progress. The extension is rather straightforward thanks to the close relationship between ASPMT and SMT: ASPMT allows for quantified formulas as in SMT, which is essential for expressing non-convex invariants; algorithmic improvements in SMT can be carried over to the ASPMT setting.
We show that the general class of hybrid automata containing non-convex invariants can be expressed in the extended ${\cal C}$+ modulo ODEs.

The extended ${\cal C}$+ allows us to achieve the advantages of both hybrid automata and action languages, where the former provides an effective way to represent continuous changes, and the latter provides an elaboration tolerant way to represent (discrete) transition systems. In other words, the formalism gives us an elaboration tolerant way to represent hybrid transition systems. Unlike hybrid automata, the structured representation of states allows for expressing complex relations between fluents, such as recursive definitions of fluents and indirect effects of actions, and unlike propositional ${\cal C}$+, the transitions described by the extended ${\cal C}$+ are no longer limited to discrete ones only; the advanced modeling capacity of action languages, such as additive fluents, statically defined fluents, and action attributes, can be achieved in the context of hybrid reasoning.

We implemented a prototype system {\sc cplus2aspmt} based on these translations, which allows for a succinct representation of hybrid transition systems in language ${\cal C}$+ that can be compiled into the input language of ${\tt dReal}$.  We show that the system can be used for reasoning about hybrid transition systems, whereas other action language implementations, such as the Causal Calculator \cite{giu04}, {\sc cplus2asp}~\cite{babb13cplus2asp}, and {\sc coala}~\cite{gebser10coala} cannot. 

The paper is organized as follows. In Section~\ref{sec:prelim}, we give a review of hybrid automata to set up the terminologies used for the translations. Section~\ref{sec:linear} presents how to represent the special class of linear hybrid automata with convex invariants by ${\cal C}$+ modulo theory of reals. Section~\ref{sec:new} introduces two new abbreviations of causal laws that can be used for modeling invariant and flow conditions. Section~\ref{sec:non-linear} uses these new constructs to represent the general class of non-linear hybrid automata and shows how to reduce them to the input language of ${\tt dReal}$ leading to the implementation of system {\sc cplus2aspmt}, a variant of the system {\sc cplus2asp}.

The proofs of the theorems and the examples of hybrid automata in the input language of {\sc cplus2aspmt} can be found in the online appendix accompanying the paper at the TPLP archive \cite{lee17representing-online}.

\section{Preliminaries} \label{sec:prelim}

\subsection{Review: Hybrid Automata}

We review the definition of Hybrid Automata~\cite{henzinger96thetheory,alur00discrete}, formulated in terms of a logical language by representing arithmetic expressions by many-sorted first-order formulas under background theories, such as QF\_NRA (Quantifier-Free Non-linear Real Arithmetic) and QF\_NRA\_ODE (Quantifier-Free Non-linear Real Arithmetic with Ordinary Differential Equations).
By $\mathcal{R}$ we denote the set of all real numbers and by $\mathcal{R}_{\ge 0}$ the set of all non-negative real numbers.
Let $X$ be a set of real variables. An arithmetic expression over $X$ is an atomic formula constructed using functions and predicates from the signature of the background theory and elements from $\mathcal{R}\cup X$. Let $A(X)$ be an arithmetic expression over $X$ and let ${x}$ be a tuple of real numbers whose length is the same as the length of $X$. By $A({x})$, we mean the expression obtained from $A$ by replacing variables in $X$ with the corresponding values in ${x}$. For an arithmetic expression with no variables, we say that {\em $A$ is true} if the expression is evaluated to true in the background theory. 

A {\em Hybrid Automaton} ${\cal H}$ consists of the following components:

\begin{itemize}
\item {\bf Variables:} A finite list of real-valued variables $X = (X_1, \dots , X_n)$. 
The number $n$ is called the {\em dimension} of ${\cal H}$. 
We write $\dot{X}$ for the list $(\dot{X}_1,\dots,\dot{X}_n)$ of dotted variables, representing first derivatives during a continuous change, and 
$X'$ for the set $(X_1', \dots , X_n')$ of primed variables, representing the values at the conclusion of the discrete change.  $X_0\subseteq X$ is the set of initial states.
We use lower case letters to denote the values of these variables.

\item {\bf Control Graph:}  A finite directed graph $\langle V, E\rangle$. The vertices are called {\em control modes}, and the edges are called {\em control switches}.

\item {\bf Initial, Invariant, and Flow Conditions:} Three vertex labeling functions, ${\sf Init}$, ${\sf Inv}$, and ${\sf Flow}$, that assign to each control mode $v\in V$ three first-order formulas:

\begin{itemize}

\item ${\sf Init}_v(X)$ is a first-order formula whose free variables are from $X$. The formula constrains the initial condition.

\item ${\sf Inv}_v(X)$ is a first-order formula whose free variables are from $X$. The formula constrains the value of the continuous part of the state while the mode is $v$.

\item ${\sf Flow}_v(X, \dot{X})$ is a set of first-order formulas whose free variables are from $X\cup \dot{X}$.
The formula constrains the continuous variables and their first derivatives.
\end{itemize}

\item {\bf Events:} A finite set $\Sigma$ of symbols called {\em h-event}s and a function, 
${\sf hevent} : E \rar \Sigma$, that assigns to each edge a unique h-event.

\item {\bf Guard:} For each control switch $e\in E$, ${\sf Guard}_e(X)$ is a first-order formula whose free variables are from $X$.

\item {\bf Reset:} For each control switch $e\in E$, ${\sf Reset}_e(X,X')$ is a first-order formula whose free variables are from $X\cup X'$. 

\NB{May be better to use guard and reset instead of jump} 

\end{itemize}

\newpage
\begin{example}\label{ex:water-tank}
\begin{center}
\includegraphics[scale=0.43]{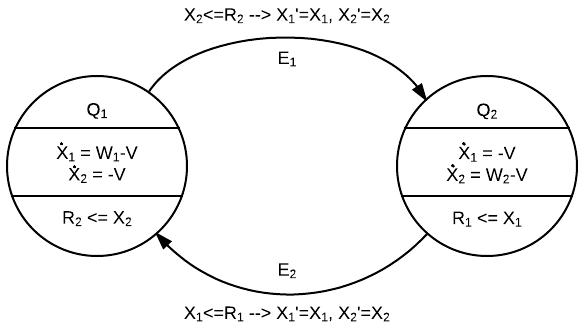}
\end{center}

\smallskip
The figure shows a hybrid automaton for the Water Tank Example 
from the lecture note by~\citeN{lygeros04lecture}, which consists of two variables $X = (X_1 , X_2)$, two h-events $E_1$ and $E_2$, and two control modes $V = \{Q_1 , Q_2 \}$.  For example,  
\begin{itemize}
\item  ${\sf Flow}_{Q_1}(\dot{X_1},\dot{X_2})$ is $\dot{X_1}\mvis {\rm W}\!-\!{\rm V_1}\land
\dot{X_2}\mvis \!-\!{\rm V_2}$. \ \ \ \ 
\item ${\sf Inv}_{Q_1}(X_1,X_2)$ is $X_2\!\ge\! {\rm R_2}$. 
\item  ${\sf Guard}_{(Q_1,Q_2)}(X_1,X_2)$ is $X_2\le {\rm R_2}$.\ \ 
\item ${\sf Reset}_{(Q_1,Q_2)}(X_1,X_2,{X_1}',{X_2}')$ is $X_1'\!=\!X_1\land X_2'\!=\!X_2.$
\end{itemize}
\end{example}

\medskip
A {\em labeled transition system} consists of the following components:
\bi
\item {\bf State Space:} A set $Q$ of states and a subset $Q_0\subseteq Q$ of initial states. 

\item {\bf Transition Relations:} A set $A$ of labels. For each label $a\in A$, a binary relation $\rar^a$ on the state space $Q$. Each triple $q\rar^a q'$ is called a {\em transition}.
\ei

The {\em Hybrid Transition System} $T_H$ of a Hybrid Automaton $H$ is the labeled transition system obtained from $H$ as follows. 

\begin{itemize}
\item  The set $Q$ of {\em states} is the set of all $(v,r)$ such that $v\in V$, $r\in\mathcal{R}^n$, and ${\sf Inv}_v(r)$ is true. 

\item $(v, r) \in Q_0$ iff both ${\sf Init}_v(r)$ and ${\sf Inv}_v(r)$ are true.

\item The transitions are labeled by members from $A=\Sigma\cup \mathcal{R}_{\ge 0}$.

\item  $(v, r)\rar^\sigma (v', r')$, where $(v,r), (v',r')\in Q$ and $\sigma$ is an h-event in $\Sigma$, is a {\em transition} if there is an edge $e = (v, v') \in E$ such that:
(1) ${\sf hevent}(e) = \sigma$,
(2) the sentence ${\sf Guard}_e(r)$ is true,  and
(3) the sentence ${\sf Reset}_e(r,r')$ is true.

\item   $(v, r) \rar^{\delta} (v, r')$, where $(v,r), (v,r')\in Q$ and $\delta$ is a nonnegative real, is a {\em transition} if there is a differentiable function 
$f : [0, \delta] \rar \mathcal{R}^n$, 
with the first derivative $\dot{f} : [0, \delta] \rar \mathcal{R}^n$ such that: 
\begin{enumerate}

\item[(1)] $f(0) = r$ and $f(\delta) = r'$,

\item[(2)] for all real numbers $\epsilon\in [0, \delta]$,  ${\sf Inv}_v(f(\epsilon))$ is true and, for all real numbers $\epsilon\in (0, \delta)$, ${\sf Flow}_v(f(\epsilon), {\dot{f}}(\epsilon))$ is true. The function $f$ is called the {\em witness} function for the transition $(v, r) \rar^{\delta} (v, r')$. 
\end{enumerate}

\end{itemize}

\subsection{Review: ASPMT and ${\cal C}$+}\label{ssec:review-aspmt-cplus}

ASPMT~\cite{bartholomew13functional} is a special case of many-sorted first-order (functional) stable model semantics from the papers by \citeN{ferraris11stable} and by \citeN{bartholomew13functional} by restricting the background signature to be interpreted in the standard way, in the same way SMT restricts first-order logic. 

The syntax of ASPMT is the same as that of SMT. Let $\sigma^{bg}$ be
the (many-sorted) signature of the background theory~$bg$. An
interpretation of $\sigma^{bg}$ is called a {\em background
  interpretation} if it satisfies the background theory. For instance,
in the theory of reals, we assume that $\sigma^{bg}$ contains the set
$\mathcal{R}$ of symbols for all real numbers, the set of arithmetic
functions over real numbers, and the set $\{<, >, \le, \ge\}$ of
binary predicates over real numbers. Background interpretations
interpret these symbols in the standard way.

Let $\sigma$ be a signature that is disjoint from $\sigma^{bg}$.
We say that an interpretation $I$ of $\sigma$ satisfies a sentence $F$
w.r.t. the background theory $bg$, denoted by $I\models_{bg} F$,
if there is a background interpretation $J$ of $\sigma^{bg}$ that has
the same universe as $I$, and $I\cup J$ satisfies $F$.
Interpretation $I$ is a {\em stable model} of $F$ relative to a set of function and predicate constants ${\bf c}$ (w.r.t. the background theory $\sigma^{bg}$) if $I\models_{bg} \sm[F; \bC]$ (we refer the reader to the paper by \citeN{bartholomew13functional} for the definition of the $\sm$ operator).

In the paper by~\citeN{lee13answer}, action language ${\cal C}$+ was reformulated in terms of ASPMT and was shown to be useful for reasoning about hybrid transition systems. 
Appendix~A \cite{lee17representing-online}
reviews this version of ${\cal C}$+. 

\section{Representing Linear Hybrid Automata with Convex Invariants by ${\cal C}$+ Modulo Theories} \label{sec:linear}

\subsection{Representation}

{\em Linear} hybrid automata \cite{henzinger96thetheory}  are a special case  of hybrid automata where (i) the initial, invariant, flow, guard, and reset conditions are Boolean combinations of linear inequalities, and (ii) the free variables of flow conditions are from $\dot{X}$ only. In this section, we assume that for each ${\sf Inv}_v(X)$ from each control mode $v$, the set of values of $X$ that makes ${\sf Inv}_v(X)$ true forms a convex region. \footnote{A set $X$ is {\em convex} if for any $x_1,x_2\in X$ and any $\theta$ with $0\le \theta \le 1$, we have $\theta x_1 + (1-\theta)x_2\in X$.} For instance, this is the case when ${\sf Inv}_v(X)$ is a {\em conjunction} of linear inequalities. 

We show how a linear hybrid automata $H$ can be turned into an action description $D_H$ in ${\cal C}+$, and extend this representation to non-linear hybrid automata in the next section. We first define the signature of the action description $D_H$ as follows. 

\begin{itemize}
\item  For each real-valued variable $X_i$ in $H$, a simple fluent constant $X_i$ of sort $\mathcal{R}$.
\item  For each control switch $e\in E$ and the corresponding ${\sf hevent}(e)\in \Sigma$, a Boolean-valued action constant ${\sf hevent}(e)$.
\item  An action constant $\j{Dur}$ of sort nonnegative reals. 
\item  A Boolean action constant $\j{Wait}$.
\item  A fluent constant $\j{Mode}$ of sort $V$ (control mode).
\end{itemize}

The ${\cal C}$+ action description $D_H$ consists of the following causal laws. 
We use lower case letter $x_i$ for denoting a real-valued variable. 
Let $X = (X_1,\dots, X_n)$ and $x=(x_1,\dots, x_n)$.
By $X=x$, we denote the conjunction $(X_1=x_1)\land \dots\land (X_n=x_n)$.

\begin{itemize}

\item {\bf Exogenous constants: }
\[
\ba c
  {\exogenous}\ X_i  \ \ \ \ (X_i\in X)\\
  {\exogenous}\ {\sf hevent}(e) \\
  {\exogenous}\ \j{Dur}.
\ea
 \]
Intuitively, these causal laws assert that the values of the fluents can be arbitrary.  The action constant $\j{Dur}$ is to record the duration that each transition takes (discrete transitions are assumed to have duration $0$).

\item {\bf Discrete transitions: }
For each control switch $e=(v_1,v_2) \in E$:
\begin{itemize}

\item  {\bf Guard}: 
\[
    \nonexecutable\ {\sf hevent}(e)\ \iif\ \neg {\rm Guard}_e(X) .
\]
The causal law asserts that an h-event cannot be executed if its guard condition is not satisfied.

\item {\bf Reset}: 
\[
\ba c
   \constraint\ {\sf Reset}_e(x,X)\ \after\ X=x\land {\sf hevent}(e)\mvis\true. 
\ea
\]
The causal law asserts that if an h-event is executed, the discrete transition sets the new value of fluent $X$ as specified by the reset condition. 

\item  {\bf Mode and Duration:} 
\[
\ba l
    \inertial\ \j{Mode}=v                  \hspace{2cm} (v\in V) \\
    \nonexecutable\ {\sf hevent}(e)\ \iif\ \j{Mode}\ne v_1 \\
    {\sf hevent}(e)\ \causes\ \j{Mode}=v_2\\
    {\sf hevent}(e)\ \causes\ \j{Dur}\mvis 0.
\ea
\]
The first causal law asserts the commonsense law of inertia on the control mode: the mode does not change when no action affects it.
The second causal law asserts an additional constraint for an h-event to be executable (when the state is in the corresponding mode). The third and fourth causal laws set the new control mode and the duration when the h-event occurs.  
\end{itemize}

\item {\bf Continuous Transitions:} 

\begin{itemize}

\item  {\bf Wait}: 
\[
\ba l
     \default\ \j{Wait}\mvis\true \\
     {\sf hevent}(e)\ \causes\ \j{Wait}\mvis\false  .
\ea
\]
$\j{Wait}$ is an auxiliary action constant that is true when no h-event is executed, in which case a continuous transition should occur. 

\item 
{\bf Flow}: For each control mode $v\in V$ and for each $X_i\in X$,
\beq 
\ba l
 \constraint\ {\sf Flow}_v( (X-x) / \delta)\\
 \hspace{2.5cm} \after\  X=x \land \j{Mode}=v \land\j{Dur}=\delta\land \j{Wait}\mvis\true  \ \ \ (\delta >0) \\
\constraint\ X = x\ 
  \after\  X=x \land \j{Mode}=v \land \j{Dur}=0 \land \j{Wait}\mvis\true .
\ea
\eeq{r2} 
These causal laws assert that when no h-event is executed (i.e., $\j{Wait}$ is true), the next values of the continuous variables are determined by the flow condition.  

\item  {\bf Invariant}: For each control mode $v\in V$,
\beq
     \constraint\ \j{Mode}\mvis v \rar {\sf Inv}_v(X).
\eeq{inv-linear}
The causal law asserts that in each state, the invariant condition for the control mode should be true.

\end{itemize}

\end{itemize}

It is easy to see from the assumption on the flow condition of linear hybrid automata that the witness function exists and is unique ($f(\epsilon) = x+\frac{x'-x}{\delta}\epsilon$); obviously it is linear. 

Note that \eqref{inv-linear} checks the invariant condition in each state only, not during the transition between the states. This does not affect the correctness because of the assumption that the invariant condition is convex and the flow condition is linear, from which it follows that 
\beq
   \forall \epsilon\in[0,\delta]({\sf Inv}_v(f(0))\ \wedge\ {\sf Inv}_v(f(\delta))\to {\sf Inv}_v(f(\epsilon)))
\eeq{invariant-state}
is true, where $f$ is the witness function. 

%

Figure~\ref{fig:water-cplus} shows the translation of the Hybrid Automaton in Example~\ref{ex:water-tank} into ${\cal C}$+.

%
    \begin{figure}[b]
    {
    \hrule
    \begin{tabbing}
    $q\in\{Q_1, Q_2\}$;\ \  $t$, $x_1$, $x_2$ are variables of sort $\mathcal{R}_{\ge 0}$.  ${\rm W_1}, {\rm W_2}, {\rm V}$ are fixed real numbers\\ \\
    
    Simple fluent constants:   \hskip 3.5cm \= Sort: \\
    ~~~~$X_1$, $X_2$                \> ~~~~$\mathcal{R}_{\ge 0}$ \\
    ~~~~$\j{Mode}$                          \> ~~~~$\{Q_1,Q_2\}$ \\
    Action constants:                     \> Sort:\\
    ~~~~$E_1$, $E_2$, \j{Wait}                 \> ~~~~Boolean\\
    ~~~~$\j{Dur}$                           \> ~~~~$\mathcal{R}_{\ge 0}$ \\ \\
    
    \% Exogenous constants: \\
    $\exogenous\ X_1, X_2, E_1, E_2, \j{Dur}$ \\  \\
    
    \% Guard:  \\
    $\nonexecutable\ E_1\ \iif\ \neg (X_2\le R_2)$  \> 
    $\nonexecutable\ E_2\ \iif\ \neg (X_1\le R_1)$ \\  \\
    
    \% Reset: \\
    $\constraint\ (X_1,X_2)=(x_1,x_2)\ \after\ (X_1,X_2)=(x_1,x_2)\land E_1\mvis\true$ \\
    $\constraint\ (X_1,X_2)=(x_1,x_2)\ \after\ (X_1,X_2)=(x_1,x_2)\land E_2\mvis\true$ \\ \\
    
    \% Mode: \\
    $\nonexecutable\ E_1\ \iif\ \neg(\j{Mode}=Q_1)$ \> 
    $\nonexecutable\ E_2\ \iif\ \neg(\j{Mode}=Q_2)$ \\ 
    
    $E_1\ \causes\ \j{Mode}=Q_2$\  \hspace{3.5cm}
    $E_2\ \causes\ \j{Mode}=Q_1$\  \\ 
    
    $\inertial\ \j{Mode}=q    \ \ \ \ (q\in \{Q_1,Q_2\}$ \\ \\
    
    \% Duration: \\
    $E_1\ \causes\ \j{Dur}\mvis 0$  \> 
    $E_2\ \causes\ \j{Dur}\mvis 0$  \\ \\
    
    \% Wait: \\
    $\default\ \j{Wait}=\true$ \\
    $E_1\ \causes\ \j{Wait}=\false$ \> 
    $E_1\ \causes\ \j{Wait}=\false$ \\  \\
    
    \% Flow: \\ 
    $\constraint\ ((X_1\!\!-\!\!x_1)/t, (X_2\!\!-\!\!x_2)/t) = ({\rm W_1}\!-\!{\rm V}, -{\rm V})$  \\
    $ \hspace{2cm}  \after\ (X_1,X_2)=(x_1,x_2)\land \j{Mode}=Q_1\land\j{Dur}=t\land t>0\land \j{Wait}=\true $\\ 
    $\constraint\ ((X_1\!\!-\!\!x_1)/t, (X_2\!\!-\!\!x_2)/t) = (-{\rm V}, {\rm W_2}-V)$ \\
    $ \hspace{2cm}  \after\ (X_1,X_2)=(x_1,x_2)\land \j{Mode}=Q_2\land\j{Dur}=t\land t>0\land \j{Wait}=\true $\\ 
    
    $\constraint\ (X_1, X_2) = (x_1,x_2)\ \after\ (X_1,X_2)=(x_1, x_2)\land \j{Mode}=q\land\j{Dur}=0\land \j{Wait}=\true$  \ \ $(q\in \{Q_1,Q_2\})$ \\  \\
    
    \% Invariant \\
    $\constraint\ \j{Mode}\mvis Q_1 \rar X_2\ge {\rm R_2}$ \\
    $\constraint\ \j{Mode}\mvis Q_2 \rar X_1\ge {\rm R_1}$ 
    \end{tabbing}
    \hrule
    }
    \caption{${\cal C}$+ Representation of Hybrid Automaton of Water Tank}
    \label{fig:water-cplus}
    \end{figure}

The following theorem asserts the correctness of the translation.  By a path we mean a sequence of transitions. \footnote{For simplicity of the comparison, as with action descriptions, the theorem does not require that  the initial state of a path in the labeled transition system satisfy the initial condition. The condition can be easily added.}

\begin{thm}\label{thm:ha2cplus}
There is a 1:1 correspondence between the paths of the transition system of a hybrid automaton~$H$ and the paths of the transition system of the action description $D_H$.
\end{thm}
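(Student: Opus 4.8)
My plan is to establish the correspondence at the level of states and single transitions first, and then to lift it to paths by induction on length; by the remark in the footnote the initial condition plays no role, so I may ignore ${\sf Init}$ entirely. A state of the transition system of $D_H$ is an interpretation of the fluent constants, assigning a tuple $r\in\mathcal{R}^n$ to $X=(X_1,\dots,X_n)$ and a mode $v$ to $\j{Mode}$. The only causal law restricting states is the invariant constraint~\eqref{inv-linear}, which forces ${\sf Inv}_v(r)$ to hold; this is exactly the condition defining a state $(v,r)$ of $T_H$. I would therefore define the map $s\mapsto(v,r)$ and, reading off the semantics of ${\cal C}$+, verify that it is a bijection onto the state set $Q$ of $T_H$: every $(v,r)\in Q$ is realized by exactly one state of $D_H$, because the fluent constants are otherwise unconstrained.

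Next I would analyze the single transitions $\langle s,\eta,s'\rangle$ of $D_H$, where $\eta$ interprets the action constants ${\sf hevent}(e)$, $\j{Dur}$, and $\j{Wait}$. These split into two mutually exclusive and exhaustive cases according to whether some h-event is true, since any fired event makes $\j{Wait}$ false and $\j{Dur}$ zero, whereas otherwise $\j{Wait}$ is true by default. For the discrete case, unfolding the \textbf{Guard}, \textbf{Reset}, and \textbf{Mode and Duration} laws shows that a transition with ${\sf hevent}(e)$ true exists iff $e=(v,v')$ is an edge whose source $v$ is the current mode, ${\sf Guard}_e(r)$ is true, and ${\sf Reset}_e(r,r')$ is true --- precisely the definition of $(v,r)\rar^\sigma(v',r')$ in $T_H$. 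I would also check that no transition can fire two h-events: since each one requires its source to equal the current mode and sets a definite new mode, distinct events with different targets produce conflicting modes and hence no stable model, so the firing pattern determines a single edge. This direction is essentially mechanical.

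The main obstacle is the continuous case, where I must reconcile the witness-function definition of $T_H$ with the purely algebraic \textbf{Flow} laws~\eqref{r2} of $D_H$. When no h-event fires, $\j{Wait}$ is true, $\j{Dur}=\delta$ ranges over the nonnegative reals, and \eqref{r2} constrains $s'$ by ${\sf Flow}_v((r'-r)/\delta)$ for $\delta>0$ and by $r'=r$ for $\delta=0$. Using the fact established just before the theorem --- that under the linear-flow, convex-invariant hypotheses the witness is the unique linear map $f(\epsilon)=r+\frac{r'-r}{\delta}\epsilon$ --- I would, from any continuous transition of $D_H$, build this $f$ and note that its constant derivative $\frac{r'-r}{\delta}$ makes ${\sf Flow}_v(f(\epsilon),\dot{f}(\epsilon))$ hold at every interior $\epsilon$ (the flow depends only on $\dot{X}$), and conversely read off $r'=f(\delta)$ from a given witness. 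The crux is that $D_H$ checks ${\sf Inv}_v$ only at the two endpoint states via~\eqref{inv-linear}, whereas $T_H$ demands ${\sf Inv}_v(f(\epsilon))$ for all $\epsilon\in[0,\delta]$; here I would invoke~\eqref{invariant-state}, which holds by convexity of the invariant region together with linearity of $f$, to show the two requirements coincide. This is the step that genuinely uses the theorem's hypotheses, and it needs care to cover the degenerate $\delta=0$ subcase and the uniqueness of the solution to the flow equation.

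Finally, with the states in bijection and the transitions matched case by case with shared source and target, I would conclude by induction on length that finite paths correspond one-to-one: single transitions are handled above, and a path of length $k+1$ extends a matched path of length $k$ by one further transition whose source is the (matched) target of its predecessor. I expect this induction to be routine; the real work lies in the per-transition analysis, and especially in verifying the invariant-along-the-flow equivalence in the continuous case.
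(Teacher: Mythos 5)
Your overall plan --- a per-state and per-transition correspondence run in both directions, with the discrete/continuous case split, the observation that at most one h-event can fire (conflicting \textbf{Mode} effects kill the stable model), and the convexity-plus-linearity argument \eqref{invariant-state} reconciling the state-only check \eqref{inv-linear} with the invariant-along-the-flow requirement of $T_H$ --- is exactly the paper's strategy: its two lemmas handle the two directions, each proved by a state lemma plus a transition lemma. But there is a genuine gap at the step you yourself call the crux. In the direction from $T_H$ to $D_H$, a continuous transition $(v,r)\rar^{\delta}(v,r')$ only guarantees the existence of \emph{some} differentiable witness $g$, and you identify $g$ with the straight-line map by appealing to the paper's remark that the witness is unique. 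That remark is asserted without proof, and it fails for linear hybrid automata in general: the flow condition may be any Boolean combination of linear inequalities over $\dot{X}$ (a differential inclusion such as $1\le \dot{X}_1\le 2$), in which case many witnesses join $r$ to $r'$ and the given one need not have constant derivative. What your argument actually requires is that ${\sf Flow}_v((r'-r)/\delta)$ holds, so that the constraint in \eqref{r2} is satisfied, and without uniqueness this does not follow. The paper closes precisely this hole with the mean value theorem (its appendix lemma on linear automata): any witness $g$ is continuous on $[0,\delta]$ and differentiable on $(0,\delta)$, hence $\dot{g}(c)=(r'-r)/\delta$ at some interior point $c$; since the flow condition holds at every interior point and mentions only $\dot{X}$, ${\sf Flow}_v((r'-r)/\delta)$ is true, and the straight-line interpolant is then itself a witness. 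You should replace the appeal to uniqueness by this argument; the converse direction of your continuous case (from $D_H$ back to $T_H$) is fine as stated.

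A secondary omission: ``reading off the semantics of ${\cal C}$+'' by unfolding the causal laws is not automatic, because transitions of $D_H$ are defined through stable models. The paper justifies the unfolding by noting that $D_H$ has no statically determined fluents and all constants are exogenous, and that $(D_H)_1$ is tight, so the stable-model condition can be replaced by classical satisfaction of the completion; it is the completion that yields the explicit formulas (for flow, invariant, guard, reset, mode, duration, wait) that your case analysis manipulates. Your proof needs this reduction stated, or the ``mechanical'' discrete case is not actually licensed.
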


The proof is immediate from the following two lemmas. 
First, we state that every path in the labeled transition system of $T_H$ is a path in the transition system described by $D_H$.

\begin{lemma}\label{lem:ha2cplus}\optional{lem:ha2cplus-linear}
For any path 
\[
  p=(v_{0}, {r}_{0}) \xrightarrow{\sigma_0}
    (v_{1}, {r}_{1}) \xrightarrow{\sigma_1}\dots
    \xrightarrow{\sigma_{m-1}}
    (v_{m}, {r}_{m})
\]
in the labeled transition system of $H$,   let 
\[
  p' = \langle s_0,a_0,s_1,a_1,\dots, a_{m-1}, s_m\rangle, 
\]
where each $s_i$ is an interpretation of fluent constants and each $a_i$ is an interpretation of action constants such that, for $i=0,\dots m\!-\!1$,
\begin{itemize}
\item $s_0 \models_{bg} (\j{Mode}, {X})
           =(v_{0}, {r}_{0})$;
           
\item $s_{i+1}\models_{bg} (\j{Mode}, {X})=  (v_{i+1}, {r}_{i+1})$; 
     \item  if $\sigma_i={\sf hevent}( v_{i}, v_{i+1} )$, 
       then $(\j{Dur})^{a_i}\mvis 0$, $(\j{Wait})^{a_i}=\false$, and, for all $e\in E$,        $({\sf hevent}(e))^{a_i}=\true$ iff $e=(v_{i}, v_{i+1})$;

     \item   if $\sigma_i\in\mathcal{R}_{\ge 0}$, then
        $(\j{Dur})^{a_i}=\sigma_i$, $(\j{Wait})^{a_i}=\true$, and, for all $e\in E$, we have $({\sf hevent}(e))^{a_i}=\false$.
\end{itemize}

Then, $p'$ is a path in the transition system $D_H$.
\end{lemma}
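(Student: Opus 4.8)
The plan is to unwind the definition of a path in the transition system of $D_H$ into the underlying ASPMT stable-model conditions and then verify them transition by transition. Recall from the ${\cal C}+$ semantics reviewed in Appendix~A that $\langle s_0,a_0,\dots,a_{m-1},s_m\rangle$ is a path exactly when each one-step interpretation $s_i\cup a_i\cup s_{i+1}$ is a stable model, relative to $\bC$, of the transition rules obtained by instantiating the causal laws of $D_H$ over the single step from time $i$ to $i+1$. So it suffices to fix an arbitrary index $i$ and show the triple $\langle s_i,a_i,s_{i+1}\rangle$ produced by the construction is such a stable model. Observe first that each $s_i$ corresponds to a hybrid state $(v_i,r_i)\in Q$, so by the definition of $Q$ the formula ${\sf Inv}_{v_i}(r_i)$ is true and the state constraint~\eqref{inv-linear} is met at every state; the remaining work is the transition rules, which I would handle by splitting on the label $\sigma_i$.

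When $\sigma_i={\sf hevent}(v_i,v_{i+1})$ is a discrete h-event, the construction sets $({\sf hevent}(e))^{a_i}=\true$ for the single edge $e=(v_i,v_{i+1})$ and $\false$ for all others, together with $(\j{Dur})^{a_i}=0$ and $(\j{Wait})^{a_i}=\false$. I would then check each causal law in turn: the guard law $\nonexecutable\ {\sf hevent}(e)\ \iif\ \neg{\sf Guard}_e(X)$ is respected because the hybrid transition guarantees ${\sf Guard}_e(r_i)$ is true; the mode-precondition law holds because $\j{Mode}^{s_i}=v_i$; the reset constraint is satisfied because $s_{i+1}$ is chosen so that ${\sf Reset}_e(r_i,r_{i+1})$ is true; and the $\causes$ laws for $\j{Mode}$, $\j{Dur}$, and $\j{Wait}$ are matched by the very definitions of $s_{i+1}$ and $a_i$. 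Since $(\j{Wait})^{a_i}=\false$, the flow constraints in~\eqref{r2} are vacuously satisfied.

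For a continuous transition, $\sigma_i=\delta\in\mathcal{R}_{\ge 0}$, all h-event constants are $\false$, $(\j{Wait})^{a_i}=\true$, $(\j{Dur})^{a_i}=\delta$, and the mode is unchanged ($v_{i+1}=v_i$) so the $\inertial$ law on $\j{Mode}$ holds. The heart of this case is the flow constraint in~\eqref{r2}: using the linear witness function $f(\epsilon)=r_i+\frac{r_{i+1}-r_i}{\delta}\epsilon$ guaranteed for linear hybrid automata, the constant derivative $\dot f=\frac{r_{i+1}-r_i}{\delta}$ equals the value $(X-x)/\delta$ substituted into ${\sf Flow}_{v_i}$, and since the hybrid transition makes ${\sf Flow}_{v_i}(f(\epsilon),\dot f(\epsilon))$ true on $(0,\delta)$ while the flow depends only on $\dot X$, we get that ${\sf Flow}_{v_i}((r_{i+1}-r_i)/\delta)$ is true; the degenerate subcase $\delta=0$ forces $r_{i+1}=r_i$, matching the second constraint in~\eqref{r2}. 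Note that~\eqref{inv-linear} is needed only at the endpoints, which is implied by the stronger pointwise invariant along $f$ supplied by the hybrid transition, exactly as anticipated by~\eqref{invariant-state}.

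The step I expect to be the main obstacle is not classical satisfaction of the causal laws --- which is largely bookkeeping --- but confirming that each $s_i\cup a_i\cup s_{i+1}$ is a \emph{stable} model rather than merely a model, i.e.\ that every fluent and action value is genuinely justified by the causal laws under the operator $\sm[\,\cdot\,;\bC]$. Here I would exploit that the action constants and fluents are declared $\exogenous$ (with $\j{Wait}$ governed by $\default$), so their values form free choices, whereas $\j{Mode}$ and the post-transition reals are fixed by $\causes$ and $\constraint$ laws. Since the resulting causal laws are definite, I would argue that the causally explained transitions coincide with the classical models of the associated completion, so that the classical verification carried out above already establishes that $p'$ is a path in $D_H$.
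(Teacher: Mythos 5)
Your outline does follow the same route as the paper's proof: verify the path transition by transition, split on whether $\sigma_i$ is an h-event or a nonnegative real, observe that the state constraint \eqref{inv-linear} holds because each $(v_i,r_i)$ is a state of $T_H$, and reduce stability to classical reasoning via definiteness and tightness. But the continuous case, which is the mathematical heart of the lemma, contains a circularity as written. The definition of $(v_i,r_i)\xrightarrow{\delta}(v_i,r_{i+1})$ in $T_H$ guarantees only that \emph{some} differentiable witness $g:[0,\delta]\rar\mathcal{R}^n$ exists with ${\sf Flow}_{v_i}(\dot g(\epsilon))$ true for all $\epsilon\in(0,\delta)$; it does not say that this witness is the linear interpolant $f(\epsilon)=r_i+\epsilon(r_{i+1}-r_i)/\delta$. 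Your argument identifies the two (``using the linear witness function \dots guaranteed for linear hybrid automata \dots the hybrid transition makes ${\sf Flow}_{v_i}(f(\epsilon),\dot f(\epsilon))$ true''), but the claim that the linear interpolant is itself a witness --- equivalently, that ${\sf Flow}_{v_i}((r_{i+1}-r_i)/\delta)$ is true, which is exactly what the completed flow constraint derived from \eqref{r2} demands --- is the very thing that needs proof. Nor can you lean on the main text's remark that the witness is ``unique''; flow conditions of linear hybrid automata are Boolean combinations of linear \emph{inequalities} over $\dot X$, so witnesses are not unique in general. The paper closes this gap with Lemma~\ref{lemma:linear}: take the arbitrary witness $g$ supplied by the transition, apply the mean value theorem to obtain a point $c\in(0,\delta)$ with $\dot g(c)=(r_{i+1}-r_i)/\delta$, and conclude ${\sf Flow}_{v_i}((r_{i+1}-r_i)/\delta)$ from ${\sf Flow}_{v_i}(\dot g(c))$, using the fact that for linear hybrid automata the flow condition constrains $\dot X$ only. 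This mean-value step is the missing idea.

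A secondary, smaller shortfall is in the stability argument. You correctly invoke definiteness, tightness, and the equivalence of stable models with models of the completion $\comp[(D_H)_1;\;0\!:\!\sigma^{act}\cup 1\!:\!\sigma^{fl}]$, but then conclude that ``the classical verification carried out above already establishes'' the result. The completion is strictly stronger than the conjunction of the causal laws: it also contains the converse, support directions --- e.g.\ $0\!:\!\j{Wait}\mvis\false$ only if some $0\!:\!{\sf hevent}(e)\mvis\true$ holds, and $1\!:\!\j{Mode}\mvis v$ only if it is supported by inertia or by a causes law. Your construction does satisfy these biconditionals (in the continuous case no h-event occurs, $\j{Wait}$ is true, and the mode is unchanged), but they are not consequences of the forward-direction checks you performed, so they must be verified separately. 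The paper does this by writing the completion out explicitly as the formulas $\j{FLOW}$, $\j{INV}$, $\j{WAIT}$, $\j{GUARD}$, $\j{RESET}$, $\j{MODE}$, $\j{DURATION}$ and checking each one in both the discrete and the continuous case.
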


Next, we show that every path in the transition system of $D_H$ is a path in the labeled transition system of $H$.

\begin{lemma}\label{lem:cplus2ha}\optional{lem:cplus2ha-linear}
For any path 
 \[
  q = \langle s_0,a_0,s_1,a_1,\dots, a_{m-1}, s_m\rangle
\]
in the transition system of $D_H$, 
let 
\[
  q'=(v_{0}, {r}_{0}) \xrightarrow{\sigma_0}
    (v_{1}, {r}_{1}) \xrightarrow{\sigma_1}\dots
    \xrightarrow{\sigma_{m-1}}
    (v_{m}, {r}_{m}), 
\]
where 
\begin{itemize}
\item  
$v_{i}\in V$ and ${r}_{i}\in \mathcal{R}^n$ ($i=0,\dots,m$) are such that $s_i \models_{bg} (\j{Mode}, {X})
           =(v_{i}, r_{i})$;

\item  $\sigma_i$ ($i=0,\dots, m\!-\!1$) is 
\begin{itemize}
\item  ${\sf hevent}(v_{i}, v_{i+1})$ if  
    $({\sf hevent}(v_{i}, v_{i+1}))^{a_i}=\true$;
\item  $(Dur)^{a_i}$ otherwise.
\end{itemize}
\end{itemize}

Then, $q'$ is a path in the transition system of $T_H$.
\end{lemma}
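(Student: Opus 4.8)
The plan is to argue step by step. A sequence is a path in the transition system of $D_H$ exactly when each consecutive triple $\langle s_i,a_i,s_{i+1}\rangle$ is a transition of $D_H$, and $q'$ is a path of $T_H$ exactly when each $(v_i,r_i)$ is a state of $T_H$ and each $(v_i,r_i)\xrightarrow{\sigma_i}(v_{i+1},r_{i+1})$ is a transition. So I would fix $i$ and establish (a) $(v_i,r_i)\in Q$ and (b) legality of the $i$-th step, reading off the required facts from the causal laws that the ${\cal C}$+ transition $\langle s_i,a_i,s_{i+1}\rangle$ must satisfy. For (a): since $s_i\models_{bg}(\j{Mode},X)\mvis(v_i,r_i)$ and $s_i$ is the source of a transition, the invariant constraint \eqref{inv-linear} instantiated at $v_i$ forces ${\sf Inv}_{v_i}(r_i)$ to be true, so $(v_i,r_i)$ is a state of $T_H$. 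This in particular guarantees that the endpoints used below, ${\sf Inv}_{v_i}(f(0))$ and ${\sf Inv}_{v_i}(f(\delta))$, both hold.

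In the discrete case $({\sf hevent}(v_i,v_{i+1}))^{a_i}=\true$ and $\sigma_i={\sf hevent}(v_i,v_{i+1})$. From the guard law $\nonexecutable\ {\sf hevent}(e)\ \iif\ \neg{\sf Guard}_e(X)$, executability forces ${\sf Guard}_e(r_i)$; from the mode law $\nonexecutable\ {\sf hevent}(e)\ \iif\ \j{Mode}\ne v_1$ the source mode is $v_i$, and $\big({\sf hevent}(e)\ \causes\ \j{Mode}\mvis v_2\big)$ together with $s_{i+1}\models_{bg}\j{Mode}\mvis v_{i+1}$ identifies the edge as $e=(v_i,v_{i+1})\in E$; and the reset constraint gives ${\sf Reset}_e(r_i,r_{i+1})$. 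These are exactly conditions (1)--(3) for a discrete transition of $T_H$.

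In the continuous case $({\sf hevent}(v_i,v_{i+1}))^{a_i}=\false$ and $\sigma_i=(\j{Dur})^{a_i}=:\delta\in\mathcal{R}_{\ge 0}$. I would first show that \emph{no} h-event fires at this step: any executable h-event out of $v_i$ would, by $\big({\sf hevent}(e)\ \causes\ \j{Mode}\mvis v_2\big)$, drive $s_{i+1}$ into the corresponding target mode, making ${\sf hevent}(v_i,v_{i+1})$ true and contradicting the case hypothesis. Hence the default $\default\ \j{Wait}\mvis\true$ yields $(\j{Wait})^{a_i}=\true$, and by inertia of $\j{Mode}$ we get $v_{i+1}=v_i$. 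Now the flow constraint \eqref{r2} applies: for $\delta>0$ it gives ${\sf Flow}_{v_i}\big((r_{i+1}-r_i)/\delta\big)$, and for $\delta=0$ it forces $r_{i+1}=r_i$. I would then exhibit the witness $f(\epsilon)=r_i+\frac{r_{i+1}-r_i}{\delta}\epsilon$ (and $f\equiv r_i$ when $\delta=0$), observing $f(0)=r_i$, $f(\delta)=r_{i+1}$, and that $\dot f$ is the constant $(r_{i+1}-r_i)/\delta$, which is precisely the tuple substituted into ${\sf Flow}_{v_i}$; since for a linear hybrid automaton the free variables of ${\sf Flow}_{v_i}$ lie in $\dot X$ only, ${\sf Flow}_{v_i}(\dot f(\epsilon))$ holds for all $\epsilon\in(0,\delta)$.

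The crux is verifying ${\sf Inv}_{v_i}(f(\epsilon))$ for \emph{every} $\epsilon\in[0,\delta]$, not merely at the endpoints. Here I rely on ${\sf Inv}_{v_i}(f(0))$ and ${\sf Inv}_{v_i}(f(\delta))$ from part (a) above: because $f$ is linear, each $f(\epsilon)$ is a convex combination of $f(0)$ and $f(\delta)$, so the convexity assumption on the invariant region yields ${\sf Inv}_{v_i}(f(\epsilon))$ throughout --- this is exactly the implication \eqref{invariant-state}. That supplies condition (2) for a continuous transition and finishes the step, and chaining over $i$ finishes the lemma. I expect this invariant-along-the-path argument, together with the subsidiary point that the continuous case really does force $\j{Wait}$ true and the mode unchanged, to be the only non-mechanical parts; everything else is a direct reading of the constraints and caused atoms from the ${\cal C}$+ semantics.
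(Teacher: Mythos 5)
Your proposal is correct and follows essentially the same route as the paper's own proof: states are verified via the invariant constraint, the discrete case reads off the guard, mode, and reset laws, and the continuous case shows no h-event fires (hence $\j{Wait}$ true and the mode unchanged), extracts ${\sf Flow}_{v_i}\big((r_{i+1}-r_i)/\delta\big)$ (resp.\ $r_{i+1}=r_i$ when $\delta=0$) from the flow constraints, and builds the linear witness whose invariant condition along $[0,\delta]$ follows from convexity exactly as in \eqref{invariant-state}. The only cosmetic difference is that the paper justifies ``reading off the causal laws'' formally, by noting $(D_H)_1$ is tight so the stable models coincide with models of the completion (the formulas $\j{FLOW}$, $\j{INV}$, $\j{WAIT}$, $\j{GUARD}$, $\j{RESET}$, $\j{MODE}$, $\j{DURATION}$), a step you leave implicit, e.g.\ when inferring $(\j{Wait})^{a_i}=\true$ from the default law.
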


\subsection{Representing Non-Linear Hybrid Automata using Witness Function}\label{ssec:non-linear-witness}

Note that formula \eqref{invariant-state} is not necessarily true in general even when ${\sf Inv}_v(X)$ is a Boolean combination of linear (in)equalities (e.g., a disjunction over them may yield a non-convex invariant). 

Let us assume ${\sf Flow}_v(X,\dot{X})$ is the conjunction of formulas of the form $\dot{X}_i = g_i(X)$ for each $X_i$, where $g_i(X)$ is a Lipschitz continuous function whose variables are from $X$ only.\footnote{%
A function $f:\mathcal{R}^n\rar \mathcal{R}^n$ is called {\em Lipschitz continuous} if there exists $\lambda>0$ such that for all ${x,x'}\in \mathcal{R}^n$,
\[
   |f(x)-f(x')|\le \lambda|{x}-x'|.
\] 
}
In this case, it is known that the witness function $f$ exists and is unique.
This is a common assumption imposed on hybrid automata,

Even when the flow condition is non-linear, as long as we already know the unique witness function satisfies \eqref{invariant-state}, the invariant checking can still be done at each state only.
In this case, the representation in the previous section works with a minor modification. 
We modify the {\bf Flow} representation as 
%
\begin{itemize}
\item {\bf Flow}: For each $v\in V$ and $X_i\in X$,
\[
\ba l
      \constraint\ X_i = f_i(\delta)\ \after\ X=x \land \j{Mode}=v\land \j{Dur}=\delta \land \j{Wait}=\true\\
\ea
\] 
where $f_i : [0, \delta] \rar \mathcal{R}^n$ is the witness function for $X_i$ such that
(i) $f_i(0) = x_i$ and 
(ii) for all reals $\epsilon \in [0, \delta]$,  ${\sf Flow}_v(f(\epsilon),\dot{f(\epsilon)})$ is true, where $f = (f_1,\dots,f_n)$.
\end{itemize}

\begin{example}
Consider a hybrid automaton for the two bouncing balls with different elasticity. 
\begin{center}
  \includegraphics[height=3.4cm]{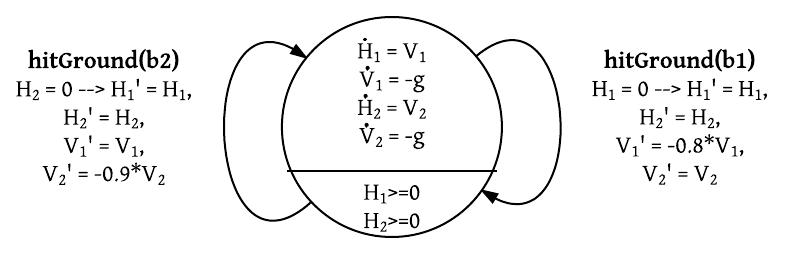}
\end{center}

The {\bf Flow} condition for Ball $b1$ is represented as 
\[
\ba l
   \constraint\ V_1 = v+(-g)\cdot\delta\ \after\ V_1=v\land  \j{Dur}=\delta\land \j{Wait}=\true \\ 
   \constraint\ H_1= h+v\cdot\delta - (0.5)\cdot g\cdot \delta\cdot \delta\ \after\ H_1=h\land\j{Dur}=\delta \land\j{Wait}=\true.
\ea
\]
The invariant ($H_1\ge 0, H_2\ge 0$) is trivial and satisfies equation~\eqref{invariant-state}. So, it is sufficient to check the invariant using \eqref{inv-linear} at each state only.
\end{example}

However, this method does not ensure that a (non-convex) invariant holds during continuous transitions. For example, consider the problem of a car navigating through the pillars as in Figure~\ref{fig:robot}, where the circles represent pillars that the car has to avoid collision with.
Checking the invariants at each discrete time point is not sufficient; it could generate an infeasible plan, such as (b), where the initial position $(0,0)$ and the next position $(13,0)$ satisfy the invariant $(x-9)^2 + y^2 > 9$, but some positions between them, such as $(8, 0)$, do not.  This is related to the challenge in integrating high-level task planning and low-level motion planning, where plans generated by task planners may often fail in motion planners.

The next section introduces new constructs in ${\cal C}$+ to address this issue. 

\begin{figure}[h!]  
  \includegraphics[height=4.2cm]{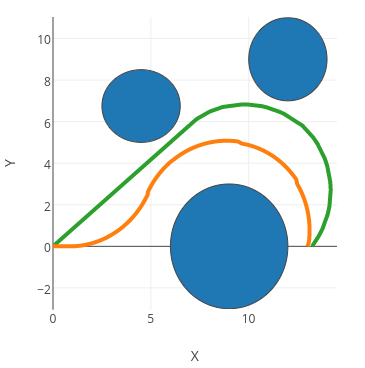} 
  \includegraphics[height=4.2cm]{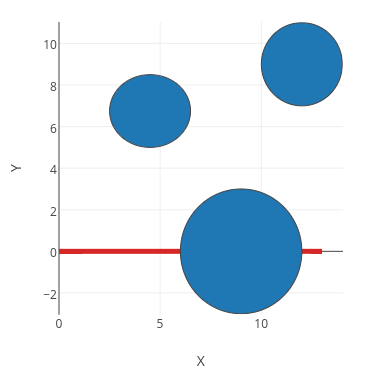}
  \caption{\hspace{0.5cm}(a) feasible plan   \hspace{1.7cm} (b) infeasible plan}
  \label{fig:robot}
  \end{figure}
 
\section{New Abbreviations of Causal Laws for Expressing Continuous Evolutions via ODEs} \label{sec:new}

In this section we introduce two new abbreviations of causal laws to express the continuous evolutions governed by ODEs. 

We assume the set $\sigma^\mi{fl}$ of fluent constants contains a set $\sigma^\mi{diff}$ of real valued fluent constants $X=(X_1,\dots, X_n)$ called {\sl differentiable} fluent constants, and an inertial fluent constant $\j{Mode}$, which ranges over a finite set of control modes. Intuitively, the values of differentiable fluent constants are governed by some ODEs controlled by each value of $\j{Mode}$. We also assume that $\j{Dur}$ is an exogenous action constant of sort ${\cal R}_{\ge 0}$.

Below are the two new abbreviations related to ODEs. 
First, a {\em rate declaration} is an expression of the form:
\beq
   \derivative\ \of\ X_i\ \iis\ F_i(X)\ {\bf if}\ \j{Mode}=v
\eeq{rate}
where $X_i$ is a differentiable fluent constant, $v$ is a control mode, 
and $F_i(X)$ is a fluent formula over $\sigma^{bg}\cup\sigma^\mi{diff}$.  
We assume that an action description has a unique rate declaration \eqref{rate} for each pair of $X_i$ and $v$.
So, by $d/dt[X_i](v)$ we denote the formula $F_i(X)$ in~\eqref{rate}.
The set of all rate declarations \eqref{rate} for each value $v$ of $\j{Mode}$ introduces the following causal law: 
\begin{align*} 
 \constraint\ & (X_1,\dots, X_n)=(x_1+y_1, \dots, x_n+y_n)\
\after\ (X_1,\dots, X_n)=(x_1, \dots, x_n) \\
&	  \land\ (y_1, \dots, y_n) = \int_0^\delta  (d/dt[X_1](v), \dots, d/dt[X_n](v)) dt \ \\
&	\land\ \j{Mode}=v\ \land\ \j{Dur}=\delta\ \land\  \j{Wait}\mvis\true  \taglabel{integral-cplus}
\end{align*}
where $x_1, \dots, x_n$ and $y_1, \dots, y_n$ are real variables. 

Second, an {\em invariant law} is an expression of the form
\beq
{\bf \alwayst}\ F(X)\ {\bf if}\  \j{Mode}=v 
\eeq{invariant}
where $F(X)$ is a fluent formula of signature $\sigma^\mi{diff}\cup\sigma^{bg}$. 

We expand each invariant law \eqref{invariant}  into 
\begin{align*}\taglabel{invariant-extended}
 &   \constraint\ \forall t \forall x \big( (0\le t\le \delta)\land \\
 &  \hspace{2cm} \big(x = \big((x_1,\dots, x_n)+\int_0^t (d/dt[X_1](v), \dots, d/dt[X_n](v)) dt\big) \rar F(x)\big) \big)\\
 & \hspace{2cm}\after\ (X_1, \dots, X_n)=(x_1,\dots, x_n) \land \j{Mode}=v\land \j{Dur}=\delta\land\j{Wait}=\true.
\end{align*}

Notice that the causal law uses the universal quantification to express that all values of $X$ during the continuous transition satisfy the formula $F(X)$. 

\section{Encoding Hybrid Transition Systems in ${\cal C}$+ Modulo ODE} \label{sec:non-linear}

\subsection{Representation} 

In this section, we represent the general class of hybrid automata, allowing non-linear hybrid automata with non-convex invariants, in the language of ${\cal C}$+ modulo ODE using the new abbreviations introduced in the previous section.
As before, we assume derivatives are Lipschitz continuous in order to ensure that the solutions to the ODEs are unique.

The translation consists of the same causal laws as those in Section \ref{sec:linear} except for those that account for continuous transitions. Each variable in hybrid automata is identified with a differentiable fluent constant. The representations of the flow and the invariant condition are modified as follows. 
\begin{itemize}

\item 
{\bf Flow}:  
We assume that flow conditions are  written as a set of $\dot{X_i} = F_i(X)$ for each $X_i$ in $\sigma^\mi{diff}$ where $F_i(X)$ is a formula whose free variables are from $X$ only, and assume there is only one such formula for each $X_i$ in each mode. 
For each $v\in V$ and each $X_i\in X$, $D_H$ includes a rate declaration \[
    \derivative\ \of\ X_i\ \iis\ F_i(X)\ \iif\ \j{Mode}=v
\]
which describes the flow of each differentiable fluent constant $X_i$ for the value of $\j{Mode}$.

\item  {\bf Invariant}: For each $v\in V$, $D_H$ includes an invariant law 
\[
\ba l 
     \constraint\ \j{Mode}\mvis v\rar {\sf Inv}_v(X)\\
   {\bf \alwayst}\ {\sf Inv}_v(X)\ \iif\ \j{Mode}=v
\ea
\]

The new {\bf always\_t}  law ensures the invariant is true even during the continuous transition. 
\end{itemize}

The above representation expresses that operative ODEs and invariants are completely determined by the current value of $\j{Mode}$. In turn, one can set the value of the mode by possibly complex conditions over fluents and actions. 

Theorem \ref{thm:ha2cplus} and Lemmas \ref{lem:ha2cplus}, \ref{lem:cplus2ha} remain true even when $H$ is a non-linear hybrid automaton allowing non-convex invariants if we use this version of $D_H$ instead of the previous one. 

\subsection{Turning in the Input Language of ${\tt dReal}$} 

Since the new causal laws are abbreviations of basic causal laws, the translation by \citeN{lee13answer} from a ${\cal C}$+ description into ASPMT  
and a further translation into SMT apply to the extension as well. On the other hand, system ${\tt dReal}$ \cite{gao13dreal}  has a non-standard ODE extension to SMT-LIB2 standard, which succinctly represents integral and universal quantification over time variables (using {\tt integral} and {\tt forall\_t} constructs). 
In its language, ${\tt t}$-variables (variables ending with ${\tt \_t}$) have a special meaning.  $c\_i\_{\tt t}$ is a $t$-variable between timepoint $i$ and $i+1$ that progresses in accordance with ODE specified by some flow condition and is universally quantified to assert that their values during each transition satisfy the invariant condition for that transition (c.f. \eqref{invariant-extended}).

To account for encoding the SMT formula $F$ obtained by the translation into  the input language of ${\tt dReal}$, by $dr(F)$ we denote the set of formulas obtained from $F$ by 
\begin{itemize}
\item  replacing every occurrence of $0\!:\!c$ in $F$ with $c{\tt \_0}$ if $c\in\sigma^\mi{diff}$;
\item  replacing every occurrence of $i\!:\!c$ in $F$ with $c\_(i-1){\tt \_t}$ if $c\in\sigma^\mi{diff}$ and $i>0$;
\item replacing every occurrence of $i\!:\!c$ in $F$ with $c\_i$ if $c\in\sigma$ and $c\notin\sigma^\mi{diff}$
\end{itemize}
for every $i\in\{0,\dots, m-1\}$. 

The translations of the causal laws other than \eqref{rate} and \eqref{invariant} into ASPMT and then into SMT follows the same one in the paper by~\citeN{lee13answer} except that we use $dr(F)$ in place of~$F$. Below we explain how the new causal laws are encoded in the language of ${\tt dReal}$. 

Let $\theta_v$ be the list $(d/dt[X_1](v), \dots, d/dt[X_n](v))$ for all differentiable fluent constants $X_1,\dots, X_n$ in $\sigma^\mi{diff}$. 
The set of rate declaration laws \eqref{rate} describes a unique complete set of ODEs $\theta_v$ for each value $v$ of $\j{Mode}$ and can be expressed in the language of ${\tt dReal}$ as 
\[
   (\texttt{define-ode } \texttt{flow\_}v\ 
       ((\texttt{= d/dt}[X_1]\ F_1),\dots, (\texttt{= d/dt}[X_n]\ F_n))).
\]

In the language of ${\tt dReal}$, the integral  construct 
\begin{lstlisting}
   (integral (0. $\delta$ [$X_1^0$, $\dots$, $X_n^0$] flow_$v$))
\end{lstlisting}
where $X_1^0$, $\dots$, $X_n^0$ are initial values of $X_1, \dots, X_n$, represents the list of values 
\[ 
   (X_1^0,\dots, X_n^0) +  \int_0^\delta (d/dt[X_1](v), \dots, d/dt[X_n](v))\ dt.
\]

Using the integral construct, causal law \eqref{integral-cplus} is turned into the input language of ${\tt dReal}$ as 
\bi\addtolength{\itemsep}{-0mm}
\item if $i=0$,

\begin{lstlisting}[breaklines]
(assert (=> (and ((= mode_0 $v$) (= wait_0 true)))
            (= [$X_1$_0_t, $\dots$, $X_n$_0_t] 
               (integral (0. dur_0 [$X_1$_0_0, $\dots$ , $X_n$_0_0] flow_$v$))))
\end{lstlisting}

\item if $i>1$,
\begin{lstlisting}[breaklines]
(assert (=> (and ((= mode_$i$_$v$) (= wait_$i$ true)))
     (= [$X_1$_$i$_t, $\dots$, $X_n$_$i$_t] 
        (integral (0. dur_$i$ [$X_1$_$(i\!-\!1)$_t, $\dots$ , $X_n$_$(i\!-\!1)$_t] flow_$v$))))
\end{lstlisting}
\ei

The causal law \eqref{invariant-extended}, which stands for invariant law \eqref{invariant}, can be succinctly represented in the language of ${\tt dReal}$ using {\tt forall\_t} construct as 
\begin{lstlisting}
     (assert (forall_t $v$ [0 dur_$i$] $dr(i:F)$)).
\end{lstlisting}

\subsection{Implementation and Example} 

\begin{figure}[h!]
  \includegraphics[height=3.5cm]{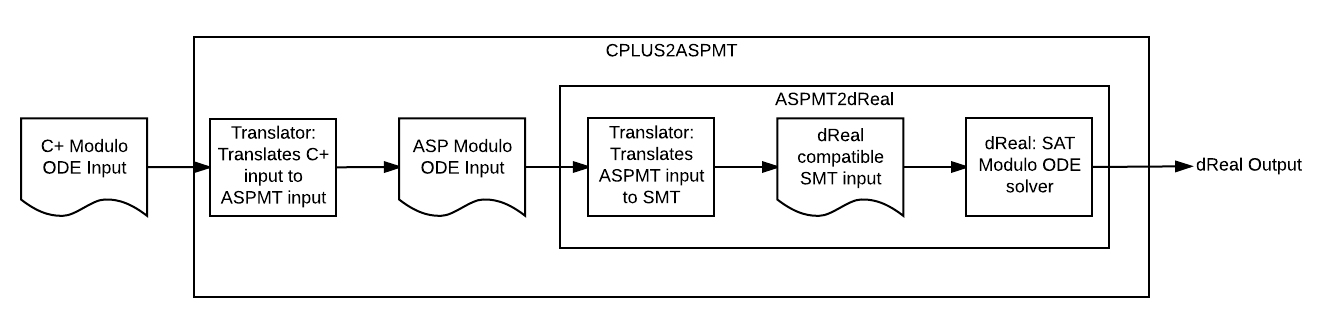}
  \caption{Architecture of system {\sc cplus2ASPMT}}
  \label{fig:cplus2aspmt-arch}
\end{figure}

We implemented a prototype system {\sc cplus2aspmt}, which allows us for representing hybrid transition systems in the action language $\cal{C}$+. 
The system supports an extension of $\cal{C}$+ by adding constructs for ODE support, then translating into an equivalent ASPMT program and finally translating it into the input language of ${\tt dReal}$. The architecture of the system is shown in Figure~\ref{fig:cplus2aspmt-arch}.
The system {\sc cplus2aspmt} is available at \url{http://reasoning.eas.asu.edu/cplus2aspmt}. 

\begin{example}

\begin{minipage}{.4\textwidth}
Let us revisit the car example introduced earlier. 
The car is initially at the origin where $x=0$ and $y=0$ and $\theta=0$. Additionally, there are pillars defined by the equations $(x-9)^2+y^2\le 9$, $(x-5)^2+(y-7)^2\le 4$, $(x-12)^2+(y-9)^2\le 4$. The goal is to find a plan such that the car ends up at $x=13$ and $y=0$ without hitting the pillars. The dynamics of the car is as described by~\citeN{corke11robotics}.
\end{minipage}
\begin{minipage}{.55\textwidth}
\begin{center}
  \includegraphics[height=7cm]{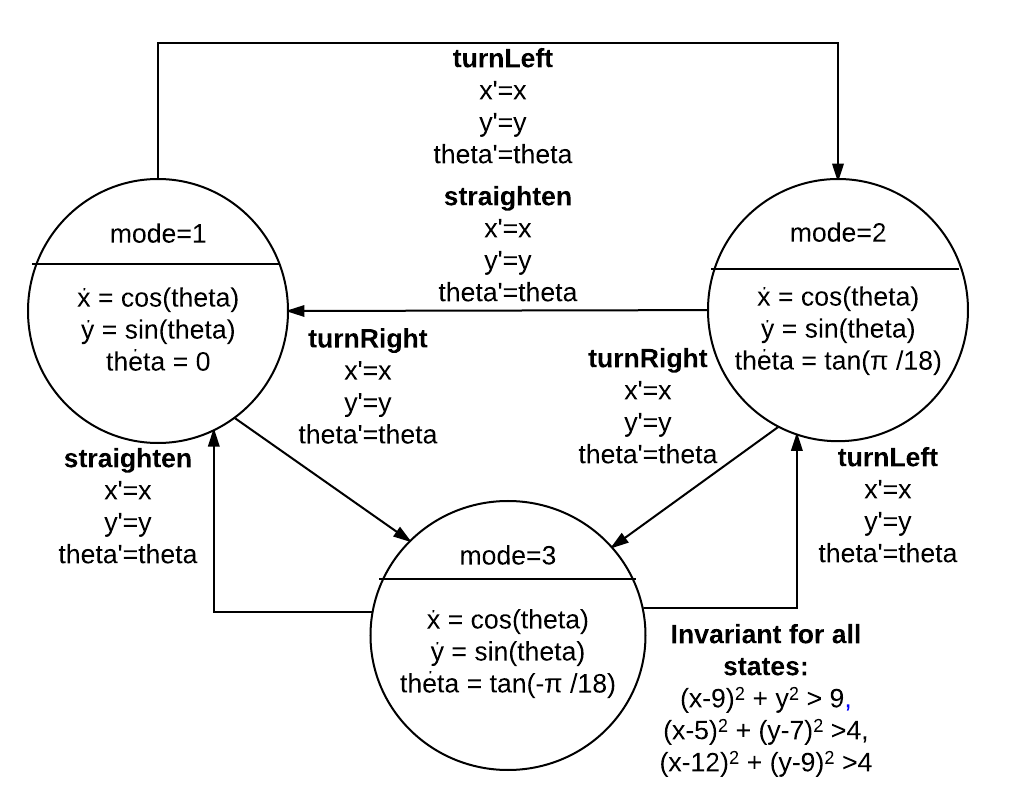}
\end{center}
\end{minipage}

We show some part of the hybrid automaton representation in the input language of {\sc cplus2aspmt}.\footnote{The complete formalization is given in Appendix~C \cite{lee17representing-online}.} First, fluent constants and action constants are declared as follows: 
\begin{lstlisting}[breaklines]
  :- constants
  x            :: differentiableFluent(real[0..40]);
  y            :: differentiableFluent(real[-50..50]);
  theta        :: differentiableFluent(real[-50..50]);
  straighten, turnLeft, turnRight   :: exogenousAction.
\end{lstlisting}
(In the ODE support mode, {\tt mode}, {\tt wait}, and {\tt duration} are implicitly declared by the system.)

The derivative of the differentiable fluent constants for mode=2 (movingLeft) is declared as follows:
\begin{lstlisting}[breaklines]
  derivative of x is cos(theta) if mode=2.
  derivative of y is sin(theta) if mode=2.    
  derivative of theta is tan(pi/18) if mode=2.
\end{lstlisting}

The invariants for avoiding the collision with the bottom pillar are represented as follows: 

\begin{lstlisting}[breaklines]
  constraint x=X & y=Y ->> ((X-9)*(X-9) + Y*Y > 9).
  always_t (x=X & y=Y ->> ((X-9)*(X-9) + Y*Y > 9)) if mode=V.
\end{lstlisting} 

The precondition and effects of {\tt turnLeft} action are represented as follows:
\begin{lstlisting}[breaklines]
  nonexecutable turnLeft if mode=2.
  turnLeft causes mode=2.
  turnLeft causes dur=0
\end{lstlisting} 
\end{example}

Figure~\ref{fig:car-turn-5-3} (a) illustrates the trajectory returned by the system when we instruct it to find a plan of length $5$ to reach the goal position. For the path of length $3$, the system returned the trajectory in Figure~\ref{fig:car-turn-5-3}~(b). 
The system could not find a plan of length $1$ because of the $\alwayst$ proposition asserting the invariant during the continuous transition. If we remove the proposition, the system returns the 
physically unrealizable plan in Figure~\ref{fig:robot}~(b).

\begin{figure}[h!]
  \includegraphics[height=1.8cm, width=13cm]{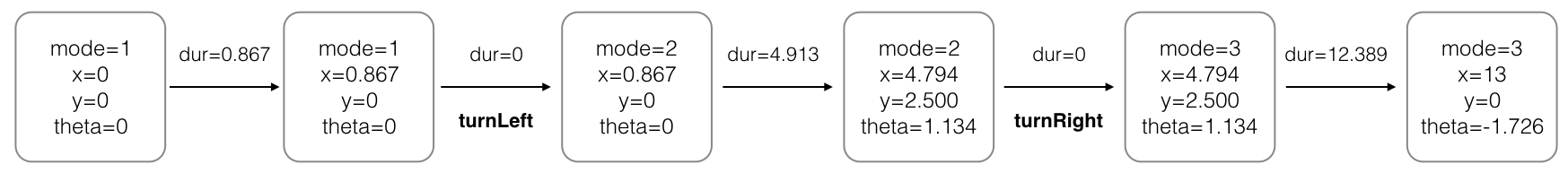}
  \includegraphics[height=2cm]{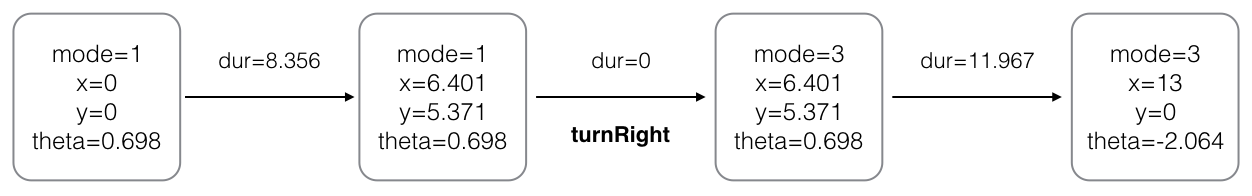}
  \caption{Output of Car Example \hspace{1cm} (a) top: maxstep=5\hspace{1cm} (b) bottom: maxstep=3}
  \label{fig:car-turn-5-3}
\end{figure}

\section{Related Work}

Due to space restriction, we list only some of the related work. 
PDDL+ \cite{fox06modelling} is a planning description language to model mixed discrete and continuous changes. The semantics is defined by mapping primitives of PDDL+ to hybrid automata. Most PDDL+ planners assume that the continuous change is linear, while a recent paper by \citeN{bryce15smt}, closely related to our work, presents an SMT encoding of a PDDL+ description that is able to perform reasoning about non-linear hybrid automata. However, no dedicated translator from PDDL+ to SMT is provided. The fact that both PDDL+ and ${\cal C}$+ can be turned into SMT may tell us how the two high-level languages are related to each other, which we leave for future work.  In the paper by~\citeN{bryce15smt}, the encoding was in the language of ${\tt dReach}$ with the emphasis on extending ${\tt dReach}$ with planning-specific heuristics to find a valid and possibly optimized mode path. The heuristic search has not been considered in the work of {\sc cplus2aspmt}, which makes the system less scalable (see Appendix~D \cite{lee17representing-online} for some experimental result).

SMT solvers have been actively used in formal verification of hybrid systems (e.g., the papers by \citeN{cimatti12smt}; and by \citeN{alur11formal}), but mostly focused on linear differential equations. ${\tt dReal}$ is an exception.

Instead of SMT solvers, constraint ASP solvers may also be used for  hybrid automata reasoning. \citeN{balduccini16pddl+} shows PDDL+ primitives can be encoded in the language of constraint ASP solvers, and compared its performance with other PDDL+ computing approaches including ${\tt dReal}$. On the other hand, unlike our work, the encoding checks continuous invariants at discretized timepoints and no proof of the soundness of the translation is given. Constraint ASP solvers do not support $\delta$-satisfiability checking. Thus, the general method of invariant checking during continuous transitions as in ${\tt dReal}$ is not yet available there. 

Action language ${\cal H}$~\cite{chin04,chintabathina12anew} is another action language that can model hybrid transitions, but its semantics does not describe the hybrid transition systems of the same kind as hybrid automata.  
Instead of using SMT solvers, an implementation of $\cal{H}$ is by a translation into the language $\cal{AC}$ \cite{mellarkod08integrating}, which extends ASP with constraints. 
Language $\cal{H}$ does not provide support for continuous evolution via ODEs and invariant checking during the continuous transition. 

ASPMT is also related to HEX programs, which are an extension of answer set programs with external computation sources.  HEX programs with numerical external computation have been used for hybrid reasoning in games and robotics \cite{calimeri16angry,erdem16systematic}.

\section{Conclusion}

We represented hybrid automata in action language modulo theories. As our action language is based on ASPMT, which in turn is founded on the basis of ASP and SMT, it enjoys the development in SMT solving techniques as well as the expressivity of ASP language.  We presented an action language modulo ODE, which lifts the concept of SMT modulo ODE to the action language level.

One strong assumption we imposed is that an action description has to specify {\sl complete} ODEs. This is because existing SMT solving techniques are not yet mature enough to handle composition of partial ODEs.  In the paper by \citeN{gao13dreal}, such extension is left for the future work using new commands {\tt pintegral} and {\tt connect}. We expect that it is possible to extend the action language to express partial ODEs in accordance with this extension.

In our representation of hybrid automata in action language ${\cal C}$+, we use only a fragment of the action language, which does not use other features, such as additive fluents, statically determined fluents, action attributes,  defeasible causal laws. One may write a more elaboration tolerant high-level action description for hybrid domains using these features. 
 
SMT solvers are becoming a key enabling technology in formal verification in hybrid systems. Nonetheless, modeling in the low-level language of SMT is non-trivial. We expect the high-level action languages may facilitate encoding efforts.

\medskip\noindent
{\bf Acknowledgements:} 
We are grateful to  Sicun Gao and Soonho Kong for their help on running ${\tt dReal}$ and ${\tt dReach}$ systems, to Daniel Bryce for his help with benchmark problems, and the anonymous referees for their useful comments. This work was partially supported by the National Science Foundation under Grants IIS-1319794 and IIS-1526301.

\bibliographystyle{acmtrans}

\BOCC
\setcounter{page}{1}
\title{Appendix: Representing Hybrid Automata by Action Language Modulo Theories} 

\begin{center}
{\large\textnormal{Online appendix for the paper}}   \\
\medskip
{\Large {\sl Representing Hybrid Automata \\ by Action Language Modulo Theories}
\\
\medskip
{\large\textnormal{published in Theory and Practice of Logic Programming}}
}

\medskip
Joohyung Lee, Nikhil Loney\\ 
{\sl School of Computing, Informatics and Decision Systems Engineering \\
Arizona State University, Tempe, AZ, USA}

\medskip
Yunsong Meng\\
{\sl Houzz, Inc. \\
Palo Alto, CA, USA }

\end{center}

\thispagestyle{empty}
\EOCC
\begin{appendix}

\BOCCC
\section{Review: SMT and ASPMT} 

Formally, an SMT instance is a formula in many-sorted first-order
logic, where some designated function and predicate constants are
constrained by some fixed background interpretation. SMT is the
problem of determining whether such a formula has a model that expands
the background interpretation~\cite{barrett09satisfiability}. Some of the background theories relevant to this paper are as follows:

\begin{itemize}
\item QF\_LRA (Quantifier free linear real arithmetic): It includes the following binary function constants that represent arithmetic functions: $+$, $-$, $\times$ and $/$ and the following binary predicates that represent comparison operators: $\ge$, $\le$, $<$ and $>$. In essence, Boolean combinations of inequations between linear polynomials over real variables. 
\item QF\_NRA (Quantifier free non-linear real arithmetic):  It includes the following binary function constants that represent arithmetic functions: $+$, $-$, $*$ and $/$ and the following binary predicates that represent comparison operators: $\ge$, $\le$, $<$ and $>$. It is similar to QF\_LRA with the difference that it is additionally capable of handling non-linear polynomials, trigonometric functions($sin$, $cos$, $tan$) and other non-linear theories.
\item QF\_NRA\_ODE (Quantifier free non-linear real arithmetic for ODEs):  This is a special background theory for dealing with ODEs. In addition to QF\_NRA, QF\_NRA\_ODE adds support for functions like integral and $\forall^t$.
\end{itemize}

{\cblu [[ for all t ]]}

The syntax of ASPMT is the same as that of SMT. Let $\sigma^{bg}$ be
the (many-sorted) signature of the background theory~$bg$. An
interpretation of $\sigma^{bg}$ is called a {\em background
  interpretation} if it satisfies the background theory. For instance,
in the theory of reals, we assume that $\sigma^{bg}$ contains the set
$\mathcal{R}$ of symbols for all real numbers, the set of arithmetic
functions over real numbers, and the set $\{<, >, \le, \ge\}$ of
binary predicates over real numbers. Background interpretations
interpret these symbols in the standard way.

Let $\sigma$ be a signature that is disjoint from $\sigma^{bg}$.
We say that an interpretation $I$ of $\sigma$ satisfies $F$
w.r.t. the background theory $bg$, denoted by $I\models_{bg} F$,
if there is a background interpretation $J$ of $\sigma^{bg}$ that has
the same universe as $I$, and $I\cup J$ satisfies $F$.
For any ASPMT sentence $F$ with background theory
$\sigma^{bg}$, interpretation $I$ is a stable model of $F$ relative
to~${\bf c}$ (w.r.t. background theory $\sigma^{bg}$) if
$I\models_{bg} \sm[F; \bC]$ (we refer to \cite{bartholomew13functional} for the definition of the $\sm$ operator).

\EOCCC

\section{Review: ${\cal C}$+} \label{sec:cplus}

\subsection{Syntax of $\cal C$+}\label{ssec:cplus-syntax} 

${\cal C}$+ was originally defined as a propositional language \cite{giu04}. 
In this section we review its reformulation in terms of ASPMT \cite{lee13answer}.


We consider a many-sorted first-order signature $\sigma$ that is partitioned into three sub-signatures: the set $\sigma^\mi{fl}$ of object constants called {\em fluent constants}, the set $\sigma^\mi{act}$ of object constants called {\em action constants}, and the background signature~$\sigma^{bg}$. The signature $\sigma^\mi{fl}$ is further partitioned into the set $\sigma^{sim}$ of {\em simple} fluent constants and the set $\sigma^{sd}$ of {\em statically determined} fluent constants.

A {\em fluent formula} is a formula of signature~$\sigma^\mi{fl}\cup\sigma^{bg}$. An {\em action formula} is a formula of~$\sigma^\mi{act}\cup\sigma^{bg}$ that contains at least one action constant and no fluent constants.

A {\em static law} is an expression of the form
\beq
  \caused\ F\ \iif\ G
\eeq{static}
where $F$ and $G$ are fluent formulas.  

An {\sl action dynamic law} is an expression of the form~(\ref{static}) in which $F$ is an action formula and $G$ is a formula.

A {\sl fluent dynamic law} is an expression of the form
\beq
 \caused\ F\ \iif\ G\ \after\ H
\eeq{dynamic}
where~$F$ and~$G$ are fluent formulas and $H$ is a formula, provided that~$F$ does not contain statically determined constants. 


A {\sl causal law} is a static law, an action dynamic law, or a fluent dynamic law.
An {\sl action description} is a finite set of causal laws.

The formula~$F$ in causal laws~(\ref{static}) and~(\ref{dynamic}) is called the {\sl head}.  

We call an action description {\em definite} if the head $F$ of every
causal law \eqref{static} and \eqref{dynamic} is an atomic formula
that is $(\sigma^{fl}\cup\sigma^{act})$-plain. 
\footnote{
For any function constant $f$, we say that a first-order formula is
{\em $f$-plain} if each atomic formula in it
\begin{itemize}\addtolength{\itemsep}{-0.5mm}
\item  does not contain $f$, or
\item  is of the form $f({\bf t}) = t_1$ where ${\bf t}$ is a list of
  terms not containing $f$, and $t_1$ is a term not containing~$f$.
\end{itemize}
For any list $\bC$ of predicate and function constants, we say that $F$
is $\bC$-plain if $F$ is $f$-plain for each function constant $f$ in
$\bC$.
}

\subsection{Semantics  of $\cal C$+}\label{ssec:cplus-semantics}

For a signature $\sigma$ and a nonnegative integer $i$, expression
\hbox{$i:\sigma$} is the signature consisting of the pairs $i:c$ such
that \hbox{$c\in \sigma$}, and the value sort of $i:c$ is the same as
the value sort of $c$. Similarly, if $s$ is an interpretation of
$\sigma$, expression $i:s$ is an interpretation of $i:\sigma$ such
that $c^s = (i:c)^{i:s}$.

For any action description $D$ of signature
\hbox{$\sigma^\mi{fl}\cup\sigma^\mi{act}\cup\sigma^{bg}$} and any nonnegative
integer $m$, the ASPMT program $D_m$ is defined as follows. The
signature of $D_m$ is 
$0\!:\!\sigma^\mi{fl}\cup\dots\cup
m\!:\!\sigma^\mi{fl}\cup0\!:\!\sigma^\mi{act}\cup\dots\cup
(m\!-\!1)\!:\!\sigma^\mi{act}\cup\sigma^{bg}$.
By $i:F$ we denote the result of inserting $i:$ in front of every
occurrence of every fluent and action constant in a formula~$F$.

ASPMT program $D_m$ is the conjunction of
\[
  i\!:\!G \rar i\!:\!F
\]
for every static law \eqref{static} in $D$ and every
$i\in\{0,\dots,m\}$, and for every action dynamic law \eqref{static}
in $D$ and every \hbox{$i\in\{0,\dots,m\!-\!1\}$};
\[
  (i\!+\!1)\!:\! G \land i\!:\!H \rar (i\!+\!1)\!:\!F
\]
for every fluent dynamic law \eqref{dynamic} in $D$ and every
$i\in\{0,\dots, m-1\}$.

The transition system represented by an action description~$D$
consists of states (vertices) and transitions (edges).
A {\em state} is an interpretation $s$ of $\sigma^\mi{fl}$ such that
\hbox{$0\!:\!s\models_{bg} \sm[D_0;\ 0\!\!:\!\!\sigma^{sd}]$}.
A {\em transition} is a triple $\langle s,e,s'\rangle$, where $s$ and
$s'$ are interpretations of $\sigma^\mi{fl}$ and $e$ is an interpretation
of $\sigma^\mi{act}$, such that
\[
\ba l
  (0\!:\! s)\cup (0\!:\!e) \cup (1\!:\!s')\models_{bg} \sm[D_1;\ (0\!:\!\sigma^{sd})\cup (0\!:\!\sigma^\mi{act})\cup
  (1\!:\!\sigma^\mi{fl})]\ . 
\ea
\]

The definition of the transition system above implicitly relies on the
following property of transitions:

\begin{thm}\label{thm:state}\optional{thm:state}\cite[Theorem~3]{lee13answer}
For every transition $\langle s,e,s'\rangle$, $s$ and $s'$ are states.
\end{thm}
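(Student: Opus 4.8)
The plan is to reduce the statement to two separate claims --- that $s$ is a state and that $s'$ is a state --- and to prove each by decomposing the one-step program $D_1$ with the splitting (module) theorem for the functional stable model semantics of \citeN{ferraris11stable} and \citeN{bartholomew13functional}. Recall that $D_1$ is the conjunction of (i) $0\!:\!G\rar 0\!:\!F$ and $1\!:\!G\rar 1\!:\!F$ for the static laws, (ii) $0\!:\!G\rar 0\!:\!F$ for the action dynamic laws, and (iii) $(1\!:\!G)\wedge(0\!:\!H)\rar 1\!:\!F$ for the fluent dynamic laws, and that the intensional constants are $\bC=(0\!:\!\sigma^{sd})\cup(0\!:\!\sigma^\mi{act})\cup(1\!:\!\sigma^\mi{fl})$. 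First I would split $D_1$ along the temporal boundary between step $0$ and step $1$. The hypothesis to verify is that no head of a step-$0$ rule contains a step-$1$ constant (the bodies of the step-$0$ static and action laws mention only step-$0$ constants), whereas the step-$1$ rules mention step-$0$ constants only in their bodies; hence every cross-step edge of the dependency graph points from a step-$1$ constant to a step-$0$ constant, no loop crosses the boundary, and the theorem yields
\[
  \sm[D_1;\bC]\ \equiv\ \sm[D_{\le 0};\,(0\!:\!\sigma^{sd})\cup(0\!:\!\sigma^\mi{act})]\ \wedge\ \sm[D_{=1};\,1\!:\!\sigma^\mi{fl}],
\]
where $D_{\le 0}$ collects the step-$0$ static and action laws and $D_{=1}$ collects the step-$1$ static laws together with the fluent dynamic laws.

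For the claim that $s$ is a state, I would split $D_{\le 0}$ once more, separating action constants from fluent constants. Since action constants occur in $D_{\le 0}$ only as heads of the action dynamic laws while the static laws contain no action constants, the dependency graph again has no crossing loop, and $\sm[D_{\le 0};\dots]$ factors as $\sm[0\!:\!(\text{static laws});\,0\!:\!\sigma^{sd}]\wedge\sm[0\!:\!(\text{action laws});\,0\!:\!\sigma^\mi{act}]$. Because $D_0$ consists exactly of the step-$0$ static laws, the first factor is $\sm[D_0;\,0\!:\!\sigma^{sd}]$; as $(0\!:\!s)\cup(0\!:\!e)$ satisfies $\sm[D_{\le 0};\dots]$, it satisfies this factor, so $0\!:\!s\models_{bg}\sm[D_0;\,0\!:\!\sigma^{sd}]$ and $s$ is a state.

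The harder claim is that $s'$ is a state. Here the difficulty is an asymmetry: in the factor $\sm[D_{=1};\,1\!:\!\sigma^\mi{fl}]$ \emph{all} of $1\!:\!\sigma^\mi{fl}$, including the simple fluents $1\!:\!\sigma^{sim}$, is intensional, and simple and statically determined fluents may occur mutually in one another's bodies, so $1\!:\!\sigma^{sim}$ cannot be split off from $1\!:\!\sigma^{sd}$; by contrast the state condition $\sm[D_0;\,0\!:\!\sigma^{sd}]$ (after the innocuous relabeling $0\!\leftrightarrow\!1$, under which $1\!:\!s'$ is just $s'$) treats the simple fluents as \emph{exogenous}. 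The key to bridging the two is the syntactic restriction that the heads of fluent dynamic laws omit statically determined constants. This guarantees that at step $1$ the constants in $1\!:\!\sigma^{sd}$ appear as heads \emph{only} of the step-$1$ static laws, never of the fluent dynamic laws, so the grounding of $1\!:\!\sigma^{sd}$ is supplied exclusively by the step-$1$ static laws. I would then argue in two parts: (a) the model part --- $(0\!:\!s)\cup(0\!:\!e)\cup(1\!:\!s')$ satisfies $D_{=1}$, hence the step-$1$ static laws, which after relabeling is the outer conjunct of $\sm[D_0;\,0\!:\!\sigma^{sd}]$; and (b) the minimality part --- the stability of $(1\!:\!s')$ against all smaller candidates over $1\!:\!\sigma^{sim}\cup 1\!:\!\sigma^{sd}$ entails, as a special case, stability against smaller candidates over $1\!:\!\sigma^{sd}$ alone with $1\!:\!\sigma^{sim}$ held fixed, and because $1\!:\!\sigma^{sd}$ is headed only by static laws the relevant part of the reduct coincides with that of $\sm[1\!:\!(\text{static laws});\,1\!:\!\sigma^{sd}]$. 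Relabeling, this is exactly the minimality demanded by $\sm[D_0;\,0\!:\!\sigma^{sd}]$, so $0\!:\!s'\models_{bg}\sm[D_0;\,0\!:\!\sigma^{sd}]$ and $s'$ is a state.

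I expect part (b) to be the main obstacle: making precise, at the level of the functional $\sm$ operator (where ``smaller'' is the order on intensional functions of \citeN{bartholomew13functional}), that the transition's minimality over the combined intensional set projects onto the state's minimality over $1\!:\!\sigma^{sd}$ with the simple fluents frozen, and in particular that the fluent dynamic laws --- whose heads are simple fluents --- neither contribute support to nor obstruct the grounding of the statically determined fluents. This is precisely the point where the head restriction on fluent dynamic laws is indispensable, and it is the linchpin of the argument; everything else is routine relabeling and repeated application of the splitting theorem.
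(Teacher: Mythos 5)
There is nothing in this paper to compare your argument against: the statement is not proved here. It is quoted, with citation, as Theorem~3 of \citeN{lee13answer} in the review of ${\cal C}$+ in the appendix, and the proof appendix of this paper establishes only Theorem~\ref{thm:ha2cplus} and Lemmas~\ref{lem:ha2cplus} and~\ref{lem:cplus2ha}. So your proposal can only be assessed on its own merits, against the machinery it invokes.

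On those merits the plan is sound, and both splitting steps check out exactly as you claim: $D_{\le 0}$ mentions no step-$1$ constants and the step-$0$ constants occur in $D_{=1}$ only in antecedents (the $0\!:\!H$ parts of fluent dynamic laws), so no strongly connected component crosses the temporal boundary; likewise action dynamic laws have action-formula heads containing no fluent constants, which validates the second split and the conclusion that $s$ is a state. Your part~(b) is indeed the crux, and your phrase ``as a special case'' hides the real issue: the candidates over $1\!:\!\sigma^{sd}$ with $1\!:\!\sigma^{sim}$ frozen do embed into the candidates over all of $1\!:\!\sigma^\mi{fl}$, but the reducts relative to the two intensional sets are \emph{different formulas}, so the projection is not a literal specialization. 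The bridge is a substitution identity: plugging the original values of ${\bf d}$ back into the reduct of $F$ relative to $\bC\cup{\bf d}$ yields precisely the reduct of $F$ relative to $\bC$; hence any counterexample to minimality over $\bC$ with ${\bf d}$ frozen lifts to a counterexample over $\bC\cup{\bf d}$, giving that every model of $\sm[F;\bC\cup{\bf d}]$ is a model of $\sm[F;\bC]$. (Equivalently, one can appeal to the known result that making constants intensional while adding choice formulas for them is the same as leaving them exogenous.) Once this projection lemma is in place, eliminating the fluent dynamic laws is just the splitting theorem again with an empty second list: they are negative on $1\!:\!\sigma^{sd}$ precisely because of the head restriction you single out, so $\sm[D_{=1};1\!:\!\sigma^{sd}]$ is equivalent to $\sm[1\!:\!(\text{static laws});1\!:\!\sigma^{sd}]$ conjoined with those laws, and relabeling finishes the proof. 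Two caveats remain: the projection lemma must be stated and proved explicitly, since it is the one step that is not an instance of any theorem you cite; and the splitting and constraint theorems you use must be the versions for the many-sorted \emph{functional} stable model semantics with a background theory, not the purely predicate setting of \citeN{ferraris11stable}.
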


The following theorem states the correspondence between the stable
models of $D_m$ and the paths in the transition system represented by
$D$:

\begin{thm}\label{thm:transition}\optional{thm:transition}\cite[Theorem~4]{lee13answer}
\[
\ba l
(0\!:\!s_0) \cup (0\!:\!e_0) \cup (1\!:\!s_1) \cup (1\!:\!e_1) \cup
        \cdots \cup (m\!:\!s_m) \\
\hspace{0.2cm}\models_{bg}
\sm[D_m;\ (0\!:\!\sigma^{sd})\cup
         (0\!:\!\sigma^\mi{act})\cup
         (1\!:\!\sigma^\mi{fl})\cup
         (1\!:\!\sigma^\mi{act})  
 \cup\dots\cup
        (m\!-\!1\!:\!\sigma^\mi{act})\cup
        (m\!:\!\sigma^\mi{fl})]
\ea
\]
iff each triple $\langle s_i,e_i,s_{i+1}\rangle$ $(0\le i<m)$ is a
transition.
\end{thm}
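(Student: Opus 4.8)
The plan is to prove both directions of the equivalence at once by decomposing the single global stable-model condition on $D_m$ into one local condition per timestep, using a splitting (modularity) property of the stable-model operator $\sm$ for the functional stable model semantics. The starting observation is that $D_m$ is a conjunction of time-indexed formulas that partitions according to the timestep of each formula's head: for $0\le i\le m$ let $M_i$ be the set consisting of the static laws instantiated at step $i$, the action dynamic laws at step $i$, and the fluent dynamic laws instantiated from step $i-1$ to step $i$; and let the corresponding block of intensional constants be $\mathbf{c}_0=0\!:\!\sigma^{sd}\cup 0\!:\!\sigma^\mi{act}$, $\mathbf{c}_i=i\!:\!\sigma^\mi{fl}\cup i\!:\!\sigma^\mi{act}$ for $0<i<m$, and $\mathbf{c}_m=m\!:\!\sigma^\mi{fl}$. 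By construction the head of every formula in $M_i$ lies in $\mathbf{c}_i$, and $\bigcup_i\mathbf{c}_i$ is exactly the intensional list appearing in the theorem.

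First I would check that the partition satisfies the hypotheses of the splitting theorem, relying on the time-layered shape of the causal laws. A static law or an action dynamic law at step $i$ has a body mentioning only step-$i$ constants, while a fluent dynamic law from $i-1$ to $i$ has a head in $\mathbf{c}_i$ and a body split between steps $i$ and $i-1$. Hence every edge of the dependency graph on intensional constants either stays inside a single block $\mathbf{c}_i$ or runs strictly backward in time from $\mathbf{c}_i$ to $\mathbf{c}_{i-1}$; no strongly connected component crosses a block boundary. The splitting theorem then gives
\[
   \sm[D_m;\ \bigcup_{i=0}^{m}\mathbf{c}_i]\ \equiv\ \bigwedge_{i=0}^{m}\sm[M_i;\ \mathbf{c}_i].
\]

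Next I would relate this per-step factorization to the per-transition claim. Shifting the single-step definition to start at step $i$, the condition that $\langle s_i,e_i,s_{i+1}\rangle$ be a transition is $\sm[D_1;\,0\!:\!\sigma^{sd}\cup 0\!:\!\sigma^\mi{act}\cup 1\!:\!\sigma^\mi{fl}]$ shifted by $i$, which splits by the same theorem into a ``source'' part built from the step-$i$ static and action laws (defining $i\!:\!\sigma^{sd}$ and $i\!:\!\sigma^\mi{act}$) and a ``target'' part built from the fluent dynamic laws $i\to i+1$ together with the static laws at step $i+1$ (defining $(i+1)\!:\!\sigma^\mi{fl}$). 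Reading off heads, the source part of transition $i$ together with the target part of transition $i-1$ reconstitutes exactly $M_i$; the only redundancy is that the static laws at each interior step $i$ occur in both transition $i-1$ and transition $i$, so the chain of transition conditions invokes those static laws twice whereas $M_i$ invokes them once.

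The main obstacle is to reconcile this duplication, and this is where Theorem~\ref{thm:state} is essential: I must show that making $i\!:\!\sigma^{sd}$ intensional once inside $M_i$ is equivalent to determining it twice, once as the statically-determined part of the target state of transition $i-1$ and again as that of the source state of transition $i$. Theorem~\ref{thm:state} guarantees that the shared interior interpretation $s_i$ is a genuine state whether it is read as the target of transition $i-1$ or the source of transition $i$; in either case its statically-determined fluents are fixed from the rest of the step-$i$ state by the very same static laws, so the two determinations compute identical values and the duplicate copies collapse. Using this to identify $\sm[M_i;\mathbf{c}_i]$ together with the adjacent fluent-defining material with the corresponding transition conditions, the conjunction $\bigwedge_i\sm[M_i;\mathbf{c}_i]$ becomes equivalent to ``$\langle s_i,e_i,s_{i+1}\rangle$ is a transition for every $i$,'' which with the displayed splitting identity yields the theorem. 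The base case $m=1$ is immediate, since then there is no interior step, $\bigcup_i\mathbf{c}_i$ coincides with the single transition's intensional list, and the statement reduces to the definition of a transition.
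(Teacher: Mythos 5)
First, a point of orientation: this paper never proves Theorem~\ref{thm:transition} at all. It appears in the review appendix as imported background, cited as Theorem~4 of \citeN{lee13answer}, and the paper's proof appendix establishes only Theorem~\ref{thm:ha2cplus} and Lemmas~\ref{lem:ha2cplus} and~\ref{lem:cplus2ha}. So there is no in-paper proof to compare your attempt against; I can only judge it on its own merits and relative to what any proof of this statement must contain.

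On those merits, your architecture is the right one, and your dependency-graph and negativity checks are essentially correct (one slip: at step $0$ the heads of static laws may mention simple fluent constants, which are \emph{not} in $\mathbf{c}_0$; the condition that actually matters is that strictly positive occurrences of \emph{intensional} constants stay inside the block, the simple fluents at step $0$ being extensional). But there are two genuine gaps. First, the entire proof rests on ``a splitting (modularity) property of the stable-model operator for the functional stable model semantics,'' invoked without statement or citation. That is not a small omission: the standard splitting theorem of Ferraris, Lee, Lifschitz and Palla is for intensional \emph{predicates}, whereas here the intensional constants are \emph{function} constants in the sense of \citeN{bartholomew13functional}, and the functional analogue is precisely where the technical content of this theorem lives. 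Second, your reconciliation of the duplicated interior static laws (``the two determinations compute identical values and the duplicate copies collapse'') is not an argument, and it hides an asymmetry. After refining both factorizations by one more application of splitting (separating the action-dynamic laws from the fluent-defining laws at each step), every conjunct of the path-side factorization $\bigwedge_i \sm[M_i;\mathbf{c}_i]$ occurs literally among the conjuncts of the transition-side factorization, and the surplus on the transition side is exactly $\sm[\hbox{static laws at } i;\ i\!:\!\sigma^{sd}]$, i.e.\ the condition ``$s_i$ is a state,'' for each $0<i<m$. Consequently the direction ``all triples are transitions $\Rightarrow$ stable model'' is trivial containment and needs neither Theorem~\ref{thm:state} nor any collapse; only the converse is nontrivial, and there the surplus conjuncts must be recovered by a \emph{forward induction} on $i$: the path-side conjuncts already contain the full transition condition at step $0$, Theorem~\ref{thm:state} then yields that $s_1$ is a state, which supplies the missing half of the source condition at step $1$, and so on along the path. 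Your sketch never sets up this induction, and without it the appeal to Theorem~\ref{thm:state} is circular: you would be reading $s_i$ as the target of transition $i-1$ before having established that transition $i-1$ satisfies the transition condition.
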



It is known that when $D$ is definite, ASPMT program $D_m$ that is
obtained from action description $D$ is always tight. Functional completion~\cite{bartholomew13functional} on ASPMT can be applied to turn $D_m$ into an SMT instance.

\subsection{Some Useful Abbreviations of ${\cal C}$+ Causal Laws}

This section explains the abbreviations of ${\cal C}$+ causal laws used in the paper.
%

\medskip\noindent{\bf 1.}
A static law of the form
$${\bf caused}\ \bot\ {\bf if}\ \neg F$$
can be written as
$${\bf constraint}\ F.$$

\medskip\noindent{\bf 2.}
A fluent dynamic law of the form
$${\bf caused}\ \bot\ {\bf if}\ \neg F\ {\bf after}\ G$$
can be written as
$${\bf constraint}\ F\ {\bf after}\ G.$$



\medskip\noindent{\bf 3.}
A fluent dynamic law of the form
$${\bf caused}\ \bot\ {\bf after}\ F\wedge G$$
where $F$ is an action formula can be written as
\beq
{\bf nonexecutable}\ F\ {\bf if}\ G.
\eeq{a8}

\medskip\noindent{\bf 4.}
An expression of the form
\beq
F\ {\bf causes}\ G\ {\bf if}\ H
\eeq{a9}
where $F$ is an action formula stands for the fluent dynamic law
$${\bf caused}\ G\ {\bf after}\ F\wedge H$$
if $G$ is a fluent formula,\footnote{It is clear that the expression in the
previous line is a fluent dynamic law only when~$G$ does not contain
statically determined fluent constants.  Similar remarks
can be made in connection with many of the abbreviations introduced below.}
and for the action dynamic law
$${\bf caused}\ G\ {\bf if}\ F\wedge H$$
if $G$ is an action formula.

\medskip\noindent{\bf 5.}
An expression of the form
\beq
{\bf default}\ F\ {\bf if}\ G
\eeq{a12}
stands for the causal law
$${\bf caused}\ \{F\}^{\rm ch}\ {\bf if}\ G.$$

\medskip\noindent{\bf 6.}
An expression  of the form
$$
{\bf default}\ F\ {\bf if}\ G\ {\bf after}\ H
$$
stands for the fluent dynamic law \footnote{%
$\{F\}^{\rm ch}$ stands for choice formula $F\lor\neg F$.}
$${\bf caused}\ \{F\}^{\rm ch}\ {\bf if}\ G\ {\bf after}\ H.$$

\medskip\noindent{\bf 7.}
An expression of the form
\beq
{\bf exogenous}\ c\ {\bf if}\ G
\eeq{a13}
where~$c$ is a constant stands for the set of causal laws
$${\bf default}\ c\mvis v\ {\bf if}\ G$$
for all $v\in\j{Dom}(c)$.


\medskip\noindent{\bf 8.}
An expression of the form
\beq
{\bf inertial}\ c\ {\bf if}\ G
\eeq{a15}
where~$c$ is a fluent constant stands for the set of fluent dynamic laws
$${\bf default}\ c\mvis v\ {\bf after}\ c\mvis v\wedge G$$
for all $v\in\j{Dom(c)}$.

\medskip\noindent{\bf 9.}
In the abbreviations of causal laws above, "${\bf if}\ G$" and "${\bf if}\ H$" can be omitted if $G$ and $H$ are~$\top$. 


\section{Proofs} \label{sec:proofs}

\subsection{Proof of $\textbf{Theorem \ref{thm:ha2cplus}}$} \label{ssec:proof-ha2cplus}

We assume the case for linear hybrid automata with convex invariants. The proof of the general case of non-linear hybrid automata with non-convex invariants are mostly similar except for the difference in ${\sf Flow}$ and ${\sf Inv}$ conditions. 

\medskip
\noindent{\bf Theorem~\ref{thm:ha2cplus} \optional{thm:ha2cplus}}\\ 
{
There is a 1:1 correspondence between the paths of the transition system of a Hybrid automata $H$ and the paths of the transition system of the $\mathcal{C+}$ action description $D_H$.
}
\medskip

The proof is immediate from Lemma~\ref{lem:ha2cplus} and Lemma~\ref{lem:cplus2ha}, which are proven below.

\subsubsection{Proof of Lemma \ref{lem:ha2cplus}} \label{sssec:lemma-ha2cplus}

\BOCCC
\begin{lemma}\label{lemma:convex}
For any linear function 
$f:[0,\sigma]\to\mathcal{R}^n$ and a conjunction of linear inequalities $P(X)$ where $X$ ranges over $\mathcal{R}^n$, we have
\[
   \forall \epsilon\in{(0,\sigma)}(P(f(0))\land P(f(\sigma))\to P(\epsilon)).
\]
\end{lemma}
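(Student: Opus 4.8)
The plan is to reduce the statement to the elementary fact that a conjunction of linear inequalities carves out a convex region, together with the observation that a linear (affine) map sends a point on a segment to the corresponding point on the image segment. First I would fix $\epsilon\in(0,\sigma)$ and put $\theta=\epsilon/\sigma$, so $\theta\in(0,1)$. Since $f$ is linear, it is determined by its endpoint values and can be written in affine form $f(t)=f(0)+\tfrac{t}{\sigma}\bigl(f(\sigma)-f(0)\bigr)$; substituting $t=\epsilon$ yields the key identity
\[
   f(\epsilon)=(1-\theta)\,f(0)+\theta\,f(\sigma),
\]
exhibiting $f(\epsilon)$ as a convex combination of the two endpoints $f(0)$ and $f(\sigma)$.

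Next I would verify $P$ conjunct by conjunct. Writing $P(X)=\bigwedge_j\,\bigl(\ell_j(X)\bowtie_j 0\bigr)$ with each $\ell_j$ an affine functional on $\mathcal{R}^n$ and each $\bowtie_j\in\{\ge,>,\le,<\}$, I would use that an affine functional commutes with convex combinations to get, for every $j$,
\[
   \ell_j\bigl(f(\epsilon)\bigr)=(1-\theta)\,\ell_j\bigl(f(0)\bigr)+\theta\,\ell_j\bigl(f(\sigma)\bigr).
\]
Assuming the hypotheses $P(f(0))$ and $P(f(\sigma))$, each of the two terms on the right satisfies the relation $\bowtie_j 0$; because $1-\theta>0$ and $\theta>0$, the convex combination inherits the same sign, so $\ell_j(f(\epsilon))\bowtie_j 0$ holds (for a strict relation both strictly positive coefficients keep the combination strictly on the correct side). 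Conjoining over $j$ gives $P(f(\epsilon))$, and discharging the universal quantifier over $\epsilon$ completes the argument.

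I expect no genuine obstacle: this is the standard remark that the affine image of a segment lies in any convex set containing its endpoints, specialised to an intersection of half-spaces. The only points deserving mild care are (i) putting $f$ in exact affine normal form so that the convex-combination identity is an equality rather than an approximation, and (ii) handling the open interval and the strict inequalities, which is precisely why I would argue conjunct-by-conjunct rather than merely quoting convexity of the full region as a black box. An equivalent, slightly terser route is to invoke the footnote's definition of convexity directly: each half-space is convex, an intersection of convex sets is convex, and the displayed identity places $f(\epsilon)$ on the segment joining two points of that intersection.
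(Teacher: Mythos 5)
Your proof is correct and takes essentially the same route as the paper's: the paper's (very terse) proof just cites the fact that a conjunction of linear inequalities cuts out a convex region of $\mathcal{R}^n$ and observes that the linear $f$ places $f(\epsilon)$ on the segment joining $f(0)$ and $f(\sigma)$, which is exactly your convex-combination identity $f(\epsilon)=(1-\theta)f(0)+\theta f(\sigma)$. Your conjunct-by-conjunct verification simply unpacks that cited convexity fact into an explicit computation (with the bonus of handling strict inequalities and the open interval carefully), so it is the same argument in self-contained form.
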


\begin{proof} 
Since $P(X)$ is a conjunction of linear inequalities, we know from \cite[Sec 2.2.4]{boyd04convex}) the values of $X$ that satisfies $P(X)$ must form a convex region\footnote{A convex region is a set of points such that, given any two points $A$, $B$ in that set, the line $AB$ joining them lies entirely within that set. If $P$ is a co nvex set and $x_1 ... x_k$ are any points in it, then
$x=\sum_{i=1}^{k} \lambda_i x_{i}$ is also in $P$, where each $\lambda_i > 0$ and $\sum_{i=1}^{k} \lambda_i=1$.
} in  $\mathcal{R}^n$. Since $f(t)$ is a linear function, from the fact that $P(f(0))$ and $P(f(\sigma))$ are true, it follows that for any $\epsilon \in (0, \sigma)$, $P(f(\epsilon))$ is true.
\end{proof}
\EOCCC

\begin{lemma}\label{lemma:linear} 
Let $H$ be a  linear hybrid automaton with convex invariants, and let 
\[
   (v,  r) \xrightarrow{\sigma} (v,  r')
\] 
be a transition in $T_H$ such that $\sigma\in \mathcal{R}_{> 0}$. Function~$f(t)\mvis r+ t\!\times\!( r'\!-\! r)\!/ \! \sigma$ is a linear differentiable function from $[0, \sigma]$ to $\mathcal{R}^n$, with the first derivative 
$\dot{f} : [0, \sigma] \rar \mathcal{R}^n$ such that
(i) $f(0) \mvis  r$ and $f(\sigma) \mvis  r'$ and 
(ii) for all reals $\epsilon \in (0, \sigma)$, both ${\sf Inv}_v(f(\epsilon))$
and ${\sf Flow}_v({\dot{f}}(\epsilon))$ are true.
\end{lemma}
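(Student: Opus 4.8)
The plan is to verify directly that the displayed affine map is a witness function, treating the two conjuncts of clause~(ii) separately since they lean on different parts of the linear-hybrid-automaton hypotheses. I would first record the structural facts: the map $t\mapsto r+t\,(r'-r)/\sigma$ is affine, hence differentiable on $[0,\sigma]$, with the constant derivative $\dot f(t)=(r'-r)/\sigma$; substituting $t=0$ and $t=\sigma$ gives $f(0)=r$ and $f(\sigma)=r'$, which settles~(i) by pure algebra.

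For the invariant conjunct of~(ii) I would rewrite $f(\epsilon)=(1-\tfrac{\epsilon}{\sigma})\,r+\tfrac{\epsilon}{\sigma}\,r'$, which for $\epsilon\in(0,\sigma)$ exhibits $f(\epsilon)$ as a convex combination of $r$ and $r'$ (the coefficient $\epsilon/\sigma$ lies strictly between $0$ and $1$). Since $(v,r)$ and $(v,r')$ are states of $T_H$, both ${\sf Inv}_v(r)$ and ${\sf Inv}_v(r')$ are true, and by hypothesis the set of values satisfying ${\sf Inv}_v$ is convex; hence ${\sf Inv}_v(f(\epsilon))$ is true. This is precisely the content of \eqref{invariant-state}, which I would either invoke or reprove inline from the definition of convexity.

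The flow conjunct is the step I expect to be the main obstacle. Here I would use the hypothesis that $(v,r)\xrightarrow{\sigma}(v,r')$ is a transition of $T_H$: by definition there is a witness $g$ with $g(0)=r$, $g(\sigma)=r'$ and ${\sf Flow}_v(\dot g(t))$ true for all $t\in(0,\sigma)$. By the fundamental theorem of calculus $\dot f=(r'-r)/\sigma=\tfrac{1}{\sigma}\int_0^\sigma\dot g(t)\,dt$ is the mean of $\dot g$ over $[0,\sigma]$. Because for a linear hybrid automaton ${\sf Flow}_v$ constrains only $\dot X$ through linear inequalities, the set of admissible derivative vectors is convex, so this mean again satisfies ${\sf Flow}_v$ and ${\sf Flow}_v(\dot f(\epsilon))$ holds for every $\epsilon\in(0,\sigma)$. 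The delicate points are the convexity of the flow region — immediate when the flow is a conjunction of linear inequalities, and degenerate (forcing $g=f$) when the flow fixes $\dot X$ by equations as in the Water Tank example — together with the standard justification that the mean of a continuous function valued in a closed convex set lies in that set.
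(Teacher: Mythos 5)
Your clause~(i) and the invariant half of clause~(ii) coincide with the paper's own proof: the paper likewise observes that $f(\epsilon)$ lies on the segment joining $r$ and $r'$, that $(v,r)$ and $(v,r')$ being states makes ${\sf Inv}_v(r)$ and ${\sf Inv}_v(r')$ true, and then invokes convexity of the region defined by ${\sf Inv}_v$. Where you genuinely diverge is the flow half, and the divergence matters. The paper applies the mean value theorem to the witness $g$ of the given transition: there is a point $c\in(0,\sigma)$ with $\dot g(c)=(r'-r)/\sigma$, so ${\sf Flow}_v$ holds at the one value that $\dot f$ ever takes --- an argument that needs \emph{no} hypothesis on the shape of the set of admissible derivatives. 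Your argument instead writes $(r'-r)/\sigma$ as the integral mean $\tfrac{1}{\sigma}\int_0^\sigma\dot g(t)\,dt$ and pulls the mean inside a convex set. That step requires $\{\dot x : {\sf Flow}_v(\dot x)\text{ is true}\}$ to be convex (and closed), which the lemma does not grant: linear hybrid automata allow flow conditions that are arbitrary Boolean combinations of linear inequalities, and a disjunction such as $\dot X_1\le 0\lor \dot X_2\le 0$ defines a non-convex set. You flag this yourself, but flagging it does not discharge it; as written, your proof establishes the lemma only for conjunctive or equational flows.

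That said, your route has a compensating virtue worth recording. The paper's appeal to the mean value theorem for the \emph{vector-valued} function $g:[0,\sigma]\to\mathcal{R}^n$ is not literally valid when $n>1$: componentwise application produces a different intermediate point for each coordinate, and no single $c$ with $\dot g(c)=(r'-r)/\sigma$ need exist. The integral-mean identity you use is the standard correct substitute (granting integrability of $\dot g$, e.g.\ for $C^1$ witnesses), and together with convexity of the derivative set it works in every dimension. So the two proofs have complementary weaknesses: the paper's needs no flow convexity but leans on a vector mean value theorem that fails for $n>1$; yours is dimension-safe but needs flow convexity. Neither gap is closable in full generality, because with a disjunctive flow in dimension $n\ge 2$ the lemma itself fails: take ${\sf Flow}_v$ to be $\dot X_1\le 0\lor \dot X_2\le 0$ and a witness whose derivative moves along the L-shaped region from $(-\alpha,\beta)$ to $(\beta,-\alpha)$ with $\beta>\alpha>0$, passing quickly through $(-\alpha,-\alpha)$; the average derivative then has both components positive and violates the flow condition. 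In the regime the paper actually exercises --- equational flows, where the admissible derivative is a single point and the witness is unique --- both arguments go through, and yours is arguably the more rigorous of the two.
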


\begin{proof}
We check that $f$ satisfies the above conditions:
\begin{itemize}
\item  It is clear that $f(t)$ is differentiable over $t\in[0, \sigma]$, $f(0) \mvis  r$ and $f(\sigma) \mvis  r'$.

\item  Since $(v,  r)$ and $(v,  r')$ are states of $T_H$, it follows that ${\sf Inv}_{v}(f(0))$ and ${\sf Inv}_{v}(f(\sigma))$ are true. 
Since the values of $X$ that makes ${\sf Inv}_v(X)$ form a convex region in ${\cal R}^n$ and $f(t)$ is a linear function, it follows that for $\epsilon \in (0, \sigma)$, ${\sf Inv}_{v}(f(\epsilon))$ is true.

\item Since $(v,  r) \xrightarrow{\sigma}
   (v,  r')$ is a transition in $T_H$, it follows that there is a function $g$ such that (i) $g$ is differentiable in $[0,\sigma]$, (ii) for any $\epsilon \in (0, \sigma)$, ${\sf Flow}_{v}({\dot{g}}(\epsilon))$ is true, (iii) $g(0)= r$ and $g(\sigma)= r'$. Since $g$ is continuous on $[0,\sigma]$ (differentiability implies continuity) and differentiable on $(0, \sigma)$, by the mean value theorem\footnote{http://en.wikipedia.org/wiki/Mean\_value\_theorem}, there is a point $c\in (0, \sigma)$ such that ${\dot{g}}(c)=( r'\!-\! r)\!/\! \sigma$. Consequently, ${\sf Flow}_{v}(( r'\!-\! r)\!/\! \sigma)$ is true. As a result, we get ${\sf Flow}_v({\dot{f}}(\epsilon))$ is true for all $\epsilon \in (0, \sigma)$.
\end{itemize}
\end{proof}

In the following two lemmas, $s_i, a_i, s_{i+1}$ are defined as in Lemma~\ref{lem:ha2cplus}.

\begin{lemma}\label{lemma:state-linear}
For each $i\geq 0$, $s_i$ is a state in the transition system of $D_H$.
\end{lemma}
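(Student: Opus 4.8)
The plan is to unfold the definition of a state in the transition system of $D_H$ and check it directly against $s_i$. Recall that a state is an interpretation $s$ of the fluent constants with $0\!:\!s\models_{bg}\sm[D_0;\ 0\!:\!\sigma^{sd}]$. Every fluent constant of $D_H$ (the variables $X_1,\dots,X_n$ and $\j{Mode}$) is a simple fluent constant, so the set $\sigma^{sd}$ of statically determined fluent constants is empty; since $\sm[D_0;\ \emptyset]$ coincides with $D_0$, the condition collapses to ordinary satisfaction $0\!:\!s\models_{bg} D_0$. Thus I would reduce the claim to showing that the interpretation determined by $(\j{Mode},X)^{s_i}\mvis(v_i,r_i)$ is a model of $D_0$.

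Next I would identify which causal laws of $D_H$ actually contribute to $D_0$. By the construction of $D_m$, at $m=0$ only the static laws are grounded (at the single time point $0$); fluent dynamic laws require a later time point and action dynamic laws require an action time point, so both drop out. Going through $D_H$, the Reset, Flow, inertia, mode, and guard laws all contain \after\ and are fluent dynamic laws; the Wait and Duration laws and the exogenous declarations of $\j{Dur}$ and ${\sf hevent}(e)$ are action dynamic laws (their heads are action formulas); and the exogenous declarations of the $X_i$ yield only choice laws, whose $D_0$ instances $\top\rar\big(0\!:\!X_i\mvis v \lor \neg(0\!:\!X_i\mvis v)\big)$ are tautologies. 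Hence the only nontrivial conjuncts of $D_0$ are the instances of the invariant law \eqref{inv-linear}, i.e.\ $0\!:\!(\j{Mode}\mvis v\rar{\sf Inv}_v(X))$ for each $v\in V$.

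Finally I would discharge these invariant constraints. Fixing $v\in V$: if $v\neq v_i$ the implication holds vacuously, since $\j{Mode}^{s_i}\mvis v_i$ falsifies the antecedent $\j{Mode}\mvis v$; if $v=v_i$, I must show $s_i\models_{bg}{\sf Inv}_{v_i}(r_i)$. But $(v_i,r_i)$ is a vertex of the path $p$ in $T_H$ and hence a state of $T_H$, and by the definition of the state space of a hybrid transition system this already requires ${\sf Inv}_{v_i}(r_i)$ to be true. Therefore $s_i\models_{bg} D_0$, so $s_i$ is a state of $D_H$.

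I do not expect a genuine obstacle: once the state definition is unfolded, the argument is a direct verification. The only points requiring care are the bookkeeping observation that $\sigma^{sd}=\emptyset$ (so the stable-model condition reduces to plain satisfaction) and the classification of every causal law as static versus fluent/action dynamic, which together pin down the invariant condition as the single nontrivial obligation, itself guaranteed by $(v_i,r_i)$ being a state of $T_H$.
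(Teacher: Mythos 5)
Your proof is correct and follows essentially the same route as the paper's: reduce the state condition to $0\!:\!s_i\models_{bg}\sm[(D_H)_0;\emptyset]$ (using that $D_H$ has no statically determined fluent constants), observe that this amounts classically to the invariant constraints $0\!:\!(\j{Mode}\mvis v\rar{\sf Inv}_v(X))$, and discharge them from the fact that $(v_i,r_i)$ is a state of $T_H$. The only difference is one of detail: you explicitly classify every causal law of $D_H$ (static vs.\ fluent/action dynamic) to justify that only the invariant laws survive in $(D_H)_0$, a step the paper asserts without elaboration.
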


\begin{proof}
Since $D_H$ does not contain statically determined fluent constants and every simple fluent constant is declared exogenous, it is sufficient to prove 
\[
   0\!:\!s_i\models_{bg} \sm[(D_H)_0;\ \emptyset],
\]
while $\sm[(D_H)_0;\ \emptyset]$ is equivalent to the conjunction of 
\beq
   0\!:\!\j{Mode}=v\rar 0\!:\!{\sf Inv}_v(X)
\eeq{prop:ha-c-linear-0}
for each $v\in V$. Since $p$ is a path, for each $i\ge 0$, $(v_{i},r_{i})$ is a state in $T_H$. By the definition of a hybrid transition system, ${\sf Inv}_{v_{i}}(r_{i})$ is true. Since $s_i \models_{bg} (\j{Mode},X)=(v_i,r_i)$, we have 
$0\!:\!s_i\models_{bg}\eqref{prop:ha-c-linear-0}$.
\end{proof}

\begin{lemma}\label{lemma:edge-linear}
For each $i\ge 0$, $\langle s_i,a_i,s_{i+1}\rangle$ is a transition in the transition system of $D_H$.
\end{lemma}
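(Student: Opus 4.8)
The plan is to verify, for each $i$, that the triple $\langle s_i,a_i,s_{i+1}\rangle$ meets the defining condition of a transition of $D_H$, i.e. $(0\!:\!s_i)\cup(0\!:\!a_i)\cup(1\!:\!s_{i+1})\models_{bg}\sm[(D_H)_1;\,(0\!:\!\sigma^\mi{act})\cup(1\!:\!\sigma^\mi{fl})]$, where we have used that $D_H$ has no statically determined fluent constants. Since every head of a causal law of $D_H$ is an atomic, $(\sigma^\mi{fl}\cup\sigma^\mi{act})$-plain formula, $D_H$ is definite and $(D_H)_1$ is tight; hence $\sm$ may be replaced by functional completion, and it is enough to show the triple satisfies the completed formula. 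That formula consists of (a) the hard constraints contributed by the Guard, Reset, Mode, Flow and Invariant laws, and (b) the justification conditions on the minimized constants $0\!:\!\sigma^\mi{act}$ and $1\!:\!\sigma^\mi{fl}$. As each $X_i$ is exogenous, $\j{Dur}$ and each ${\sf hevent}(e)$ are exogenous, and $\j{Wait}$ is a default, part (b) is immediate for every constant except $\j{Mode}$ at step $1$, whose value must be justified either by an h-event effect or by inertia.

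First I would split on the form of $\sigma_i$. In the discrete case $\sigma_i={\sf hevent}(v_i,v_{i+1})$, the definition of $T_H$ gives an edge $e=(v_i,v_{i+1})$ with ${\sf Guard}_e(r_i)$ and ${\sf Reset}_e(r_i,r_{i+1})$ true, while $a_i$ sets ${\sf hevent}(e)\mvis\true$ (and all others false), $\j{Dur}\mvis 0$, $\j{Wait}\mvis\false$. I would then check the laws one by one: both $\nonexecutable$ laws for $e$ are satisfied because ${\sf Guard}_e(r_i)$ and $\j{Mode}\mvis v_i$ hold; the Reset constraint holds since ${\sf Reset}_e(r_i,r_{i+1})$ is true; the effect laws justify $\j{Mode}\mvis v_{i+1}$, $\j{Dur}\mvis 0$ and $\j{Wait}\mvis\false$; and the Flow constraints are vacuous because their antecedent $\j{Wait}\mvis\true$ fails. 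The Invariant static law at step $1$ holds because $(v_{i+1},r_{i+1})$ is a state of $T_H$, so ${\sf Inv}_{v_{i+1}}(r_{i+1})$ is true.

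In the continuous case $\sigma_i=\delta\in\mathcal{R}_{\ge 0}$, we have $v_{i+1}=v_i$, all h-events false, $\j{Wait}\mvis\true$ and $\j{Dur}\mvis\delta$. Every $\nonexecutable$ and Reset constraint is vacuous since its h-event antecedent fails, $\j{Mode}$ retains its value and is justified by inertia, and $\j{Wait}\mvis\true$ is justified by the default. The crux is the Flow constraint: if $\delta=0$ the $T_H$-witness forces $r_{i+1}=r_i$, matching the law $\constraint\ X=x\ \after\ X=x\land\j{Mode}\mvis v_i\land\j{Dur}\mvis 0\land\j{Wait}\mvis\true$; if $\delta>0$ the law demands ${\sf Flow}_{v_i}((r_{i+1}-r_i)/\delta)$, which is exactly the conclusion of Lemma~\ref{lemma:linear} applied to the linear function $f(t)=r_i+t(r_{i+1}-r_i)/\delta$, whose constant derivative $(r_{i+1}-r_i)/\delta$ satisfies the flow condition. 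The Invariant law holds once more because $(v_i,r_{i+1})$ is a state.

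I expect the principal difficulty to be bookkeeping rather than conceptual depth: confirming that the stable-model condition really does reduce to the constraints together with the inertial justification of $\j{Mode}$, and that in the continuous case it is the average rate $(r_{i+1}-r_i)/\delta$, shown by Lemma~\ref{lemma:linear} to satisfy the flow condition, rather than the possibly non-linear witness furnished by $T_H$, that verifies the Flow law.
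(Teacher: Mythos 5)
Your proposal matches the paper's proof essentially step for step: both reduce the stable-model condition to completion via definiteness/tightness of $(D_H)_1$, split on whether $\sigma_i$ is an h-event or a nonnegative real, verify each completed formula (guard, reset, mode, duration, wait, flow, invariant — with the invariant following from $(v_i,r_i)$ and $(v_{i+1},r_{i+1})$ being states of $T_H$), and in the continuous case handle $\delta=0$ separately while for $\delta>0$ invoking Lemma~\ref{lemma:linear} so that it is the average rate $(r_{i+1}-r_i)/\delta$, not the original witness of $T_H$, that satisfies the Flow constraint. The only difference is organizational: the paper enumerates the completion as explicit formulas $\j{FLOW}$, $\j{INV}$, $\j{WAIT}$, $\j{GUARD}$, $\j{RESET}$, $\j{MODE}$, $\j{DURATION}$, whereas you fold the same content into ``hard constraints plus justification conditions,'' handling the justification of $\j{Wait}\mvis\false$ and $\j{Mode}$ inside the case analysis exactly as the paper does.
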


\begin{proof}
By definition, we are to show that
\beq 
   0\!:\!s_i \cup 0\!:\!a_i \cup 1\!:\! s_{i+1}\models_{bg} 
    \sm[(D_H)_1; 0\!:\!\sigma^{act}\cup 1\!:\!\sigma^{fl}].
\eeq{prop:ha-c-1-2}
We check that $(D_H)_1$ is tight, so that \eqref{prop:ha-c-1-2} is equivalent to
\[
   0\!:\!s_i \cup 0\!:\!a_i \cup 1\!:\! s_{i+1}\models_{bg} 
    \comp[(D_H)_1; 0\!:\!\sigma^{act}\cup 1\!:\!\sigma^{fl}],
\] 
where the completion $\comp[(D_H)_1; 0\!:\!\sigma^{act}\cup 1\!:\!\sigma^{fl}]$ is equivalent to the conjunction of the following formulas:
%

\begin{itemize}
\item Formula $\j{FLOW}$, which is the conjunction of
\beq
    {\sf Flow}_v((1\!:\!X-0\!:\!X)/t)\ar 0\!:\!\j{Mode}=v\ \land\ 0\!:\!\j{Dur}=t\ \land\  0\!:\!\j{Wait}=\true\ \land\ t>0
\eeq{prop:ha-c-linear-1-4}
and
\beq
    1\!:\!X=0\!:\!X\ \ar  0\!:\!\j{Mode}=v\ \land\ 0\!:\!\j{Dur}=0\ \land\  0\!:\!\j{Wait}=\true
\eeq{prop:ha-c-linear-1-4a}
for each $v\in V$.

\item Formula $\j{INV}$, which is the conjunction of
\beq
    k:{\sf Inv}_v(X)\ \ar\ k:\j{Mode}=v
\eeq{prop:ha-c-linear-1-3}
for  each $k\in\{0,1\}$ and each $v\in V$.

\item Formula $\j{WAIT}$, which is the conjunction of
\[
\ba l
    0\!:\!\j{Wait}=\false\ \lrar\ \bigvee_{e\in E} 0\!:\!{\sf hevent}(e)\mvis\true.
\ea
\]

\item Formula $\j{GUARD}$, which is the conjunction of
\beq
 \bot\ \ar\ 0\!:\!{\sf hevent}(e)\mvis\true\ \land\ 0\!:\!\neg {\sf Guard}_e(X)
\eeq{prop:ha-c-linear-1-5}
for each edge $e\in E$.

\item Formula $\j{RESET}$, which is the conjunction of
\[
\ba l
   {\sf Reset}_e(0\!:\!X, 1\!:\!X)\ \ar\  0\!:\!{\sf hevent}(e)\mvis\true
\ea
\]
for each edge $e=(v_1,v_2)\in E$.

\item Formula $\j{MODE}$, which is the conjunction of
\[
\ba l
    \bot\ar 0\!:\!{\sf hevent}(e)\mvis\true\ \land\ \neg(0\!:\!\j{Mode}=v_1) 
\ea
\] 
for each $e=(v_1,v_2)\in E$;
\[
\ba l
    1:\j{Mode}=v\ \lrar\ \bigvee_{\{v'\mid (v',v)\in E\}} 0\!:\!{\sf hevent}(v',v)\mvis\true 
                \lor\ 0\!:\!\j{Mode}=v
\ea
\]
for each $v\in V$.

\item Formula $\j{DURATION}$, which is the conjunction of
\[
\ba l
    0\!:\!\j{Dur}=0\ \ar\ \bigvee_{e\in E}0\!:\!{\sf hevent}(e)
\ea
\]
for each edge $e\in E$.
\end{itemize}

We will show that $0\!:\!s_i\cup 0\!:\!a_i\cup 1\!:\! s_{i+1}$ satisfies each of the formulas above. 
First, we check $\j{INV}$.
\begin{itemize}
\item $\j{INV}$: From the fact that $(v_{i},  r_{i})$ and $(v_{i+1},  r_{i+1})$ are states in $T_H$, by the definition of a hybrid transition system, ${\sf Inv}_{v_{i}}( r_{i})$ and ${\sf Inv}_{v_{i+1}}( r_{i+1})$ are true. Note that
$s_i \models_{bg} (\j{Mode},  X) =  (v_{i},  r_{i})$ and 
$s_{i+1}\models_{bg} (\j{Mode},  X) =  (v_{i+1},  r_{i+1})$. As a result,
\[
\ba {l}  
  0\!:\!s_{i}\models_{bg} (0\!:\!\j{Mode}=v\ \rar\ 0\!:\!{\sf Inv}_{v_{i}}(X))\\
  1\!:\!s_{i+1}\models_{bg} (1\!:\!\j{Mode}=v\ \rar\ 1\!:\!{\sf Inv}_{v_{i+1}}(X)). \ea
\]
Hence $0\!:\!s_i \cup 0\!:\!a_i \cup 1\!:\! s_{i+1}\models_{bg} \j{INV}$.
\end{itemize}

Next, we check the remaining formulas. 
From the definition of $T_H$, there are two cases for the value of $\sigma_i$.

\smallskip\noindent{\sl Case 1:}  
$\sigma_i={\sf hevent}(e)$ where $e=(v_{i},v_{i+1})$. It follows from the construction of $p'$ that $(\j{Dur})^{a_i}=0$, $({\sf hevent}(e))^{a_i}=\true$, $({\sf hevent}(e'))^{a_i}=\false$ for all $e'\ne e$ and $(\j{Wait})^{a_i}=\false$. 

From the fact that
\[
  (v_{i}, r_{i})\xrightarrow{\sigma_i}(v_{i+1}, r_{i+1})
\]
is a transition in $T_H$  and that $\sigma_i= {\sf hevent}(e)$, it follows from the definition of a hybrid transition system that ${\sf Guard}_e(r_{i})$ and ${\sf Reset}_e(r_{i}, r_{i+1})$ are true.
\begin{itemize}
\item $\j{FLOW}$: Since $0:a_i\models 0:\j{Wait}=\false$, trivially, 
$0\!:\!s_i\ \cup\ 0\!:\!a_i\ \cup\ 1\!:\!s_{i+1} \models_{bg} \j{FLOW}$.

\item $\j{WAIT}$: Since $({\sf hevent}(e))^{a_i}=\true$, and $(\j{Wait})^{a_i}=\false$, it follows that $0\!:\!s_i\ \cup\ 0\!:\!a_i\ \cup\ 1\!:\!s_{i+1} \models_{bg} \j{WAIT} $.

\item $\j{GUARD}$:  From $s_i\models_{bg} X=r_{i}$, it follows that 
$0\!:\!s_i\models_{bg} 0\!:\!{\sf Guard}_e(X)$. 
Since $({\sf hevent}(e))^{a_i}=\true$, 
it follows that $0\!:\!s_i\ \cup\ 0\!:\!a_i\ \cup\ 1\!:\!s_{i+1} \models_{bg} \j{GUARD}$.

\item $\j{RESET}$: From $s_i\models_{bg} (\j{Mode},X) = (v_{i}, r_{i})$ and $s_{i+1}\models_{bg} (\j{Mode},X) = (v_{i+1}, r_{i+1})$, it follows that 
$0\!:\!s_i\ \cup\ 1\!:\!s_{i+1}\models_{bg} {\sf Reset}_e(0\!:\!X,1\!:\!X)$.
Since  $({\sf hevent}(e))^{a_i}=\true$, it follows that 
$0\!:\!s_i\ \cup\ 0\!:\!a_i\ \cup\ 1\!:\!s_{i+1} \models_{bg} \j{RESET} $.

\item $\j{MODE}$: Note that $s_i\models_{bg} (\j{Mode},X) = (v_{i}, r_{i})$ and $s_{i+1}\models_{bg} (\j{Mode},X) = (v_{i+1}, r_{i+1})$. It is immediate that $0\!:\!s_i \models_{bg} 0\!:\!\j{Mode}=v_{i}$ and $1\!:\!s_{i+1} \models_{bg} 1\!:\!\j{Mode}=v_{i+1}$. Since  $({\sf hevent}(e))^{a_i}=\true$, it follows that $0\!:\!s_i\ \cup\ 0\!:\!a_i\ \cup\ 1\!:\!s_{i+1} \models_{bg} \j{MODE}$.

\item $\j{DURATION}$: Since $(\j{Dur})^{a_i}=0$ and $({\sf hevent}(e))^{a_i}=\true$, it follows that $0\!:\!s_i\ \cup\ 0\!:\!a_i\ \cup\ 1\!:\!s_{i+1} \models_{bg} \j{DURATION} $.

\end{itemize}

\smallskip\noindent{\sl Case 2:} $\sigma_i\in \mathcal{R}_{\ge 0}$. By the construction of $p'$, $(\j{Dur})^{a_i}\mvis \sigma_i$, $(\j{Wait})^{a_i} \mvis \true$ and $({\sf hevent}(e))^{a_i} \mvis \false$ for every $e=(v,v')\in E$. 
It is easy to check that $\j{WAIT}$, $\j{GUARD}$, $\j{RESET}$, $\j{MODE}$, $\j{DURATION}$ are trivially satisfied by $0\!:\!s_i\cup 0\!:\!a_i\cup 1\!:\! s_{i+1}$. So, it is sufficient to consider only $\j{FLOW}$.

From the fact that \[(v_{i},  r_{i}) \xrightarrow{\sigma_i} (v_{i+1},  r_{i+1})\] is a transition of $T_H$ and that $\sigma_i\in \mathcal{R}_{\ge 0}$,
it follows from the definition of a hybrid transition system that
\begin{itemize}
\item[(a)] $v_{i} \mvis  v_{i+1}$, and

\item[(b)] there is a differentiable function $f:[0, \sigma_i]\rar \mathcal{R}^n$, with the first derivative $\dot{f} : [0, \sigma_i] \rar \mathcal{R}^n$ such
that: (1) $f(0) \mvis  r_{i}$ and $f(\sigma_i) \mvis  r_{i+1}$ and (2) for all reals $\epsilon \in (0, \sigma_i)$, both ${\sf Inv}_{v_{i}}(f(\epsilon))$
and ${\sf Flow}_{v_{i}}({\dot{f}}(\epsilon))$ are true.
\end{itemize}

\begin{itemize}
\item $\j{FLOW}$: 
\begin{itemize}
\item If $\sigma_i=0$, then $(\j{Dur})^{a_i}=0$. From (b), $ r_{i}= r_{i+1}=f(0)$. As a result $X^{s_i}=X^{s_{i+1}}$ and it follows that $0\!:\!s_i \cup 0\!:\!a_i \cup 1\!:\! s_{i+1}\models_{bg} (\ref{prop:ha-c-linear-1-4a})$.
\item If $\sigma_i>0$, then $(\j{Dur})^{a_i}\!>\! 0$. By Lemma~\ref{lemma:linear}, $f(t)=r_{i}+t*(r_{i+1}-r_i)/\sigma_i$ is a differentiable function that satisfies all the conditions in (b). As a result, ${\sf Flow}_{v_{i}}((r_{i+1}-r_{i})/\sigma_i)$ is true and thus $0\!:\!s_i \cup 0\!:\!a_i \cup 1\!:\! s_{i+1}\models_{bg} {\sf Flow}_{v_{i}}((1\!:\! r-0\!:\! r)/\j{Dur})$.
It follows that $0\!:\!s_i \cup 0\!:\!e_i \cup 1\!:\! s_{i+1}\models_{bg} (\ref{prop:ha-c-linear-1-4})$.
\end{itemize}
\end{itemize}
\end{proof}

\noindent{\bf Lemma~\ref{lem:ha2cplus} \optional{lem:ha2cplus}}\\ 
{\sl
$p'$ is a path in the transition system $D_H$.
}
\medskip

\begin{proof}
By Lemma~\ref{lemma:state-linear}, each $s_i$ is a state of $D_H$. By Lemma~\ref{lemma:edge-linear}, each
$\langle s_i,a_i,s_{i+1}\rangle$ is a transition of $D_H$. So $p'$ is a path in the transition system of $D_H$.
\end{proof}

\subsubsection{Proof of Lemma \ref{lem:cplus2ha}} \label{sssec:lemma-cplus2ha}

In the following two lemmas, $v_{i}, r_{i}$ are defined as in Lemma~\ref{lem:cplus2ha}. 

\begin{lemma}\label{lemma:state-linear-1}
For each $i\ge 0$, $(v_{i},  r_{i})$ is a state in $T_H$. 
\end{lemma}

\begin{proof}
By definition, we are to show that ${\sf Inv}_{v_{i}}( r_{i})$ is true. Since each $s_i$ is a state in the transition system of $D_H$, by definition,
\beq 0\!:\!s_i\models_{bg} \sm[(D_H)_0;\emptyset].  \eeq{prop:ha-c-linear-1-11} Note that $\sm[(D_H)_0;\emptyset]$ is equivalent to the conjunction of the formula:
\beq 0\!:\!{\sf Inv}_v(X) \ar 0\!:\!\j{Mode}=v \eeq{prop:ha-c-linear-1-1} for each  $v\in V$.
Since $(\j{Mode})^{s_i}=v_{i}$, it follows that $s_i\models_{bg} {\sf Inv}_{v_{i}}(X)$. Since $X^{s_i}= r_{i}$, it follows that $ {\sf Inv}_{v_{i}}( r_{i})$ is true.
\end{proof}

\BOCCC
\medskip\noindent (b) We are to show that ${\sf Init}_{v_{(0}}( x_{(0})$ is true. 
Since we express ${\sf Init}_{v_{(0}}( x_{(0})$ as is in $D_H$. It is trivially entailed.
\EOCCC

\begin{lemma}\label{lemma:edge-linear-1}
For each $i\ge 0$, $(v_{i},r_{i})\xrightarrow{\sigma_i}(v_{i+1},r_{i+1})$ is a transition in $T_H$.
\end{lemma}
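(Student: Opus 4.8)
The plan is to fix $i$ and exploit that $\langle s_i,a_i,s_{i+1}\rangle$ is a transition of $D_H$. As established in the proof of Lemma~\ref{lemma:edge-linear}, $(D_H)_1$ is tight, so this transition satisfies the completion; that is, $0\!:\!s_i\cup 0\!:\!a_i\cup 1\!:\!s_{i+1}$ satisfies the conjunction of $\j{FLOW}$, $\j{INV}$, $\j{WAIT}$, $\j{GUARD}$, $\j{RESET}$, $\j{MODE}$, and $\j{DURATION}$. Since Lemma~\ref{lemma:state-linear-1} already gives that $(v_i,r_i)$ and $(v_{i+1},r_{i+1})$ are states of $T_H$, only the edge condition of the hybrid transition system remains to be checked, and I would do this by splitting on the two clauses defining $\sigma_i$.

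For the first case, where $({\sf hevent}(v_i,v_{i+1}))^{a_i}=\true$ and $\sigma_i={\sf hevent}(v_i,v_{i+1})$, the edge $e=(v_i,v_{i+1})$ belongs to $E$ (otherwise ${\sf hevent}(v_i,v_{i+1})$ would not be a constant of $D_H$). I would extract ${\sf Guard}_e(r_i)$ from $\j{GUARD}$, since $0\!:\!{\sf hevent}(e)=\true$ forces the guard true at time $0$, and ${\sf Reset}_e(r_i,r_{i+1})$ from $\j{RESET}$. Together with ${\sf hevent}(e)=\sigma_i$, these are exactly the three requirements for an h-event transition $(v_i,r_i)\xrightarrow{\sigma_i}(v_{i+1},r_{i+1})$.

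For the second case, $\sigma_i=(\j{Dur})^{a_i}\in\mathcal{R}_{\ge 0}$, I would first show that no h-event fires at all: any $e=(v_1,v_2)\in E$ with $0\!:\!{\sf hevent}(e)=\true$ is forced by the mode constraints in $\j{MODE}$ to satisfy $v_1=v_i$ and $v_2=v_{i+1}$, hence $e=(v_i,v_{i+1})$, contradicting $({\sf hevent}(v_i,v_{i+1}))^{a_i}=\false$. Consequently $\j{WAIT}$ gives $\j{Wait}^{a_i}=\true$ and $\j{MODE}$ collapses to $v_{i+1}=v_i=:v$. If $\sigma_i=0$, clause \eqref{prop:ha-c-linear-1-4a} yields $r_{i+1}=r_i$, and the constant function on $[0,0]$ is a trivial witness; if $\sigma_i>0$, clause \eqref{prop:ha-c-linear-1-4} gives that ${\sf Flow}_v((r_{i+1}-r_i)/\sigma_i)$ is true, and I would take the linear witness $f(t)=r_i+t\,(r_{i+1}-r_i)/\sigma_i$ exactly as in Lemma~\ref{lemma:linear}, whose constant derivative makes ${\sf Flow}_v(\dot f(\epsilon))$ hold at every $\epsilon\in(0,\sigma_i)$.

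The main obstacle, and the only point where the linearity and convexity hypotheses are genuinely used, is verifying ${\sf Inv}_v(f(\epsilon))$ for every $\epsilon\in[0,\sigma_i]$ in the subcase $\sigma_i>0$. Here I would write $f(\epsilon)=(1-\epsilon/\sigma_i)\,r_i+(\epsilon/\sigma_i)\,r_{i+1}$ as a convex combination of $r_i$ and $r_{i+1}$; since $(v,r_i)$ and $(v,r_{i+1})$ are states, ${\sf Inv}_v(r_i)$ and ${\sf Inv}_v(r_{i+1})$ hold, and convexity of the region cut out by ${\sf Inv}_v$ forces ${\sf Inv}_v(f(\epsilon))$ throughout, which is precisely formula \eqref{invariant-state}. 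Collecting the witness conditions in each subcase then yields the continuous transition, completing the case analysis.
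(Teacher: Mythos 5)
Your proof is correct and takes essentially the same approach as the paper's: reduce to the completion of $(D_H)_1$ via tightness, split on whether an h-event occurs, read off ${\sf Guard}_e(r_i)$ and ${\sf Reset}_e(r_i,r_{i+1})$ from the completion in the discrete case, and in the continuous case use the linear witness $f(t)=r_i+t(r_{i+1}-r_i)/\sigma_i$, whose constant derivative handles the flow condition and whose convex-combination form, together with convexity of the invariant region, handles the invariant. The only difference is cosmetic: you perform the MODE-based identification of a firing edge with $(v_i,v_{i+1})$ inside the no-event case (to conclude no h-event fires at all), whereas the paper does it inside the event case; the two organizations are logically equivalent.
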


\begin{proof}
From the fact that $(s_i,a_i,s_{i+1})$ is a transition of $D_H$, by definition we know that
\beq 0\!:\!s_i \cup 0\!:\!a_i \cup 1\!:\! s_{i+1}\models_{bg} \sm[(D_H)_1;\ 0\!:\!\sigma^{act}\cup 1\!:\!\sigma^{fl}].\eeq{prop:c-ha-linear-1-2}


Since $(D_H)_1$ is tight, $\sm[(D_H)_1; 0\!:\!\sigma^{act}\cup 1\!:\!\sigma^{fl}]$
is equivalent to
$\comp[(D_H)_1;0\!:\!\sigma^{act}\cup 1\!:\!\sigma^{fl}]$, which is equivalent to the conjunction of  $\j{FLOW}$, $\j{INV}$,  $\j{WAIT}$, $\j{GUARD}$, $\j{RESET}$, $\j{MODE}$, $\j{DURATION}$ (See the proof of Lemma~\ref{lemma:edge-linear} for the definitions of these formulas).

Consider two cases:

\smallskip\noindent{\sl Case 1:}  There exists an edge $e=(v,v')$ such that $({\sf hevent}(e))^{a_i} = \true$. Since $\j{Mode}^{s_i} = v_{i}$ and $\j{Mode}^{s_{i+1}} = v_{i+1}$, it follows that $(v,v')$ must be $(v_{i},v_{i+1})$. Since $({\sf hevent}(e))^{a_i}=\true$, it follows from the definition that $\sigma_i$ is ${\sf hevent}(e)$.

\begin{itemize}
\item Since $0\!:\!s_i \cup 0\!:\!a_i \cup 1\!:\! s_{i+1}\models_{bg} \j{GUARD}$ and $X^{s_i} = r_{i}$, it is immediate that ${\sf Guard}_e(r_{i})$ is true.
\item Since $0\!:\!s_i \cup 0\!:\!a_i \cup 1\!:\! s_{i+1}\models_{bg} \j{RESET}$, $X^{s_{i+1}} = r_{i+1}$ and $X^{s_i} = r_{i}$, it is immediate that ${\sf Reset}_e(r_{i},r_{i+1})$ is true.
\item By Lemma~\ref{lemma:state-linear-1},
$(v_{i},  r_{i})$ and $(v_{i+1},  r_{i+1})$ are states.
\end{itemize}
Consequently, we conclude that $(v_{i},r_{i})\xrightarrow{\sigma_i}(v_{i+1},r_{i+1})$ is a transition in $T_H$.

\smallskip\noindent{\sl Case 2:} $({\sf hevent}(e))^{a_i} = \false$ for all edges $e=(v,v')\in E$. By construction, $\sigma_i=(\j{Dur})^{a_i}$ where $(\j{Dur})^{a_i}\in \mathcal{R}_{\ge 0}$. By Lemma~\ref{lemma:state-linear-1},
$(v_{i},  r_{i})$ and $(v_{i+1},  r_{i+1})$ are states of $T_H$.
Since $0\!:\!s_i \cup 0\!:\!a_i \cup 1\!:\! s_{i+1}\models\j{MODE}$, it follows that $\j{Mode}^{s_i}=\j{Mode}^{s_{i+1}}$. As a result, $v_{i}=v_{i+1}$. We are to show that there is a
differentiable function $f : [0, \sigma_i] \rar \mathcal{R}^n$, with the first derivative $\dot{f} : [0, \sigma_i] \rar \mathcal{R}^n$ such
that: (i) $f(0) =  r_{i}$ and $f(\sigma_i) =  r_{i+1}$ and 
(ii) for all reals $\epsilon \in (0, \sigma_i)$, both ${\sf Inv}_{v_{i}}(f(\epsilon))$
and ${\sf Flow}_{v_{i}}({\dot{f}}(\epsilon))$ are true. We now check these conditions for two cases.
\begin{enumerate}
\item $\sigma_i=0$: Since 
$0\!:\! s_i\cup 0\!:\! a_i\cup 1\!:\! s_{i+1}\models_{bg}(\ref{prop:ha-c-linear-1-4a})$, it is clear that $r_{i+1}=r_{i}$. This satisfies condition (i) since $f(\sigma_i)=f(0)=r_{i+1}=r_{i}$. Condition (ii) is trivially satisfied since there is no $\epsilon\in (0,0)$.

\item $\sigma_i> 0$:
Define $f(t)= r_{i}+t*( r_{i+1}- r_{i})/\sigma_i$. We check that $f$ satisfies the above conditions:
\begin{itemize}
\item $f(t)$ is differentiable over $[0, \sigma_i]$.
\item It is clear that $f(0) =  r_{i}$ and $f(\sigma_i) =  r_{i+1}$.

\item We check that for any $\epsilon \in (0, \sigma)$, ${\sf Inv}_{v}(f(\epsilon))$ is true. From $0\!:\! s_i\cup 1\!:\! s_{i+1}\models_{bg}(\ref{prop:ha-c-linear-1-3})$, it follows that ${\sf Inv}_{v_{i}}(f(0))$ and ${\sf Inv}_{v_{i}}(f(\sigma_i))$ are true. 
Since the values of $X$ that makes ${\sf Inv}_{v_i}(X)$ form a convex region in ${\cal R}^n$ and $f(t)$ is a linear function, it follows that for $\epsilon \in (0, \sigma)$, ${\sf Inv}_{v_i}(f(\epsilon))$ is true.
\item We check that for any $\epsilon \in (0, \sigma)$, ${\sf Flow}_{v_{i}}({\dot{f}}(\epsilon))$ is true. From (\ref{prop:ha-c-linear-1-4}), it follows that ${\sf Flow}_{v_{i}}((f(\sigma_i)-f(0))/\sigma_i)$ is true. Since $f(t)$ is a linear function, it follows that for any $\epsilon \in (0, \sigma_i)$, ${\dot{f}}(\epsilon)=(f(\sigma_i)-f(0))/\sigma_i$. As a result, ${\sf Flow}_{v_{i}}({\dot{f}}(\epsilon))$ is true
\end{itemize}
\end{enumerate}
Consequently, we conclude that $(v_{i},r_{i})\xrightarrow{\sigma_i}(v_{i+1},r_{i+1})$ is a transition in $T_H$.
\end{proof}

\bigskip
\noindent{\bf Lemma~\ref{lem:cplus2ha} \optional{lem:cplus2ha}}\\ 
{\sl
$q'$ is a path in the transition system of $T_H$.
}
\medskip

\begin{proof}
By Lemma~\ref{lemma:state-linear-1}, each $(v_{i},  r_{i})$ is a state in $T_H$. By Lemma~\ref{lemma:edge-linear-1}, each
$(v_{i},  r_{i}) \xrightarrow{\sigma_i} (v_{i+1},  r_{i+1})$ is a transition in $T_H$. So $q'$ is a path in $T_H$. 
\end{proof}

\section{Examples} 

\subsection{Water Tank Example}
\begin{center}
  \includegraphics[height=4cm]{water-tank-HA.png}
\end{center}

This example describes a water tank example with 2 tanks $X_1$ and $X_2$. Here $R_1$ and $R_2$ are constants that describe the lower bounds of the level of water in the respective tanks. $W_1$ and $W_2$ are constants that define the rate at which water is being added to the respective tanks and $V$ is the constant rate at which water is draining from the tanks. We assume that water is added only one tank at a time. 

Assuming $W_1=W_2=7.5$, $V=5$, $R_1=R_2=0$ and initially the level of water in the respective tanks are $X_1=0, X_2=8$, then the goal is to find a way to add water to each of the tanks with the passage of time. 

\subsubsection{Hybrid Automata Components} 

\begin{itemize}
\item Variables:
\begin{itemize}
\item $X_1,X_1',\dot{X_1}$
\item $X_2,X_2',\dot{X_2}$
\end{itemize}
\item States:
\begin{itemize}
\item $Q_1$ (mode=1)
\item $Q_2$ (mode=2)
\end{itemize}
\item Directed Graph: The graph is given above
\item Invariants: 
\begin{itemize} 
\item ${\sf Inv}_{Q_1}(X): X_2\geq {\rm R_2}$
\item ${\sf Inv}_{Q_2}(X): X_1\geq {\rm R_1}$
\end{itemize}
\item Flow: 
\begin{itemize} 
\item ${\sf Flow}_{Q_1}(X): \dot{X_1}={\rm W_1}-{\rm V}\ \land\ \dot{X_2}=-{\rm V}$.
\item ${\sf Flow}_{Q2}(X): \dot{X_1}=-{\rm V}\ \land\ \dot{X_2}={\rm W_2}-{\rm V}$.
\end{itemize}
\item Guard and Reset:
\begin{itemize} 
\item ${\sf Guard}_{(Q_1, Q_2)}(X): X_2\leq {\rm R_2}$.
\item ${\sf Guard}_{(Q_2, Q_1)}(X): X_1\leq {\rm R_1}$.
\item ${\sf Reset}_{(Q_1, Q_2)}(X,X'): X_1'=X_1\ \land\ X_2'=X_2$.
\item ${\sf Reset}_{(Q_2, Q_1)}(X,X'): X_1'=X_1\ \land\ X_2'=X_2$.
\end{itemize}
\end{itemize}

\subsubsection{In the Input Language of {\sc cplus2aspmt}}

\begin{lstlisting}[breaklines]
% File: water.cp 

:- constants
x1,x2    :: simpleFluent(real[0..30]);
mode     :: inertialFluent(real[1..2]);
e1,e2    :: exogenousAction;
wait     :: action;
duration :: exogenousAction(real[0..10]).

:- variables
X11,X21,X10,X20,T,X.

exogenous x1.
exogenous x2.

% Guard
nonexecutable e1 if -(x2<=r2).
nonexecutable e2 if -(x1<=r1).

% Reset
constraint (x1=X10 & x2=X20) after x1=X10 & x2=X20 & e1.
constraint (x1=X10 & x2=X20) after x1=X10 & x2=X20 & e2.

% Mode
nonexecutable e1 if -(mode=1).
nonexecutable e2 if -(mode=2).
e1 causes mode=2.
e2 causes mode=1.

% Duration
e1 causes duration=0.
e2 causes duration=0.

% Wait
default wait.
e1 causes ~wait.
e2 causes ~wait.

% Flow
constraint (x1=X11 & x2=X21 ->> (((X11-X10)//T)=w1-v & ((X21-X20)//T)=-v)) 
    after mode=1 & x1=X10 & x2=X20 & duration=T & wait & T>0.

constraint (x1=X10 & x2=X20) 
    after mode=1 & x1=X10 & x2=X20 & duration=0 & wait.

constraint (x1=X11 & x2=X21 ->> (((X11-X10)//T)=-v & ((X21-X20)//T)=w2-v)) 
    after mode=2 & x1=X10 & x2=X20 & duration=T & wait & T>0.

constraint (x1=X10 & x2=X20) 
    after mode=2 & x1=X10 & x2=X20 & duration=0 & wait.

% Invariant

constraint (mode=1 ->> (x2=X ->> X>=r2)).
constraint (mode=2 ->> (x1=X ->> X>=r1)).

:- query
label :: test;
maxstep :: 6;
0:mode=1;
0:x1 = 0;
0:x2 = 8;
2:mode=2;
4:mode=1;
6:mode=2.


\end{lstlisting}

\BOCCC
\subsubsection{{\sc aspmt} Input Generated by {\sc cplus2aspmt}}

\begin{lstlisting}[breaklines]

% Sort Declarations ---------------------------------------------------
:- sorts
step;astep;qSort;pstep;zero.


% Variable Declarations -----------------------------------------------
:- variables
ST::pstep;
AS::astep.



% Object Declarations -----------------------------------------------
:- objects
0 :: zero;
0..maxstep :: step;
1..maxstep :: pstep;
0..maxstep-1 :: astep;
query :: qSort.


% Constant Declarations -----------------------------------------------
:- constants
x1(step)::real[0..30];
x2(step)::real[0..30];
mode(step)::real[1..2];
e1(astep)::boolean;
e2(astep)::boolean;
wait(astep)::boolean;
duration(astep)::real[0..10];
qlabel(qSort) :: boolean.


{x1(0)=GVAR_real}.
{x2(0)=GVAR_real}.
{mode(ST)=GVAR_real} <- mode(ST-1)=GVAR_real.
{mode(0)=GVAR_real}.
{e1(ST-1)=GVAR_boolean}.
{e2(ST-1)=GVAR_boolean}.
{duration(ST-1)=GVAR_real}.

x1(0)=GVAR_real <- x1(0)=GVAR_real.
x1(ST)=GVAR_real <- x1(ST)=GVAR_real.

x2(0)=GVAR_real <- x2(0)=GVAR_real.
x2(ST)=GVAR_real <- x2(ST)=GVAR_real.

% Guard

false <- e1(ST-1)=true & not (GVAR_real<=0) & x2(ST-1)=GVAR_real.
false <- e2(ST-1)=true & not (GVAR_real<=0) & x1(ST-1)=GVAR_real.

% Reset

false <- not (x1(ST)=X10 & x2(ST)=X20) & x1(ST-1)=X10 & x2(ST-1)=X20 & e1(ST-1)=true.
false <- not (x1(ST)=X10 & x2(ST)=X20) & x1(ST-1)=X10 & x2(ST-1)=X20 & e2(ST-1)=true.

%Mode

false <- e1(ST-1)=true & not (mode(ST-1)=1).
false <- e2(ST-1)=true & not (mode(ST-1)=2).
mode(ST)=2 <- e1(ST-1)=true.
mode(ST)=1 <- e2(ST-1)=true.

%Duration

duration(ST-1)=0 <- e1(ST-1)=true.
duration(ST-1)=0 <- e2(ST-1)=true.

%Wait

wait(ST-1)=true <- wait(ST-1)=true.
wait(ST-1)=false <- e1(ST-1)=true.
wait(ST-1)=false <- e2(ST-1)=true.

%Flow

false <- not (x1(ST)=X11 & x2(ST)=X21 -> (((X11-X10)/T)==7.5-5 & ((X21-X20)/T)==-5)) & mode(ST-1)=1 & x1(ST-1)=X10 & x2(ST-1)=X20 & duration(ST-1)=T & wait(ST-1)=true & T>0.

false <- not (x1(ST)=X10 & x2(ST)=X20) & mode(ST-1)=1 & x1(ST-1)=X10 & x2(ST-1)=X20 & duration(ST-1)=0 & wait(ST-1)=true.

false <- not (x1(ST)=X11 & x2(ST)=X21 -> (((X11-X10)/T)==-5 & ((X21-X20)/T)==7.5-5)) & mode(ST-1)=2 & x1(ST-1)=X10 & x2(ST-1)=X20 & duration(ST-1)=T & wait(ST-1)=true & T>0.

false <- not (x1(ST)=X10 & x2(ST)=X20) & mode(ST-1)=2 & x1(ST-1)=X10 & x2(ST-1)=X20 & duration(ST-1)=0 & wait(ST-1)=true.

%Invariant

false <- not (mode(0)=1 -> (x2(0)=X -> X>=0)).
false <- not (mode(ST)=1 -> (x2(ST)=X -> X>=0)).

false <- not (mode(0)=2 -> (x1(0)=X -> X>=0)).
false <- not (mode(ST)=2 -> (x1(ST)=X -> X>=0)).


% Query: init
% Maxstep: ..6
<- not mode(0)=1 & qlabel(init).
<- not x1(0)=0 & qlabel(init).
<- not x2(0)=8 & qlabel(init).
<- not mode(2)=2 & qlabel(init) & maxstep == 0.
<- not mode(2)=2 & qlabel(init).
<- not mode(4)=1 & qlabel(init) & maxstep == 0.
<- not mode(4)=1 & qlabel(init).
<- not mode(6)=2 & qlabel(init) & maxstep == 0.
<- not mode(6)=2 & qlabel(init).


% -------------------------------------------------------------------------
% epilogue ----------------------------------------------------------------
% -------------------------------------------------------------------------

qlabel(query).

% -------------------------------------------------------------------------

#hide qlabel/1.
\end{lstlisting}

\subsubsection{${\tt dReal}$ Input Generated by {\sc cplus2aspmt}}

\begin{lstlisting}[breaklines]
(set-logic QF_NRA_ODE)
(declare-const true_a Bool)
(declare-const false_a Bool)
(declare-const duration_0_ Real)
(declare-const duration_1_ Real)
(declare-const duration_2_ Real)
(declare-const duration_3_ Real)
(declare-const duration_4_ Real)
(declare-const duration_5_ Real)
(declare-const e1_0_ Bool)
(declare-const e1_1_ Bool)
(declare-const e1_2_ Bool)
(declare-const e1_3_ Bool)
(declare-const e1_4_ Bool)
(declare-const e1_5_ Bool)
(declare-const e2_0_ Bool)
(declare-const e2_1_ Bool)
(declare-const e2_2_ Bool)
(declare-const e2_3_ Bool)
(declare-const e2_4_ Bool)
(declare-const e2_5_ Bool)
(declare-const mode_0_ Real)
(declare-const mode_1_ Real)
(declare-const mode_2_ Real)
(declare-const mode_3_ Real)
(declare-const mode_4_ Real)
(declare-const mode_5_ Real)
(declare-const mode_6_ Real)
(declare-const qlabel_init_ Bool)
(declare-const wait_0_ Bool)
(declare-const wait_1_ Bool)
(declare-const wait_2_ Bool)
(declare-const wait_3_ Bool)
(declare-const wait_4_ Bool)
(declare-const wait_5_ Bool)
(declare-const x1_0_ Real)
(declare-const x1_1_ Real)
(declare-const x1_2_ Real)
(declare-const x1_3_ Real)
(declare-const x1_4_ Real)
(declare-const x1_5_ Real)
(declare-const x1_6_ Real)
(declare-const x2_0_ Real)
(declare-const x2_1_ Real)
(declare-const x2_2_ Real)
(declare-const x2_3_ Real)
(declare-const x2_4_ Real)
(declare-const x2_5_ Real)
(declare-const x2_6_ Real)
(assert true_a)
(assert (not false_a))
(assert (>= duration_0_ 0))
(assert (<= duration_0_ 10))
(assert (>= duration_1_ 0))
(assert (<= duration_1_ 10))
(assert (>= duration_2_ 0))
(assert (<= duration_2_ 10))
(assert (>= duration_3_ 0))
(assert (<= duration_3_ 10))
(assert (>= duration_4_ 0))
(assert (<= duration_4_ 10))
(assert (>= duration_5_ 0))
(assert (<= duration_5_ 10))
(assert (>= mode_0_ 1))
(assert (<= mode_0_ 2))
(assert (>= mode_1_ 1))
(assert (<= mode_1_ 2))
(assert (>= mode_2_ 1))
(assert (<= mode_2_ 2))
(assert (>= mode_3_ 1))
(assert (<= mode_3_ 2))
(assert (>= mode_4_ 1))
(assert (<= mode_4_ 2))
(assert (>= mode_5_ 1))
(assert (<= mode_5_ 2))
(assert (>= mode_6_ 1))
(assert (<= mode_6_ 2))
(assert (>= x1_0_ 0))
(assert (<= x1_0_ 30))
(assert (>= x1_1_ 0))
(assert (<= x1_1_ 30))
(assert (>= x1_2_ 0))
(assert (<= x1_2_ 30))
(assert (>= x1_3_ 0))
(assert (<= x1_3_ 30))
(assert (>= x1_4_ 0))
(assert (<= x1_4_ 30))
(assert (>= x1_5_ 0))
(assert (<= x1_5_ 30))
(assert (>= x1_6_ 0))
(assert (<= x1_6_ 30))
(assert (>= x2_0_ 0))
(assert (<= x2_0_ 30))
(assert (>= x2_1_ 0))
(assert (<= x2_1_ 30))
(assert (>= x2_2_ 0))
(assert (<= x2_2_ 30))
(assert (>= x2_3_ 0))
(assert (<= x2_3_ 30))
(assert (>= x2_4_ 0))
(assert (<= x2_4_ 30))
(assert (>= x2_5_ 0))
(assert (<= x2_5_ 30))
(assert (>= x2_6_ 0))
(assert (<= x2_6_ 30))
(assert (or (or (and (= wait_5_ true) (= wait_5_ true)) (and (= e2_5_ true) (= wait_5_ false))) (and (= e1_5_ true) (= wait_5_ false))))
(assert (=> (= wait_5_ true) (= wait_5_ true)))
(assert (=> (= e2_5_ true) (= wait_5_ false)))
(assert (=> (= e1_5_ true) (= wait_5_ false)))
(assert (or (or (and (= wait_4_ true) (= wait_4_ true)) (and (= e2_4_ true) (= wait_4_ false))) (and (= e1_4_ true) (= wait_4_ false))))
(assert (=> (= wait_4_ true) (= wait_4_ true)))
(assert (=> (= e2_4_ true) (= wait_4_ false)))
(assert (=> (= e1_4_ true) (= wait_4_ false)))
(assert (or (or (and (= wait_3_ true) (= wait_3_ true)) (and (= e2_3_ true) (= wait_3_ false))) (and (= e1_3_ true) (= wait_3_ false))))
(assert (=> (= wait_3_ true) (= wait_3_ true)))
(assert (=> (= e2_3_ true) (= wait_3_ false)))
(assert (=> (= e1_3_ true) (= wait_3_ false)))
(assert (or (or (and (= wait_2_ true) (= wait_2_ true)) (and (= e2_2_ true) (= wait_2_ false))) (and (= e1_2_ true) (= wait_2_ false))))
(assert (=> (= wait_2_ true) (= wait_2_ true)))
(assert (=> (= e2_2_ true) (= wait_2_ false)))
(assert (=> (= e1_2_ true) (= wait_2_ false)))
(assert (or (or (and (= wait_1_ true) (= wait_1_ true)) (and (= e2_1_ true) (= wait_1_ false))) (and (= e1_1_ true) (= wait_1_ false))))
(assert (=> (= wait_1_ true) (= wait_1_ true)))
(assert (=> (= e2_1_ true) (= wait_1_ false)))
(assert (=> (= e1_1_ true) (= wait_1_ false)))
(assert (or (or (and (= wait_0_ true) (= wait_0_ true)) (and (= e2_0_ true) (= wait_0_ false))) (and (= e1_0_ true) (= wait_0_ false))))
(assert (=> (= wait_0_ true) (= wait_0_ true)))
(assert (=> (= e2_0_ true) (= wait_0_ false)))
(assert (=> (= e1_0_ true) (= wait_0_ false)))
(assert (= qlabel_init_ true))
(assert (= qlabel_init_ true))
(assert (or (or (= mode_6_ mode_5_) (and (= e2_5_ true) (= mode_6_ 1))) (and (= e1_5_ true) (= mode_6_ 2))))
(assert (=> (= e2_5_ true) (= mode_6_ 1)))
(assert (=> (= e1_5_ true) (= mode_6_ 2)))
(assert (or (or (= mode_5_ mode_4_) (and (= e2_4_ true) (= mode_5_ 1))) (and (= e1_4_ true) (= mode_5_ 2))))
(assert (=> (= e2_4_ true) (= mode_5_ 1)))
(assert (=> (= e1_4_ true) (= mode_5_ 2)))
(assert (or (or (= mode_4_ mode_3_) (and (= e2_3_ true) (= mode_4_ 1))) (and (= e1_3_ true) (= mode_4_ 2))))
(assert (=> (= e2_3_ true) (= mode_4_ 1)))
(assert (=> (= e1_3_ true) (= mode_4_ 2)))
(assert (or (or (= mode_3_ mode_2_) (and (= e2_2_ true) (= mode_3_ 1))) (and (= e1_2_ true) (= mode_3_ 2))))
(assert (=> (= e2_2_ true) (= mode_3_ 1)))
(assert (=> (= e1_2_ true) (= mode_3_ 2)))
(assert (or (or (= mode_2_ mode_1_) (and (= e2_1_ true) (= mode_2_ 1))) (and (= e1_1_ true) (= mode_2_ 2))))
(assert (=> (= e2_1_ true) (= mode_2_ 1)))
(assert (=> (= e1_1_ true) (= mode_2_ 2)))
(assert (or (or (= mode_1_ mode_0_) (and (= e2_0_ true) (= mode_1_ 1))) (and (= e1_0_ true) (= mode_1_ 2))))
(assert (=> (= e2_0_ true) (= mode_1_ 1)))
(assert (=> (= e1_0_ true) (= mode_1_ 2)))
(assert (=> (= e2_5_ true) (= duration_5_ 0)))
(assert (=> (= e1_5_ true) (= duration_5_ 0)))
(assert (=> (= e2_4_ true) (= duration_4_ 0)))
(assert (=> (= e1_4_ true) (= duration_4_ 0)))
(assert (=> (= e2_3_ true) (= duration_3_ 0)))
(assert (=> (= e1_3_ true) (= duration_3_ 0)))
(assert (=> (= e2_2_ true) (= duration_2_ 0)))
(assert (=> (= e1_2_ true) (= duration_2_ 0)))
(assert (=> (= e2_1_ true) (= duration_1_ 0)))
(assert (=> (= e1_1_ true) (= duration_1_ 0)))
(assert true_a)
(assert (=> (= e2_0_ true) (= duration_0_ 0)))
(assert (=> (= e1_0_ true) (= duration_0_ 0)))
(assert (not (and (= e1_0_ true) (not (<= x2_0_ 0)))))
(assert (not (and (= e1_1_ true) (not (<= x2_1_ 0)))))
(assert (not (and (= e1_2_ true) (not (<= x2_2_ 0)))))
(assert (not (and (= e1_3_ true) (not (<= x2_3_ 0)))))
(assert (not (and (= e1_4_ true) (not (<= x2_4_ 0)))))
(assert (not (and (= e1_5_ true) (not (<= x2_5_ 0)))))
(assert (not (and (= e2_0_ true) (not (<= x1_0_ 0)))))
(assert (not (and (= e2_1_ true) (not (<= x1_1_ 0)))))
(assert (not (and (= e2_2_ true) (not (<= x1_2_ 0)))))
(assert (not (and (= e2_3_ true) (not (<= x1_3_ 0)))))
(assert (not (and (= e2_4_ true) (not (<= x1_4_ 0)))))
(assert (not (and (= e2_5_ true) (not (<= x1_5_ 0)))))
(assert (not (and (not (= x1_1_ x1_0_)) (= e1_0_ true))))
(assert (not (and (not (= x1_2_ x1_1_)) (= e1_1_ true))))
(assert (not (and (not (= x1_3_ x1_2_)) (= e1_2_ true))))
(assert (not (and (not (= x1_4_ x1_3_)) (= e1_3_ true))))
(assert (not (and (not (= x1_5_ x1_4_)) (= e1_4_ true))))
(assert (not (and (not (= x1_6_ x1_5_)) (= e1_5_ true))))
(assert (not (and (not (= x2_1_ x2_0_)) (= e1_0_ true))))
(assert (not (and (not (= x2_2_ x2_1_)) (= e1_1_ true))))
(assert (not (and (not (= x2_3_ x2_2_)) (= e1_2_ true))))
(assert (not (and (not (= x2_4_ x2_3_)) (= e1_3_ true))))
(assert (not (and (not (= x2_5_ x2_4_)) (= e1_4_ true))))
(assert (not (and (not (= x2_6_ x2_5_)) (= e1_5_ true))))
(assert (not (and (not (= x1_1_ x1_0_)) (= e2_0_ true))))
(assert (not (and (not (= x1_2_ x1_1_)) (= e2_1_ true))))
(assert (not (and (not (= x1_3_ x1_2_)) (= e2_2_ true))))
(assert (not (and (not (= x1_4_ x1_3_)) (= e2_3_ true))))
(assert (not (and (not (= x1_5_ x1_4_)) (= e2_4_ true))))
(assert (not (and (not (= x1_6_ x1_5_)) (= e2_5_ true))))
(assert (not (and (not (= x2_1_ x2_0_)) (= e2_0_ true))))
(assert (not (and (not (= x2_2_ x2_1_)) (= e2_1_ true))))
(assert (not (and (not (= x2_3_ x2_2_)) (= e2_2_ true))))
(assert (not (and (not (= x2_4_ x2_3_)) (= e2_3_ true))))
(assert (not (and (not (= x2_5_ x2_4_)) (= e2_4_ true))))
(assert (not (and (not (= x2_6_ x2_5_)) (= e2_5_ true))))
(assert (not (and (= e1_0_ true) (not (= mode_0_ 1)))))
(assert (not (and (= e1_1_ true) (not (= mode_1_ 1)))))
(assert (not (and (= e1_2_ true) (not (= mode_2_ 1)))))
(assert (not (and (= e1_3_ true) (not (= mode_3_ 1)))))
(assert (not (and (= e1_4_ true) (not (= mode_4_ 1)))))
(assert (not (and (= e1_5_ true) (not (= mode_5_ 1)))))
(assert (not (and (= e2_0_ true) (not (= mode_0_ 2)))))
(assert (not (and (= e2_1_ true) (not (= mode_1_ 2)))))
(assert (not (and (= e2_2_ true) (not (= mode_2_ 2)))))
(assert (not (and (= e2_3_ true) (not (= mode_3_ 2)))))
(assert (not (and (= e2_4_ true) (not (= mode_4_ 2)))))
(assert (not (and (= e2_5_ true) (not (= mode_5_ 2)))))
(assert (not (and (and (and (not (= (/ (- x1_1_ x1_0_) duration_0_) (- (/ 75 10) 5))) (> duration_0_ 0)) (= wait_0_ true)) (= mode_0_ 1))))
(assert (not (and (and (and (not (= (/ (- x1_2_ x1_1_) duration_1_) (- (/ 75 10) 5))) (> duration_1_ 0)) (= wait_1_ true)) (= mode_1_ 1))))
(assert (not (and (and (and (not (= (/ (- x1_3_ x1_2_) duration_2_) (- (/ 75 10) 5))) (> duration_2_ 0)) (= wait_2_ true)) (= mode_2_ 1))))
(assert (not (and (and (and (not (= (/ (- x1_4_ x1_3_) duration_3_) (- (/ 75 10) 5))) (> duration_3_ 0)) (= wait_3_ true)) (= mode_3_ 1))))
(assert (not (and (and (and (not (= (/ (- x1_5_ x1_4_) duration_4_) (- (/ 75 10) 5))) (> duration_4_ 0)) (= wait_4_ true)) (= mode_4_ 1))))
(assert (not (and (and (and (not (= (/ (- x1_6_ x1_5_) duration_5_) (- (/ 75 10) 5))) (> duration_5_ 0)) (= wait_5_ true)) (= mode_5_ 1))))
(assert (not (and (and (and (not (= (/ (- x2_1_ x2_0_) duration_0_) -5)) (> duration_0_ 0)) (= wait_0_ true)) (= mode_0_ 1))))
(assert (not (and (and (and (not (= (/ (- x2_2_ x2_1_) duration_1_) -5)) (> duration_1_ 0)) (= wait_1_ true)) (= mode_1_ 1))))
(assert (not (and (and (and (not (= (/ (- x2_3_ x2_2_) duration_2_) -5)) (> duration_2_ 0)) (= wait_2_ true)) (= mode_2_ 1))))
(assert (not (and (and (and (not (= (/ (- x2_4_ x2_3_) duration_3_) -5)) (> duration_3_ 0)) (= wait_3_ true)) (= mode_3_ 1))))
(assert (not (and (and (and (not (= (/ (- x2_5_ x2_4_) duration_4_) -5)) (> duration_4_ 0)) (= wait_4_ true)) (= mode_4_ 1))))
(assert (not (and (and (and (not (= (/ (- x2_6_ x2_5_) duration_5_) -5)) (> duration_5_ 0)) (= wait_5_ true)) (= mode_5_ 1))))
(assert (not (and (and (and (not (= x1_1_ x1_0_)) (= wait_0_ true)) (= mode_0_ 1)) (= duration_0_ 0))))
(assert (not (and (and (and (not (= x1_2_ x1_1_)) (= wait_1_ true)) (= mode_1_ 1)) (= duration_1_ 0))))
(assert (not (and (and (and (not (= x1_3_ x1_2_)) (= wait_2_ true)) (= mode_2_ 1)) (= duration_2_ 0))))
(assert (not (and (and (and (not (= x1_4_ x1_3_)) (= wait_3_ true)) (= mode_3_ 1)) (= duration_3_ 0))))
(assert (not (and (and (and (not (= x1_5_ x1_4_)) (= wait_4_ true)) (= mode_4_ 1)) (= duration_4_ 0))))
(assert (not (and (and (and (not (= x1_6_ x1_5_)) (= wait_5_ true)) (= mode_5_ 1)) (= duration_5_ 0))))
(assert (not (and (and (and (not (= x2_1_ x2_0_)) (= wait_0_ true)) (= mode_0_ 1)) (= duration_0_ 0))))
(assert (not (and (and (and (not (= x2_2_ x2_1_)) (= wait_1_ true)) (= mode_1_ 1)) (= duration_1_ 0))))
(assert (not (and (and (and (not (= x2_3_ x2_2_)) (= wait_2_ true)) (= mode_2_ 1)) (= duration_2_ 0))))
(assert (not (and (and (and (not (= x2_4_ x2_3_)) (= wait_3_ true)) (= mode_3_ 1)) (= duration_3_ 0))))
(assert (not (and (and (and (not (= x2_5_ x2_4_)) (= wait_4_ true)) (= mode_4_ 1)) (= duration_4_ 0))))
(assert (not (and (and (and (not (= x2_6_ x2_5_)) (= wait_5_ true)) (= mode_5_ 1)) (= duration_5_ 0))))
(assert (not (and (and (and (not (= (/ (- x1_1_ x1_0_) duration_0_) -5)) (> duration_0_ 0)) (= wait_0_ true)) (= mode_0_ 2))))
(assert (not (and (and (and (not (= (/ (- x1_2_ x1_1_) duration_1_) -5)) (> duration_1_ 0)) (= wait_1_ true)) (= mode_1_ 2))))
(assert (not (and (and (and (not (= (/ (- x1_3_ x1_2_) duration_2_) -5)) (> duration_2_ 0)) (= wait_2_ true)) (= mode_2_ 2))))
(assert (not (and (and (and (not (= (/ (- x1_4_ x1_3_) duration_3_) -5)) (> duration_3_ 0)) (= wait_3_ true)) (= mode_3_ 2))))
(assert (not (and (and (and (not (= (/ (- x1_5_ x1_4_) duration_4_) -5)) (> duration_4_ 0)) (= wait_4_ true)) (= mode_4_ 2))))
(assert (not (and (and (and (not (= (/ (- x1_6_ x1_5_) duration_5_) -5)) (> duration_5_ 0)) (= wait_5_ true)) (= mode_5_ 2))))
(assert (not (and (and (and (not (= (/ (- x2_1_ x2_0_) duration_0_) (- (/ 75 10) 5))) (> duration_0_ 0)) (= wait_0_ true)) (= mode_0_ 2))))
(assert (not (and (and (and (not (= (/ (- x2_2_ x2_1_) duration_1_) (- (/ 75 10) 5))) (> duration_1_ 0)) (= wait_1_ true)) (= mode_1_ 2))))
(assert (not (and (and (and (not (= (/ (- x2_3_ x2_2_) duration_2_) (- (/ 75 10) 5))) (> duration_2_ 0)) (= wait_2_ true)) (= mode_2_ 2))))
(assert (not (and (and (and (not (= (/ (- x2_4_ x2_3_) duration_3_) (- (/ 75 10) 5))) (> duration_3_ 0)) (= wait_3_ true)) (= mode_3_ 2))))
(assert (not (and (and (and (not (= (/ (- x2_5_ x2_4_) duration_4_) (- (/ 75 10) 5))) (> duration_4_ 0)) (= wait_4_ true)) (= mode_4_ 2))))
(assert (not (and (and (and (not (= (/ (- x2_6_ x2_5_) duration_5_) (- (/ 75 10) 5))) (> duration_5_ 0)) (= wait_5_ true)) (= mode_5_ 2))))
(assert (not (and (and (and (not (= x1_1_ x1_0_)) (= wait_0_ true)) (= mode_0_ 2)) (= duration_0_ 0))))
(assert (not (and (and (and (not (= x1_2_ x1_1_)) (= wait_1_ true)) (= mode_1_ 2)) (= duration_1_ 0))))
(assert (not (and (and (and (not (= x1_3_ x1_2_)) (= wait_2_ true)) (= mode_2_ 2)) (= duration_2_ 0))))
(assert (not (and (and (and (not (= x1_4_ x1_3_)) (= wait_3_ true)) (= mode_3_ 2)) (= duration_3_ 0))))
(assert (not (and (and (and (not (= x1_5_ x1_4_)) (= wait_4_ true)) (= mode_4_ 2)) (= duration_4_ 0))))
(assert (not (and (and (and (not (= x1_6_ x1_5_)) (= wait_5_ true)) (= mode_5_ 2)) (= duration_5_ 0))))
(assert (not (and (and (and (not (= x2_1_ x2_0_)) (= wait_0_ true)) (= mode_0_ 2)) (= duration_0_ 0))))
(assert (not (and (and (and (not (= x2_2_ x2_1_)) (= wait_1_ true)) (= mode_1_ 2)) (= duration_1_ 0))))
(assert (not (and (and (and (not (= x2_3_ x2_2_)) (= wait_2_ true)) (= mode_2_ 2)) (= duration_2_ 0))))
(assert (not (and (and (and (not (= x2_4_ x2_3_)) (= wait_3_ true)) (= mode_3_ 2)) (= duration_3_ 0))))
(assert (not (and (and (and (not (= x2_5_ x2_4_)) (= wait_4_ true)) (= mode_4_ 2)) (= duration_4_ 0))))
(assert (not (and (and (and (not (= x2_6_ x2_5_)) (= wait_5_ true)) (= mode_5_ 2)) (= duration_5_ 0))))
(assert (not (and (= mode_0_ 1) (not (>= x2_0_ 0)))))
(assert (not (and (= mode_6_ 1) (not (>= x2_6_ 0)))))
(assert (not (and (= mode_5_ 1) (not (>= x2_5_ 0)))))
(assert (not (and (= mode_4_ 1) (not (>= x2_4_ 0)))))
(assert (not (and (= mode_3_ 1) (not (>= x2_3_ 0)))))
(assert (not (and (= mode_2_ 1) (not (>= x2_2_ 0)))))
(assert (not (and (= mode_1_ 1) (not (>= x2_1_ 0)))))
(assert (not (and (= mode_0_ 2) (not (>= x1_0_ 0)))))
(assert (not (and (= mode_6_ 2) (not (>= x1_6_ 0)))))
(assert (not (and (= mode_5_ 2) (not (>= x1_5_ 0)))))
(assert (not (and (= mode_4_ 2) (not (>= x1_4_ 0)))))
(assert (not (and (= mode_3_ 2) (not (>= x1_3_ 0)))))
(assert (not (and (= mode_2_ 2) (not (>= x1_2_ 0)))))
(assert (not (and (= mode_1_ 2) (not (>= x1_1_ 0)))))
(assert (not (not (= mode_0_ 1))))
(assert (not (not (= x1_0_ 0))))
(assert (not (not (= x2_0_ 8))))
(assert (not (not (= mode_2_ 2))))
(assert (not (not (= mode_4_ 1))))
(assert (not (not (= mode_6_ 2))))

(check-sat)
(exit)
\end{lstlisting}
\EOCCC

\subsubsection{Output}

\begin{center}
  \includegraphics[height=7cm]{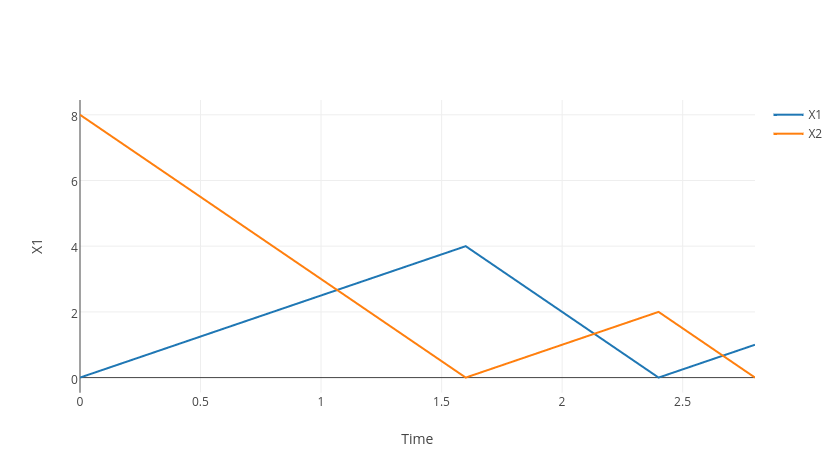}
\end{center}
\begin{lstlisting}[breaklines]
Command: cplus2aspmt water.cp -c maxstep=6 -c query=test -c w1=7.5 -c w2=7.5 -c v=5 -c r1=0 =c r2=0

Solution:
duration_0_ : [ ENTIRE ] = [1.6, 1.6]
duration_1_ : [ ENTIRE ] = [0, 0]
duration_2_ : [ ENTIRE ] = [0.7999999999999998, 0.8000000000000003]
duration_3_ : [ ENTIRE ] = [0, 0]
duration_4_ : [ ENTIRE ] = [0.3999999999999999, 0.4000000000000002]
duration_5_ : [ ENTIRE ] = [0, 0]
mode_0_ : [ ENTIRE ] = [1, 1]
mode_1_ : [ ENTIRE ] = [1, 1]
mode_2_ : [ ENTIRE ] = [2, 2]
mode_3_ : [ ENTIRE ] = [2, 2]
mode_4_ : [ ENTIRE ] = [1, 1]
mode_5_ : [ ENTIRE ] = [1, 1]
mode_6_ : [ ENTIRE ] = [2, 2]
x1_0_ : [ ENTIRE ] = [0, 0]
x1_1_ : [ ENTIRE ] = [4, 4.000000000000001]
x1_2_ : [ ENTIRE ] = [4, 4.000000000000001]
x1_3_ : [ ENTIRE ] = [0, 0]
x1_4_ : [ ENTIRE ] = [0, 0]
x1_5_ : [ ENTIRE ] = [0.9999999999999998, 1.000000000000001]
x1_6_ : [ ENTIRE ] = [0.9999999999999998, 1.000000000000001]
x2_0_ : [ ENTIRE ] = [8, 8]
x2_1_ : [ ENTIRE ] = [0, 0]
x2_2_ : [ ENTIRE ] = [0, 0]
x2_3_ : [ ENTIRE ] = [2, 2.000000000000001]
x2_4_ : [ ENTIRE ] = [2, 2.000000000000001]
x2_5_ : [ ENTIRE ] = [0, 0]
x2_6_ : [ ENTIRE ] = [0, 0]
true_a : Bool = true
false_a : Bool = false
e1_0_ : Bool = false
e1_1_ : Bool = true
e1_2_ : Bool = false
e1_3_ : Bool = false
e1_4_ : Bool = false
e1_5_ : Bool = true
e2_0_ : Bool = false
e2_1_ : Bool = false
e2_2_ : Bool = false
e2_3_ : Bool = true
e2_4_ : Bool = false
e2_5_ : Bool = false
qlabel_init_ : Bool = true
wait_0_ : Bool = true
wait_1_ : Bool = false
wait_2_ : Bool = true
wait_3_ : Bool = false
wait_4_ : Bool = true
wait_5_ : Bool = false
delta-sat with delta = 0.00100000000000000
Total time in milliseconds: 4328
\end{lstlisting}

\BOCCC
\subsection{2 Ball Example} \label{ssec:2ball}
\begin{center}
  \includegraphics[height=3cm]{2ball-ha.png}
\end{center}
This example involves 2 balls with the same elasticity falling to the ground. Ball 1 starts at 2 units from the ground while ball 2 starts at 3 units from the ground. Our aim is to formalize this problem and observe the transition system of these 2 balls with time.

\subsubsection{Hybrid Automata Components} 

\begin{itemize}
\item Variables:
\begin{itemize}
\item \(H_1,H_1',\dot{H_1}\)
\item \(H_2,H_2',\dot{H_2}\)
\item \(V_1,V_1',\dot{V_1}\)
\item \(V_2,V_2',\dot{V_2}\)
\end{itemize}
\item States:
\begin{itemize}
\item \(s_0\)  (Corresponds to $mode=1$)
\end{itemize}
\item Hevents:
\begin{itemize}
\item \(hitGround\_b1\)
\item \(hitGround\_b2\)
\end{itemize}
\item Directed Graph: The graph is given above
\item Invariants: 
\begin{itemize} 
\item \({\sf Inv}_{s_0}(H_1,H_2): H_1\geq 0 \wedge H_2\geq 0\)
\end{itemize}
\item Flow: 
\begin{itemize} 
\item \({\sf Flow}_{s_0}(H_1,H_2,V_1,V_2): \dot{H_1}=V_1\ \wedge\ \dot{H_2}=V_2\ \wedge\ \dot{V_1}=-g\ \wedge \ \dot{V_2}=-g\).
\end{itemize}
\item Jump:
\begin{itemize} 
\item ${\sf Guard}_{(s_0,s_0)}(H_1,H_2,V_1,V_2): H_1=0\ \wedge\ H_2=0$
\item ${\sf Reset}_{(s_0,s_0)}(H_1,H_2,V_1,V_2): H_1'=0\ \wedge\ H_2'=H_2\ \wedge V_1'=-0.8\times V_1\ \wedge\ V_2'=V_2$
\item ${\sf Reset}_{(s_0,s_0)}(H_1,H_2,V_1,V_2): H_1'=H_1\ \wedge\ H_2'=0\ \wedge V_1'=V_1\ \wedge\ V_2'=-0.9\times V_2$
\end{itemize}
\end{itemize}

\subsubsection{C+ Input}

\begin{lstlisting}[breaklines]
:- sorts
ball.

:- objects
b1,b2::ball.

:- constants
mode :: inertialFluent(integer);
height(ball) :: simpleFluent(real[0..50]);
velocity(ball) :: simpleFluent(real[-30..30]);
hitGround(ball) :: exogenousAction;
wait :: action;
duration :: exogenousAction(real[0..100]).

:- variables
E :: events;
B :: ball;
B1 :: ball;
V,V_1,V0,D,H,H_1,H0,T,R.

%System Rules
%All actions have 0 duration
caused duration=0 if hitGround(B).

default wait.
caused ~wait if hitGround(B).
exogenous height(B).
exogenous velocity(B).

% Flow Translation
constraint height(B)=H_1 after velocity(B)=V0 & height(B) = H0 &
H_1=H0+V0*T+0.5*-g*T*T & mode=1 & duration = T & wait.
constraint velocity(B)=V_1 after velocity(B) = V0 & V_1=V0 + -g*T & mode=1 & duration = T & wait.

% Guard Translation
nonexecutable hitGround(B) if -(height(B)=0).

% Hevent Restrictions
nonexecutable hitGround(B) if -(mode=1).

% Invariant Translation
constraint (mode=1 ->> -(height(B)=H ->> H>=0)).

% Reset Translation
constraint (height(b1)=H_10 & velocity(b1)=V_10 & height(b2)=0 & velocity(b2)=-0.9*V_20) after height(b2)=H_20 & velocity(b2)=V_20 & height(b1)=H_10 & velocity(b1)=V_10 & hitGround(b2).
constraint (height(b2)=H_20 & velocity(b2)=V_20 & height(b1)=0 & velocity(b1)=-0.8*V_10) after height(b2)=H_20 & velocity(b2)=V_20 & height(b1)=H_10 & velocity(b1)=V_10 & hitGround(b2).

% Planning
:- query
label :: init;
maxstep :: 7;
0:mode=1;
0:height(b1)=2;
0:height(b2)=3;
0:velocity(b1)=0;
0:velocity(b2)=0.
\end{lstlisting}

\subsubsection{$C+$ Modulo ODE input}
\begin{lstlisting}[breaklines]
:- sorts
ball.

:- objects
b1,b2::ball.

:- constants
height(ball) :: continuousFluent(0..50);
velocity(ball) :: continuousFluent(-30..30);
hitGround(ball):: exogenousAction.

:- variables
B :: ball;
B1 :: ball;
V,D,H,H_10,H_20,V_10,V_20.

caused duration = 0 if hitGround(B).

default wait.
caused ~wait if hitGround(B).

%Rates
derivative of height(B) is velocity(B) if mode=1.
derivative of velocity(B) is -g if mode=1.

% Guard Translation
nonexecutable hitGround(B) if -(height(B)=0).

% Hevent Restrictions
nonexecutable hitGround(B) if -(mode=1).

% Invariant Translation
constraint (mode=1 ->> (height(B)=H ->> H>=0)).
always_t (height(B)=H ->> H>=0) if mode=1.

% Reset Translation
constraint (height(b1)=H_10 & velocity(b1)=V_10 & height(b2)=0 & velocity(b2)=-0.9*V_20) after height(b2)=H_20 & velocity(b2)=V_20 & height(b1)=H_10 & velocity(b1)=V_10 & hitGround(b2).
constraint (height(b2)=H_20 & velocity(b2)=V_20 & height(b1)=0 & velocity(b1)=-0.8*V_10) after height(b2)=H_20 & velocity(b2)=V_20 & height(b1)=H_10 & velocity(b1)=V_10 & hitGround(b2).

% Planning
:- query
label :: init;
maxstep :: 7;
0:mode=1;
0:height(b1)=2;
0:height(b2)=3;
0:velocity(b1)=0;
0:velocity(b2)=0.
\end{lstlisting}

\subsubsection{Extended ASPMT description}
\begin{lstlisting}[breaklines]
:- sorts
events,ball;astep;step.

:- objects
0..maxstep :: step;
0..maxstep-1 :: astep;
b1,b2::ball.

:- constants
step:mode :: integer[0..5];
0:height(ball) :: real[0..50];
astep:height_t(ball) :: real[0..50];
0:velocity(ball) :: real[0..50];
astep:velocity_t(ball) :: real[-30..30];
astep:hitGround(ball)::boolean;
astep:wait :: boolean;
astep:duration :: real[0..100].

:- variables
E :: events;
B :: ball;
B1 :: ball.

{AS+1:mode=X} <- AS:mode=X.
{AS:height_t(B)=X}.
{AS:velocity_t(B)=X}.
{AS:duration=X}.
{AS:hitGround(B)=X}.

%System Rules
%All actions have 0 duration
AS:duration=0 <- AS:hitGround(B).

{wait(AS)=true}.
AS:wait=false <- AS:hitGround(B).

%Rates
d/dt[height(B)](1)=velocity(B).
d/dt[velocity(B)](1)=-g.

% Flow Translation
[0:height_t(b1), 0:velocity_t(b1), 0:height_t(b2), 0:velocity_t(b2)]=int(0,T,[H_10, V_10, H_20, V_20],1) <- 0:velocity(b1)=V_10 & 0:height(b1) = H_10 & 0:velocity(b2)=V_20 & 0:height(b2) = H_20 & mode=1 & 0:duration = T & 0:wait.

% Guard Translation
<- 0:hitGround(B) & -(0:height(B)=0).
<- AS:hitGround(B) & -(AS-1:height_t(B)=0).

% Hevent Restrictions
<- AS:hitGround(B) & -(AS:mode=1).

% Invariant Translation
<-  (0:mode=1 ->> (0:height(B)=H ->> H>=0)).
<-  (AS:mode=1 ->> (AS-1:height_t(B)=H ->> H>=0).
forall_t 1 [0 AS:duration] (AS-1:height_t(B)=H -> H>=0).
% Invariant Translation


% Reset Translation
<- not (0:height_t(b1)=H_10 & 0:velocity_t(b1)=V_10 & 0:height_t(b2)=0 & 0:velocity_t(b2)=-0.9*V_20) & 0:height(b2)=H_20 & 0:velocity(b2)=V_20 & 0:height(b1)=H_10 & 0:velocity(b1)=V_10 & 0:hitGround(b2).
<- not (AS:height_t(b1)=H_10 & AS:velocity_t(b1)=V_10 & AS:height_t(b2)=0 & AS:velocity_t(b2)=-0.9*V_20) & AS-1:height(b2)=H_20 & AS-1:velocity(b2)=V_20 & AS-1:height(b1)=H_10 & AS-1:velocity_t(b1)=V_10 & AS:hitGround(b2).
<- not (0:height_t(b1)=H_10 & 0:velocity_t(b1)=V_10 & 0:height_t(b2)=0 & 0:velocity_t(b2)=-0.9*V_20) & 0:height(b2)=H_20 & 0:velocity(b2)=V_20 & 0:height(b1)=H_10 & 0:velocity(b1)=V_10 & 0:hitGround(b2).
<- not (AS:height_t(b1)=0 & AS:velocity_t(b1)=-0.8*V_10 & AS:height_t(b2)=H_20 & AS:velocity_t(b2)=V_20) & AS-1:height(b2)=H_20 & AS-1:velocity(b2)=V_20 & AS-1:height(b1)=H_10 & AS-1:velocity_t(b1)=V_10 & AS:hitGround(b1).

% Planning
<- not 0:mode=1;
<- not 0:height(b1)=2;
<- not 0:height(b2)=3;
<- not 0:velocity(b1)=0;
<- not 0:velocity(b2)=0.

\end{lstlisting}

\subsubsection{SMT2 Encoding in dReal}

\begin{lstlisting}[breaklines]
(set-logic QF_NRA_ODE)
(declare-const true_a Bool)
(declare-const false_a Bool)
(declare-const height Real)
(declare-const height_t Real)
(declare-const velocity Real)
(declare-const velocity_t Real)
(declare-const velocity_b2 Real)
(declare-const velocity_b1 Real)
(declare-const height_b1 Real)
(declare-const height_b2 Real)
(declare-const duration_0 Real)
(declare-const duration_1 Real)
(declare-const duration_2 Real)
(declare-const duration_3 Real)
(declare-const height_b1_0 Real)
(declare-const height_b1_0_t Real)
(declare-const height_b1_1_t Real)
(declare-const height_b1_2_t Real)
(declare-const height_b1_3_t Real)
(declare-const height_b2_0 Real)
(declare-const height_b2_0_t Real)
(declare-const height_b2_1_t Real)
(declare-const height_b2_2_t Real)
(declare-const height_b2_3_t Real)
(declare-const hitGround_b1__0_ Bool)
(declare-const hitGround_b1__1_ Bool)
(declare-const hitGround_b1__2_ Bool)
(declare-const hitGround_b1__3_ Bool)
(declare-const hitGround_b2__0_ Bool)
(declare-const hitGround_b2__1_ Bool)
(declare-const hitGround_b2__2_ Bool)
(declare-const hitGround_b2__3_ Bool)
(declare-const mode_0_ Real)
(declare-const mode_1_ Real)
(declare-const mode_2_ Real)
(declare-const mode_3_ Real)
(declare-const mode_4_ Real)
(declare-const qlabel_init_ Bool)
(declare-const velocity_b1_0 Real)
(declare-const velocity_b1_0_t Real)
(declare-const velocity_b1_1_t Real)
(declare-const velocity_b1_2_t Real)
(declare-const velocity_b1_3_t Real)
(declare-const velocity_b2_0 Real)
(declare-const velocity_b2_0_t Real)
(declare-const velocity_b2_1_t Real)
(declare-const velocity_b2_2_t Real)
(declare-const velocity_b2_3_t Real)
(declare-const wait_0_ Bool)
(declare-const wait_1_ Bool)
(declare-const wait_2_ Bool)
(declare-const wait_3_ Bool)
(define-ode flow_1 ((= d/dt[height_b1] velocity_b1)(= d/dt[velocity_b1] -10)(= d/dt[height_b2] velocity_b2)(= d/dt[velocity_b2] -10)))
(assert true_a)
(assert (not false_a))
(assert (>= duration_0 0))
(assert (<= duration_0 50))
(assert (>= duration_1 0))
(assert (<= duration_1 50))
(assert (>= duration_2 0))
(assert (<= duration_2 50))
(assert (>= duration_3 0))
(assert (<= duration_3 50))
(assert (>= height_b1_0 -50))
(assert (<= height_b1_0 50))
(assert (>= height_b1_0_t -50))
(assert (<= height_b1_0_t 50))
(assert (>= height_b1_1_t -50))
(assert (<= height_b1_1_t 50))
(assert (>= height_b1_2_t -50))
(assert (<= height_b1_2_t 50))
(assert (>= height_b1_3_t -50))
(assert (<= height_b1_3_t 50))
(assert (>= height_b2_0 -50))
(assert (<= height_b2_0 50))
(assert (>= height_b2_0_t -50))
(assert (<= height_b2_0_t 50))
(assert (>= height_b2_1_t -50))
(assert (<= height_b2_1_t 50))
(assert (>= height_b2_2_t -50))
(assert (<= height_b2_2_t 50))
(assert (>= height_b2_3_t -50))
(assert (<= height_b2_3_t 50))
(assert (>= mode_0_ 0))
(assert (<= mode_0_ 3))
(assert (>= mode_1_ 0))
(assert (<= mode_1_ 3))
(assert (>= mode_2_ 0))
(assert (<= mode_2_ 3))
(assert (>= mode_3_ 0))
(assert (<= mode_3_ 3))
(assert (>= mode_4_ 0))
(assert (<= mode_4_ 3))
(assert (>= velocity_b1_0 -30))
(assert (<= velocity_b1_0 30))
(assert (>= velocity_b1_0_t -30))
(assert (<= velocity_b1_0_t 30))
(assert (>= velocity_b1_1_t -30))
(assert (<= velocity_b1_1_t 30))
(assert (>= velocity_b1_2_t -30))
(assert (<= velocity_b1_2_t 30))
(assert (>= velocity_b1_3_t -30))
(assert (<= velocity_b1_3_t 30))
(assert (>= velocity_b2_0 -30))
(assert (<= velocity_b2_0 30))
(assert (>= velocity_b2_0_t -30))
(assert (<= velocity_b2_0_t 30))
(assert (>= velocity_b2_1_t -30))
(assert (<= velocity_b2_1_t 30))
(assert (>= velocity_b2_2_t -30))
(assert (<= velocity_b2_2_t 30))
(assert (>= velocity_b2_3_t -30))
(assert (<= velocity_b2_3_t 30))
(assert (or (or (and (= wait_3_ true) (= wait_3_ true)) (and (= hitGround_b2__3_ true) (= wait_3_ false))) (and (= hitGround_b1__3_ true) (= wait_3_ false))))
(assert (=> (= wait_3_ true) (= wait_3_ true)))
(assert (=> (= hitGround_b2__3_ true) (= wait_3_ false)))
(assert (=> (= hitGround_b1__3_ true) (= wait_3_ false)))
(assert (or (or (and (= wait_2_ true) (= wait_2_ true)) (and (= hitGround_b2__2_ true) (= wait_2_ false))) (and (= hitGround_b1__2_ true) (= wait_2_ false))))
(assert (=> (= wait_2_ true) (= wait_2_ true)))
(assert (=> (= hitGround_b2__2_ true) (= wait_2_ false)))
(assert (=> (= hitGround_b1__2_ true) (= wait_2_ false)))
(assert (or (or (and (= wait_1_ true) (= wait_1_ true)) (and (= hitGround_b2__1_ true) (= wait_1_ false))) (and (= hitGround_b1__1_ true) (= wait_1_ false))))
(assert (=> (= wait_1_ true) (= wait_1_ true)))
(assert (=> (= hitGround_b2__1_ true) (= wait_1_ false)))
(assert (=> (= hitGround_b1__1_ true) (= wait_1_ false)))
(assert (or (or (and (= wait_0_ true) (= wait_0_ true)) (and (= hitGround_b2__0_ true) (= wait_0_ false))) (and (= hitGround_b1__0_ true) (= wait_0_ false))))
(assert (=> (= wait_0_ true) (= wait_0_ true)))
(assert (=> (= hitGround_b2__0_ true) (= wait_0_ false)))
(assert (=> (= hitGround_b1__0_ true) (= wait_0_ false)))
(assert (= qlabel_init_ true))
(assert (= qlabel_init_ true))
(assert (= mode_4_ mode_3_))
(assert (= mode_3_ mode_2_))
(assert (= mode_2_ mode_1_))
(assert (= mode_1_ mode_0_))
(assert (=> (= hitGround_b2__3_ true) (= duration_3 0)))
(assert (=> (= hitGround_b1__3_ true) (= duration_3 0)))
(assert (=> (= hitGround_b2__2_ true) (= duration_2 0)))
(assert (=> (= hitGround_b1__2_ true) (= duration_2 0)))
(assert (=> (= hitGround_b2__1_ true) (= duration_1 0)))
(assert (=> (= hitGround_b1__1_ true) (= duration_1 0)))
(assert true_a)
(assert (=> (= hitGround_b2__0_ true) (= duration_0 0)))
(assert (=> (= hitGround_b1__0_ true) (= duration_0 0)))
; Flow:
; caused [height(B)=H_1, velocity(B)=V_1] after velocity(B)=V0 & duration = T & height(B) = H0 & H_1=int(0,T,[H0, V0],[d/dt[height](s0), d/dt[velocity](s0)]) & s0 & wait.
(assert (=> (and (= mode_0_ 1) (= wait_0_ true)) (= [height_b1_0_t velocity_b1_0_t height_b2_0_t velocity_b2_0_t] (integral 0. duration_0 [height_b1_0 velocity_b1_0 height_b2_0 velocity_b2_0] flow_1))))
(assert (=> (and (= mode_1_ 1) (= wait_1_ true)) (= [height_b1_1_t velocity_b1_1_t height_b2_1_t velocity_b2_1_t] (integral 0. duration_1 [height_b1_0_t velocity_b1_0_t height_b2_0_t velocity_b2_0_t] flow_1))))
(assert (=> (and (= mode_2_ 1) (= wait_2_ true)) (= [height_b1_2_t velocity_b1_2_t height_b2_2_t velocity_b2_2_t] (integral 0. duration_2 [height_b1_1_t velocity_b1_1_t height_b2_1_t velocity_b2_1_t] flow_1))))
(assert (=> (and (= mode_3_ 1) (= wait_3_ true)) (= [height_b1_3_t velocity_b1_3_t height_b2_3_t velocity_b2_3_t] (integral 0. duration_3 [height_b1_2_t velocity_b1_2_t height_b2_2_t velocity_b2_2_t] flow_1))))
(assert (not (and (= hitGround_b2__0_ true) (not (= height_b2_0 0)))))
(assert (not (and (= hitGround_b1__0_ true) (not (= height_b1_0 0)))))
(assert (not (and (= hitGround_b2__1_ true) (not (= height_b2_0_t 0)))))
(assert (not (and (= hitGround_b2__2_ true) (not (= height_b2_1_t 0)))))
(assert (not (and (= hitGround_b2__3_ true) (not (= height_b2_2_t 0)))))
(assert (not (and (= hitGround_b1__1_ true) (not (= height_b1_0_t 0)))))
(assert (not (and (= hitGround_b1__2_ true) (not (= height_b1_1_t 0)))))
(assert (not (and (= hitGround_b1__3_ true) (not (= height_b1_2_t 0)))))
(assert (not (and (= hitGround_b2__0_ true) (not (= mode_0_ 1)))))
(assert (not (and (= hitGround_b2__1_ true) (not (= mode_1_ 1)))))
(assert (not (and (= hitGround_b2__2_ true) (not (= mode_2_ 1)))))
(assert (not (and (= hitGround_b2__3_ true) (not (= mode_3_ 1)))))
(assert (not (and (= hitGround_b1__0_ true) (not (= mode_0_ 1)))))
(assert (not (and (= hitGround_b1__1_ true) (not (= mode_1_ 1)))))
(assert (not (and (= hitGround_b1__2_ true) (not (= mode_2_ 1)))))
(assert (not (and (= hitGround_b1__3_ true) (not (= mode_3_ 1)))))
(assert (not (and (not (= height_b1_0_t 0)) (= hitGround_b1__0_ true))))
(assert (not (and (not (= height_b2_0_t 0)) (= hitGround_b2__0_ true))))
(assert (not (and (not (= velocity_b1_0_t (* -0.8 velocity_b1_0))) (= hitGround_b1__0_ true))))
(assert (not (and (not (= velocity_b2_0_t (* -0.9 velocity_b2_0))) (= hitGround_b2__0_ true))))
(assert (not (and (not (= height_b2_1_t 0)) (= hitGround_b2__1_ true))))
(assert (not (and (not (= height_b2_2_t 0)) (= hitGround_b2__2_ true))))
(assert (not (and (not (= height_b2_3_t 0)) (= hitGround_b2__3_ true))))
(assert (not (and (not (= height_b1_1_t 0)) (= hitGround_b1__1_ true))))
(assert (not (and (not (= height_b1_2_t 0)) (= hitGround_b1__2_ true))))
(assert (not (and (not (= height_b1_3_t 0)) (= hitGround_b1__3_ true))))
(assert (not (and (not (= velocity_b2_1_t (* -0.9 velocity_b2_0_t))) (= hitGround_b2__1_ true))))
(assert (not (and (not (= velocity_b2_2_t (* -0.9 velocity_b2_1_t))) (= hitGround_b2__2_ true))))
(assert (not (and (not (= velocity_b2_3_t (* -0.9 velocity_b2_2_t))) (= hitGround_b2__3_ true))))
(assert (not (and (not (= velocity_b1_1_t (* -0.8 velocity_b1_0_t))) (= hitGround_b1__1_ true))))
(assert (not (and (not (= velocity_b1_2_t (* -0.8 velocity_b1_1_t))) (= hitGround_b1__2_ true))))
(assert (not (and (not (= velocity_b1_3_t (* -0.8 velocity_b1_2_t))) (= hitGround_b1__3_ true))))
(assert (not (and (not (= height_b1_0_t height_b1_0)) (= hitGround_b2__0_ true))))
(assert (not (and (not (= height_b2_0_t height_b2_0)) (= hitGround_b1__0_ true))))
(assert (not (and (not (= velocity_b1_0_t velocity_b1_0)) (= hitGround_b2__0_ true))))
(assert (not (and (not (= velocity_b2_0_t velocity_b2_0)) (= hitGround_b1__0_ true))))
(assert (not (and (not (= height_b2_1_t height_b2_0_t)) (= hitGround_b1__1_ true))))
(assert (not (and (not (= height_b2_2_t height_b2_1_t)) (= hitGround_b1__2_ true))))
(assert (not (and (not (= height_b2_3_t height_b2_2_t)) (= hitGround_b1__3_ true))))
(assert (not (and (not (= height_b1_2_t height_b1_1_t)) (= hitGround_b2__2_ true))))
(assert (not (and (not (= height_b1_3_t height_b1_2_t)) (= hitGround_b2__3_ true))))
(assert (not (and (not (= velocity_b2_1_t velocity_b2_0_t)) (= hitGround_b1__1_ true))))
(assert (not (and (not (= velocity_b2_2_t velocity_b2_1_t)) (= hitGround_b1__2_ true))))
(assert (not (and (not (= velocity_b2_3_t velocity_b2_2_t)) (= hitGround_b1__3_ true))))
(assert (not (and (not (= height_b1_1_t height_b1_0_t)) (= hitGround_b2__1_ true))))
(assert (not (and (not (= velocity_b1_1_t velocity_b1_0_t)) (= hitGround_b2__1_ true))))
(assert (not (and (not (= velocity_b1_2_t velocity_b1_1_t)) (= hitGround_b2__2_ true))))
(assert (not (and (not (= velocity_b1_3_t velocity_b1_2_t)) (= hitGround_b2__3_ true))))
(assert (not (not (= mode_0_ 1))))
(assert (not (not (= height_b1_0 2))))
(assert (not (not (= height_b2_0 3))))
(assert (not (not (= velocity_b1_0 0))))
(assert (not (not (= velocity_b2_0 0))))
(assert (not (= hitGround_b1__1_ false)))
(assert (not (= hitGround_b2__3_ false)))

(check-sat)
(exit)

\end{lstlisting}

\subsubsection{Output} 
\begin{figure}
  \includegraphics[height=7cm]{2ball-output.png}
  \caption{Output}\label{fig:ball-out}
\end{figure}
\begin{lstlisting}
Solution:
duration_0 : [ ENTIRE ] = [0.6324539184570312, 0.6324558258056641]
duration_1 : [ ENTIRE ] = [0, 0]
duration_2 : [ ENTIRE ] = [0.1421389579772949, 0.1421456336975098]
duration_3 : [ ENTIRE ] = [0, 0]
height_b1_0 : [ ENTIRE ] = [2, 2]
height_b1_0_t : [ ENTIRE ] = [0, 0]
height_b1_1_t : [ ENTIRE ] = [0, 0]
height_b1_2_t : [ ENTIRE ] = [0.6180163634126075, 0.6182833269122059]
height_b1_3_t : [ ENTIRE ] = [0.6180163634126075, 0.6182833269122059]
height_b2_0 : [ ENTIRE ] = [3, 3]
height_b2_0_t : [ ENTIRE ] = [0.9999981420223776, 1.000010205141734]
height_b2_1_t : [ ENTIRE ] = [0.9999981420223776, 1.000010205141734]
height_b2_2_t : [ ENTIRE ] = [0, 0]
height_b2_3_t : [ ENTIRE ] = [0, 0]
mode_0_ : [ ENTIRE ] = [1, 1]
mode_1_ : [ ENTIRE ] = [1, 1]
mode_2_ : [ ENTIRE ] = [1, 1]
mode_3_ : [ ENTIRE ] = [1, 1]
mode_4_ : [ ENTIRE ] = [1, 1]
velocity_b1_0 : [ ENTIRE ] = [0, 0]
velocity_b1_0_t : [ ENTIRE ] = [-6.324558258056641, -6.324539184570312]
velocity_b1_1_t : [ ENTIRE ] = [5.05963134765625, 5.059646606445313]
velocity_b1_2_t : [ ENTIRE ] = [3.637947082519531, 3.63872528076172]
velocity_b1_3_t : [ ENTIRE ] = [3.637947082519531, 3.63872528076172]
velocity_b2_0 : [ ENTIRE ] = [0, 0]
velocity_b2_0_t : [ ENTIRE ] = [-6.324558258056641, -6.324539184570312]
velocity_b2_1_t : [ ENTIRE ] = [-6.324558258056641, -6.324539184570312]
velocity_b2_2_t : [ ENTIRE ] = [-7.746391296386719, -7.7459716796875]
velocity_b2_3_t : [ ENTIRE ] = [6.97137451171875, 6.971752166748048]
true_a : Bool = true
false_a : Bool = false
hitGround_b1__0_ : Bool = false
hitGround_b1__1_ : Bool = true
hitGround_b1__2_ : Bool = false
hitGround_b1__3_ : Bool = false
hitGround_b2__0_ : Bool = false
hitGround_b2__1_ : Bool = false
hitGround_b2__2_ : Bool = false
hitGround_b2__3_ : Bool = true
qlabel_init_ : Bool = true
wait_0_ : Bool = true
wait_1_ : Bool = false
wait_2_ : Bool = true
wait_3_ : Bool = false
delta-sat with delta = 0.00100000000000000
\end{lstlisting}

\EOCCC

\subsection{Turning Car --- Non-convex Invariants}
\begin{center}
  \includegraphics[height=7cm]{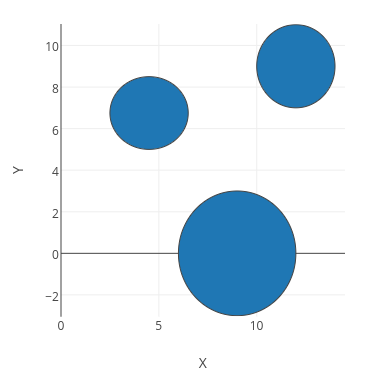}
\end{center}

Consider a car that is moving at a constant speed of 1 unit. The car is initially at origin where $x=0$ and $y=0$ and $\theta=0$. Additionally there are pillars defined by the equations $(x-6)^2+y^2\le 9$,$(x-5)^2+(y-7)^2\le 4$,$(x-12)^2+(y-9)^2\le 4$. The goal is to find a plan such that the car ends up at $x=13$ and $y=0$ without hitting the pillars. 

The dynamics of the car is as follows:
\bi
\item
Moving Straight
\[\frac{d[x]}{dt}= \cos(\theta),\ \frac{d[y]}{dt}= \sin(\theta),\ \frac{d[theta]}{dt}=0\]
\item
Turning Left
\[\frac{d[x]}{dt}= \cos(\theta),\ \frac{d[y]}{dt}= \sin(\theta),\ \frac{d[theta]}{dt}= \tan(\frac{\pi}{18})\]
\item
Turning Right
\[\frac{d[x]}{dt}= \cos(\theta),\ \frac{d[y]}{dt}= \sin(\theta),\ \frac{d[theta]}{dt}= \tan(-\frac{\pi}{18})\]
\ei

We assume the car is a pint. For the car not to hit the pillars, the invariants are $(x-9)^2+y^2>9$,$(x-5)^2+(y-7)^2>4$,$(x-12)^2+(y-9)^2>4$.

\subsubsection{Hybrid Automata Components}
\begin{center}
  \includegraphics[height=10cm]{car-turn-HA.png}
\end{center}
\begin{itemize}
\item Variables:
\begin{itemize}
\item \(X,X',\dot{X}\)
\item \(Y,Y',\dot{Y}\)
\item \(Theta, Theta',\dot{Theta}\)
\end{itemize}
\item States:
\begin{itemize}
\item MoveStraight ($mode=1$)
\item MoveLeft ($mode=2$)
\item MoveRight ($mode=3$)
\end{itemize}
\item Directed Graph: The graph is given above.
\item H-events:
\begin{itemize}
\item Straighten
\item TurnLeft
\item TurnRight
\end{itemize}
\item Invariants: 
\begin{itemize} 
\item ${\sf Inv}(all modes): ((X-9)^2+Y^2>9)\land\ ((X-5)^2+(Y-7)^2>4)\land\ ((X-12)^2+(Y-9)^2>4))$
\end{itemize}
\item Flow: 
\begin{itemize} 
\item \({\sf Flow}(1)(X,Y,Theta): \dot{X}=sin(Theta)\ \land\ \dot{Y}=cos(Theta) \land\ \dot{Theta}=0\).
\item \({\sf Flow}(2)(X,Y,Theta): \dot{X}=sin(Theta)\ \land\ \dot{Y}=cos(Theta) \land\ \dot{Theta}=tan(\pi/18)\).
\item \({\sf Flow}(3)(X,Y,Theta): \dot{X}=sin(Theta)\ \land\ \dot{Y}=cos(Theta) \land\ \dot{Theta}=tan(-\pi/18)\).
\end{itemize}
\item Reset:
\begin{itemize} 
\item  ${\sf Reset}((MoveRight, MoveStraight)): X' = X\ \land\ Y'=Y\ \land\ Theta'=Theta$

\item  ${\sf Reset}((MoveLeft, MoveStraight)): X' = X\ \land\ Y'=Y\ \land\ Theta'=Theta$

\item  ${\sf Reset}( (MoveStraight, MoveLeft) ): X' = X\ \land\ Y'=Y\ \land\ Theta'=Theta$

\item  ${\sf Reset}( (MoveRight, MoveLeft) ): X' = X\ \land\ Y'=Y\ \land\ Theta'=Theta$

\item ${\sf Reset}( (MoveStraight, MoveRight) ): X' = X\ \land\ Y'=Y\ \land\ Theta'=Theta$

\item ${\sf Reset}( (MoveLeft, MoveRight) ): X' = X\ \land\ Y'=Y\ \land\ Theta'=Theta$

\end{itemize}
\end{itemize}

\subsubsection{In the Input Language of {\sc cplus2aspmt}}\label{ssec:car-cplus}
\begin{lstlisting}[breaklines]
% File: car.cp

:- constants
x         :: differentiableFluent(0..40);
y         :: differentiableFluent(-50..50);
theta     :: differentiableFluent(-50..50);
straighten, 
turnLeft, 
turnRight :: exogenousAction.

:- variables
X,X0,S,Y,X1,X2,D,D1,T,RP,R.

% Reset
constraint (x=D & y=X0 & theta=X1) after x=D & y=X0 & theta=X1 & turnLeft.
constraint (x=D & y=X0 & theta=X1) after x=D & y=X0 & theta=X1 & turnRight.
constraint (x=D & y=X0 & theta=X1) after x=D & y=X0 & theta=X1 & straighten.

% Mode
straighten causes mode=1.
turnLeft causes mode=2.
turnRight causes mode=3.
nonexecutable straighten if mode=1.
nonexecutable turnLeft if mode=2.
nonexecutable turnRight if mode=3.

% Duration
straighten causes duration=0.
turnRight causes duration=0.
turnLeft causes duration=0.

% Wait
default wait.
straighten causes ~wait.
turnLeft causes ~wait.
turnRight causes ~wait.  

% Rates
derivative of theta is 0 if mode=1.
derivative of y is sin(theta) if mode=1.
derivative of x is cos(theta) if mode=1.

derivative of theta is tan(0.226893) if mode=2.
derivative of y is  sin(theta) if mode=2.
derivative of x is cos(theta) if mode=2.

derivative of theta is tan(-0.226893) if mode=3.
derivative of y is sin(theta) if mode=3.
derivative of x is cos(theta) if mode=3.

% Invariant
constraint (x=X & y=Y ->> (X-9)*(X-9) + Y*Y > 9).
always_t (x=X & y=Y ->> (X-9)*(X-9) + Y*Y > 9) if mode=1.
always_t (x=X & y=Y ->> (X-9)*(X-9) + Y*Y > 9) if mode=2.
always_t (x=X & y=Y ->> (X-9)*(X-9) + Y*Y > 9) if mode=3.

constraint (x=X & y=Y ->> (X-5)*(X-5) + (Y-7)*(Y-7)>4).
always_t (x=X & y=Y ->> (X-5)*(X-5) + (Y-7)*(Y-7)>4) if mode=1.
always_t (x=X & y=Y ->> (X-5)*(X-5) + (Y-7)*(Y-7)>4) if mode=2.
always_t (x=X & y=Y ->> (X-5)*(X-5) + (Y-7)*(Y-7)>4) if mode=3.

constraint (x=X & y=Y ->> (X-12)*(X-12) + (Y-9)*(Y-9)>4).
always_t (x=X & y=Y ->> (X-12)*(X-12) + (Y-9)*(Y-9)>4) if mode=1.
always_t (x=X & y=Y ->> (X-12)*(X-12) + (Y-9)*(Y-9)>4) if mode=2.
always_t (x=X & y=Y ->> (X-12)*(X-12) + (Y-9)*(Y-9)>4) if mode=3.

:- query
label :: test;
0:x=0;
0:y=0;
0:theta=0.69183;
0:mode=1;
3:x=13;
3:y=0.
\end{lstlisting}

\BOCCC
\subsubsection{{\sc aspmt} Input Generated by {\sc cplus2aspmt}}

\begin{lstlisting}[breaklines]
% Sort Declarations ---------------------------------------------------
:- sorts
step;astep;qSort;pstep;zero.

% Variable Declarations -----------------------------------------------
:- variables
ST::pstep;
AS::astep.

% Object Declarations -----------------------------------------------
:- objects
0 :: zero;
0..maxstep :: step;
1..maxstep :: pstep;
0..maxstep-1 :: astep;
query :: qSort.

% Constant Declarations -----------------------------------------------
:- constants
mode(step)::real[0..50];
duration(astep)::real[0..50];
wait(astep)::boolean;
x(zero)::continuousFluent[0..40];
x_t(astep)::continuousFluent[0..40];
y(zero)::continuousFluent[-50..50];
y_t(astep)::continuousFluent[-50..50];
theta(zero)::continuousFluent[-50..50];
theta_t(astep)::continuousFluent[-50..50];
straighten(astep)::boolean;
turnLeft(astep)::boolean;
turnRight(astep)::boolean;
qlabel(qSort) :: boolean.


{mode(ST)=GVAR_real} <- mode(ST-1)=GVAR_real.
{mode(0)=GVAR_real}.
{duration(ST-1)=GVAR_real}.
{x(0)=GVAR_continuous}.
{x_t(ST-1)=GVAR_continuous}.
{y(0)=GVAR_continuous}.
{y_t(ST-1)=GVAR_continuous}.
{theta(0)=GVAR_continuous}.
{theta_t(ST-1)=GVAR_continuous}.
{straighten(ST-1)=GVAR_boolean}.
{turnLeft(ST-1)=GVAR_boolean}.
{turnRight(ST-1)=GVAR_boolean}.

% Guard

% Reset

false <- not (x_t(0)=D & y_t(0)=X0 & theta_t(0)=X1) & x(0)=D & y(0)=X0 & theta(0)=X1 & turnLeft(0)=true.
false <- not (x_t(ST)=D & y_t(ST)=X0 & theta_t(ST)=X1) & x_t(ST-1)=D & y_t(ST-1)=X0 & theta_t(ST-1)=X1 & turnLeft(ST)=true.

false <- not (x_t(0)=D & y_t(0)=X0 & theta_t(0)=X1) & x(0)=D & y(0)=X0 & theta(0)=X1 & turnRight(0)=true.
false <- not (x_t(ST)=D & y_t(ST)=X0 & theta_t(ST)=X1) & x_t(ST-1)=D & y_t(ST-1)=X0 & theta_t(ST-1)=X1 & turnRight(ST)=true.

false <- not (x_t(0)=D & y_t(0)=X0 & theta_t(0)=X1) & x(0)=D & y(0)=X0 & theta(0)=X1 & straighten(0)=true.
false <- not (x_t(ST)=D & y_t(ST)=X0 & theta_t(ST)=X1) & x_t(ST-1)=D & y_t(ST-1)=X0 & theta_t(ST-1)=X1 & straighten(ST)=true.

% Mode

mode(ST)=1 <- straighten(ST-1)=true.

mode(ST)=2 <- turnLeft(ST-1)=true.

mode(ST)=3 <- turnRight(ST-1)=true.

false <- straighten(ST-1)=true & mode(ST-1)=1.

false <- turnLeft(ST-1)=true & mode(ST-1)=2.

false <- turnRight(ST-1)=true & mode(ST-1)=3.

% Duration

duration(ST-1)=0 <- straighten(ST-1)=true.

duration(ST-1)=0 <- turnRight(ST-1)=true.

duration(ST-1)=0 <- turnLeft(ST-1)=true.

% Wait

wait(ST-1)=true <- wait(ST-1)=true.

wait(ST-1)=false <- straighten(ST-1)=true.

wait(ST-1)=false <- turnLeft(ST-1)=true.

wait(ST-1)=false <- turnRight(ST-1)=true.

% Rates

d/dt[theta](1)=0.

d/dt[y](1)=sin(theta).

d/dt[x](1)=cos(theta).

d/dt[theta](2)=tan(0.226893).

d/dt[y](2)=sin(theta).

d/dt[x](2)=cos(theta).

d/dt[theta](3)=tan(-0.226893).

d/dt[y](3)=sin(theta).

d/dt[x](3)=cos(theta).

%Invariant

false <- not (x(0)=X & y(0)=Y -> (X-9)*(X-9)+Y*Y>9).
false <- not (x_t(ST-1)=X & y_t(ST-1)=Y -> (X-9)*(X-9)+Y*Y>9).

always (x_t(ST-1)=X & y_t(ST-1)=Y -> (X-9)*(X-9)+Y*Y>9) <- mode=1 & duration(ST-1)=D.

always (x_t(ST-1)=X & y_t(ST-1)=Y -> (X-9)*(X-9)+Y*Y>9) <- mode=2 & duration(ST-1)=D.

always (x_t(ST-1)=X & y_t(ST-1)=Y -> (X-9)*(X-9)+Y*Y>9) <- mode=3 & duration(ST-1)=D.

false <- not (x(0)=X & y(0)=Y -> (X-5)*(X-5)+(Y-7)*(Y-7)>4).
false <- not (x_t(ST-1)=X & y_t(ST-1)=Y -> (X-5)*(X-5)+(Y-7)*(Y-7)>4).

always (x_t(ST-1)=X & y_t(ST-1)=Y -> (X-5)*(X-5)+(Y-7)*(Y-7)>4) <- mode=1 & duration(ST-1)=D.

always (x_t(ST-1)=X & y_t(ST-1)=Y -> (X-5)*(X-5)+(Y-7)*(Y-7)>4) <- mode=2 & duration(ST-1)=D.

always (x_t(ST-1)=X & y_t(ST-1)=Y -> (X-5)*(X-5)+(Y-7)*(Y-7)>4) <- mode=3 & duration(ST-1)=D.

false <- not (x(0)=X & y(0)=Y -> (X-12)*(X-12)+(Y-9)*(Y-9)>4).
false <- not (x_t(ST-1)=X & y_t(ST-1)=Y -> (X-12)*(X-12)+(Y-9)*(Y-9)>4).

always (x_t(ST-1)=X & y_t(ST-1)=Y -> (X-12)*(X-12)+(Y-9)*(Y-9)>4) <- mode=1 & duration(ST-1)=D.

always (x_t(ST-1)=X & y_t(ST-1)=Y -> (X-12)*(X-12)+(Y-9)*(Y-9)>4) <- mode=2 & duration(ST-1)=D.

always (x_t(ST-1)=X & y_t(ST-1)=Y -> (X-12)*(X-12)+(Y-9)*(Y-9)>4) <- mode=3 & duration(ST-1)=D.


%Integral Law
[CT10, CT11, CT12]=int(0,D,[CT00, CT01, CT02],1)<- mode(0)=1 & duration(0)=D & x(0)=CT00 & y(0)=CT01 & theta(0)=CT02 & x_t(0)=CT10 & y_t(0)=CT11 & theta_t(0)=CT12 & wait(0).
[CT10, CT11, CT12]=int(0,D,[CT00, CT01, CT02],1)<- mode(ST)=1 & duration(ST)=D & x_t(ST-1)=CT00 & y_t(ST-1)=CT01 & theta_t(ST-1)=CT02 & x_t(ST)=CT10 & y_t(ST)=CT11 & theta_t(ST)=CT12 & wait(ST).
[CT10, CT11, CT12]=int(0,D,[CT00, CT01, CT02],2)<- mode(0)=2 & duration(0)=D & x(0)=CT00 & y(0)=CT01 & theta(0)=CT02 & x_t(0)=CT10 & y_t(0)=CT11 & theta_t(0)=CT12 & wait(0).
[CT10, CT11, CT12]=int(0,D,[CT00, CT01, CT02],2)<- mode(ST)=2 & duration(ST)=D & x_t(ST-1)=CT00 & y_t(ST-1)=CT01 & theta_t(ST-1)=CT02 & x_t(ST)=CT10 & y_t(ST)=CT11 & theta_t(ST)=CT12 & wait(ST).
[CT10, CT11, CT12]=int(0,D,[CT00, CT01, CT02],3)<- mode(0)=3 & duration(0)=D & x(0)=CT00 & y(0)=CT01 & theta(0)=CT02 & x_t(0)=CT10 & y_t(0)=CT11 & theta_t(0)=CT12 & wait(0).
[CT10, CT11, CT12]=int(0,D,[CT00, CT01, CT02],3)<- mode(ST)=3 & duration(ST)=D & x_t(ST-1)=CT00 & y_t(ST-1)=CT01 & theta_t(ST-1)=CT02 & x_t(ST)=CT10 & y_t(ST)=CT11 & theta_t(ST)=CT12 & wait(ST).

% Query

<- not 0:x=0.
<- not 0:y=0.
<- not 0:theta=0.
<- not 0:mode=1.
<- not 5:x=13.
<- not 5:y=0.

\end{lstlisting}

\subsubsection{${\tt dReal}$ Input Generated by {\sc cplus2aspmt}}

\begin{lstlisting}[breaklines]
(set-logic QF_NRA_ODE)
(declare-const true_a Bool)
(declare-const false_a Bool)
(declare-const theta Real)
(declare-const theta_t Real)
(declare-const x Real)
(declare-const x_t Real)
(declare-const y Real)
(declare-const y_t Real)
(declare-const duration_0_ Real)
(declare-const duration_1_ Real)
(declare-const duration_2_ Real)
(declare-const duration_3_ Real)
(declare-const duration_4_ Real)
(declare-const mode_0_ Real)
(declare-const mode_1_ Real)
(declare-const mode_2_ Real)
(declare-const mode_3_ Real)
(declare-const mode_4_ Real)
(declare-const mode_5_ Real)
(declare-const qlabel_init_ Bool)
(declare-const straighten_0_ Bool)
(declare-const straighten_1_ Bool)
(declare-const straighten_2_ Bool)
(declare-const straighten_3_ Bool)
(declare-const straighten_4_ Bool)
(declare-const theta_0_ Real)
(declare-const theta_0_t Real)
(declare-const theta_1_t Real)
(declare-const theta_2_t Real)
(declare-const theta_3_t Real)
(declare-const theta_4_t Real)
(declare-const turnLeft_0_ Bool)
(declare-const turnLeft_1_ Bool)
(declare-const turnLeft_2_ Bool)
(declare-const turnLeft_3_ Bool)
(declare-const turnLeft_4_ Bool)
(declare-const turnRight_0_ Bool)
(declare-const turnRight_1_ Bool)
(declare-const turnRight_2_ Bool)
(declare-const turnRight_3_ Bool)
(declare-const turnRight_4_ Bool)
(declare-const wait_0_ Bool)
(declare-const wait_1_ Bool)
(declare-const wait_2_ Bool)
(declare-const wait_3_ Bool)
(declare-const wait_4_ Bool)
(declare-const x_0_ Real)
(declare-const x_0_t Real)
(declare-const x_1_t Real)
(declare-const x_2_t Real)
(declare-const x_3_t Real)
(declare-const x_4_t Real)
(declare-const y_0_ Real)
(declare-const y_0_t Real)
(declare-const y_1_t Real)
(declare-const y_2_t Real)
(declare-const y_3_t Real)
(declare-const y_4_t Real)
(define-ode flow_1 ((= d/dt[x] (cos theta))(= d/dt[y] (sin theta))(= d/dt[theta] 0)))
(define-ode flow_2 ((= d/dt[x] (cos theta))(= d/dt[y] (sin theta))(= d/dt[theta] (tan (/ 2268 10000)))))
(define-ode flow_3 ((= d/dt[x] (cos theta))(= d/dt[y] (sin theta))(= d/dt[theta] (tan (/ -2268 10000)))))
(assert true_a)
(assert (not false_a))
(assert (>= duration_0_ 0))
(assert (<= duration_0_ 50))
(assert (>= duration_1_ 0))
(assert (<= duration_1_ 50))
(assert (>= duration_2_ 0))
(assert (<= duration_2_ 50))
(assert (>= duration_3_ 0))
(assert (<= duration_3_ 50))
(assert (>= duration_4_ 0))
(assert (<= duration_4_ 50))
(assert (>= mode_0_ 0))
(assert (<= mode_0_ 50))
(assert (>= mode_1_ 0))
(assert (<= mode_1_ 50))
(assert (>= mode_2_ 0))
(assert (<= mode_2_ 50))
(assert (>= mode_3_ 0))
(assert (<= mode_3_ 50))
(assert (>= mode_4_ 0))
(assert (<= mode_4_ 50))
(assert (>= mode_5_ 0))
(assert (<= mode_5_ 50))
(assert (>= theta_0_ -50))
(assert (<= theta_0_ 50))
(assert (>= theta_0_t -50))
(assert (<= theta_0_t 50))
(assert (>= theta_1_t -50))
(assert (<= theta_1_t 50))
(assert (>= theta_2_t -50))
(assert (<= theta_2_t 50))
(assert (>= theta_3_t -50))
(assert (<= theta_3_t 50))
(assert (>= theta_4_t -50))
(assert (<= theta_4_t 50))
(assert (>= x_0_ 0))
(assert (<= x_0_ 40))
(assert (>= x_0_t 0))
(assert (<= x_0_t 40))
(assert (>= x_1_t 0))
(assert (<= x_1_t 40))
(assert (>= x_2_t 0))
(assert (<= x_2_t 40))
(assert (>= x_3_t 0))
(assert (<= x_3_t 40))
(assert (>= x_4_t 0))
(assert (<= x_4_t 40))
(assert (>= y_0_ -50))
(assert (<= y_0_ 50))
(assert (>= y_0_t -50))
(assert (<= y_0_t 50))
(assert (>= y_1_t -50))
(assert (<= y_1_t 50))
(assert (>= y_2_t -50))
(assert (<= y_2_t 50))
(assert (>= y_3_t -50))
(assert (<= y_3_t 50))
(assert (>= y_4_t -50))
(assert (<= y_4_t 50))
(assert (or (or (or (and (= wait_4_ true) (= wait_4_ true)) (and (= turnRight_4_ true) (= wait_4_ false))) (and (= turnLeft_4_ true) (= wait_4_ false))) (and (= straighten_4_ true) (= wait_4_ false))))
(assert (=> (= wait_4_ true) (= wait_4_ true)))
(assert (=> (= turnRight_4_ true) (= wait_4_ false)))
(assert (=> (= turnLeft_4_ true) (= wait_4_ false)))
(assert (=> (= straighten_4_ true) (= wait_4_ false)))
(assert (or (or (or (and (= wait_3_ true) (= wait_3_ true)) (and (= turnRight_3_ true) (= wait_3_ false))) (and (= turnLeft_3_ true) (= wait_3_ false))) (and (= straighten_3_ true) (= wait_3_ false))))
(assert (=> (= wait_3_ true) (= wait_3_ true)))
(assert (=> (= turnRight_3_ true) (= wait_3_ false)))
(assert (=> (= turnLeft_3_ true) (= wait_3_ false)))
(assert (=> (= straighten_3_ true) (= wait_3_ false)))
(assert (or (or (or (and (= wait_2_ true) (= wait_2_ true)) (and (= turnRight_2_ true) (= wait_2_ false))) (and (= turnLeft_2_ true) (= wait_2_ false))) (and (= straighten_2_ true) (= wait_2_ false))))
(assert (=> (= wait_2_ true) (= wait_2_ true)))
(assert (=> (= turnRight_2_ true) (= wait_2_ false)))
(assert (=> (= turnLeft_2_ true) (= wait_2_ false)))
(assert (=> (= straighten_2_ true) (= wait_2_ false)))
(assert (or (or (or (and (= wait_1_ true) (= wait_1_ true)) (and (= turnRight_1_ true) (= wait_1_ false))) (and (= turnLeft_1_ true) (= wait_1_ false))) (and (= straighten_1_ true) (= wait_1_ false))))
(assert (=> (= wait_1_ true) (= wait_1_ true)))
(assert (=> (= turnRight_1_ true) (= wait_1_ false)))
(assert (=> (= turnLeft_1_ true) (= wait_1_ false)))
(assert (=> (= straighten_1_ true) (= wait_1_ false)))
(assert (or (or (or (and (= wait_0_ true) (= wait_0_ true)) (and (= turnRight_0_ true) (= wait_0_ false))) (and (= turnLeft_0_ true) (= wait_0_ false))) (and (= straighten_0_ true) (= wait_0_ false))))
(assert (=> (= wait_0_ true) (= wait_0_ true)))
(assert (=> (= turnRight_0_ true) (= wait_0_ false)))
(assert (=> (= turnLeft_0_ true) (= wait_0_ false)))
(assert (=> (= straighten_0_ true) (= wait_0_ false)))
(assert (= qlabel_init_ true))
(assert (= qlabel_init_ true))
(assert (or (or (or (= mode_5_ mode_4_) (and (= turnRight_4_ true) (= mode_5_ 3))) (and (= turnLeft_4_ true) (= mode_5_ 2))) (and (= straighten_4_ true) (= mode_5_ 1))))
(assert (=> (= turnRight_4_ true) (= mode_5_ 3)))
(assert (=> (= turnLeft_4_ true) (= mode_5_ 2)))
(assert (=> (= straighten_4_ true) (= mode_5_ 1)))
(assert (or (or (or (= mode_4_ mode_3_) (and (= turnRight_3_ true) (= mode_4_ 3))) (and (= turnLeft_3_ true) (= mode_4_ 2))) (and (= straighten_3_ true) (= mode_4_ 1))))
(assert (=> (= turnRight_3_ true) (= mode_4_ 3)))
(assert (=> (= turnLeft_3_ true) (= mode_4_ 2)))
(assert (=> (= straighten_3_ true) (= mode_4_ 1)))
(assert (or (or (or (= mode_3_ mode_2_) (and (= turnRight_2_ true) (= mode_3_ 3))) (and (= turnLeft_2_ true) (= mode_3_ 2))) (and (= straighten_2_ true) (= mode_3_ 1))))
(assert (=> (= turnRight_2_ true) (= mode_3_ 3)))
(assert (=> (= turnLeft_2_ true) (= mode_3_ 2)))
(assert (=> (= straighten_2_ true) (= mode_3_ 1)))
(assert (or (or (or (= mode_2_ mode_1_) (and (= turnRight_1_ true) (= mode_2_ 3))) (and (= turnLeft_1_ true) (= mode_2_ 2))) (and (= straighten_1_ true) (= mode_2_ 1))))
(assert (=> (= turnRight_1_ true) (= mode_2_ 3)))
(assert (=> (= turnLeft_1_ true) (= mode_2_ 2)))
(assert (=> (= straighten_1_ true) (= mode_2_ 1)))
(assert (or (or (or (= mode_1_ mode_0_) (and (= turnRight_0_ true) (= mode_1_ 3))) (and (= turnLeft_0_ true) (= mode_1_ 2))) (and (= straighten_0_ true) (= mode_1_ 1))))
(assert (=> (= turnRight_0_ true) (= mode_1_ 3)))
(assert (=> (= turnLeft_0_ true) (= mode_1_ 2)))
(assert (=> (= straighten_0_ true) (= mode_1_ 1)))
(assert (=> (= turnLeft_4_ true) (= duration_4_ 0)))
(assert (=> (= turnRight_4_ true) (= duration_4_ 0)))
(assert (=> (= straighten_4_ true) (= duration_4_ 0)))
(assert (=> (= turnLeft_3_ true) (= duration_3_ 0)))
(assert (=> (= turnRight_3_ true) (= duration_3_ 0)))
(assert (=> (= straighten_3_ true) (= duration_3_ 0)))
(assert (=> (= turnLeft_2_ true) (= duration_2_ 0)))
(assert (=> (= turnRight_2_ true) (= duration_2_ 0)))
(assert (=> (= straighten_2_ true) (= duration_2_ 0)))
(assert (=> (= turnLeft_1_ true) (= duration_1_ 0)))
(assert (=> (= turnRight_1_ true) (= duration_1_ 0)))
(assert (=> (= straighten_1_ true) (= duration_1_ 0)))
(assert true_a)
(assert (=> (= turnLeft_0_ true) (= duration_0_ 0)))
(assert (=> (= turnRight_0_ true) (= duration_0_ 0)))
(assert (=> (= straighten_0_ true) (= duration_0_ 0)))
(assert (=> (and (= mode_0_ 1) (= wait_0_ true)) (= [x_0_t y_0_t theta_0_t] (integral 0. duration_0_ [x_0_ y_0_ theta_0_] flow_1))))
(assert (=> (and (= mode_4_ 1) (= wait_4_ true)) (= [x_4_t y_4_t theta_4_t] (integral 0. duration_4_ [x_3_t y_3_t theta_3_t] flow_1))))
(assert (=> (and (= mode_3_ 1) (= wait_3_ true)) (= [x_3_t y_3_t theta_3_t] (integral 0. duration_3_ [x_2_t y_2_t theta_2_t] flow_1))))
(assert (=> (and (= mode_2_ 1) (= wait_2_ true)) (= [x_2_t y_2_t theta_2_t] (integral 0. duration_2_ [x_1_t y_1_t theta_1_t] flow_1))))
(assert (=> (and (= mode_1_ 1) (= wait_1_ true)) (= [x_1_t y_1_t theta_1_t] (integral 0. duration_1_ [x_0_t y_0_t theta_0_t] flow_1))))
(assert (=> (and (= mode_0_ 2) (= wait_0_ true)) (= [x_0_t y_0_t theta_0_t] (integral 0. duration_0_ [x_0_ y_0_ theta_0_] flow_2))))
(assert (=> (and (= mode_4_ 2) (= wait_4_ true)) (= [x_4_t y_4_t theta_4_t] (integral 0. duration_4_ [x_3_t y_3_t theta_3_t] flow_2))))
(assert (=> (and (= mode_3_ 2) (= wait_3_ true)) (= [x_3_t y_3_t theta_3_t] (integral 0. duration_3_ [x_2_t y_2_t theta_2_t] flow_2))))
(assert (=> (and (= mode_2_ 2) (= wait_2_ true)) (= [x_2_t y_2_t theta_2_t] (integral 0. duration_2_ [x_1_t y_1_t theta_1_t] flow_2))))
(assert (=> (and (= mode_1_ 2) (= wait_1_ true)) (= [x_1_t y_1_t theta_1_t] (integral 0. duration_1_ [x_0_t y_0_t theta_0_t] flow_2))))
(assert (=> (and (= mode_0_ 3) (= wait_0_ true)) (= [x_0_t y_0_t theta_0_t] (integral 0. duration_0_ [x_0_ y_0_ theta_0_] flow_3))))
(assert (=> (and (= mode_4_ 3) (= wait_4_ true)) (= [x_4_t y_4_t theta_4_t] (integral 0. duration_4_ [x_3_t y_3_t theta_3_t] flow_3))))
(assert (=> (and (= mode_3_ 3) (= wait_3_ true)) (= [x_3_t y_3_t theta_3_t] (integral 0. duration_3_ [x_2_t y_2_t theta_2_t] flow_3))))
(assert (=> (and (= mode_2_ 3) (= wait_2_ true)) (= [x_2_t y_2_t theta_2_t] (integral 0. duration_2_ [x_1_t y_1_t theta_1_t] flow_3))))
(assert (=> (and (= mode_1_ 3) (= wait_1_ true)) (= [x_1_t y_1_t theta_1_t] (integral 0. duration_1_ [x_0_t y_0_t theta_0_t] flow_3))))
(assert (not (and (not (= x_0_t x_0_)) (= turnLeft_0_ true))))
(assert (not (and (not (= y_0_t y_0_)) (= turnLeft_0_ true))))
(assert (not (and (not (= theta_0_t theta_0_)) (= turnLeft_0_ true))))
(assert (not (and (not (= x_4_t x_3_t)) (= turnLeft_4_ true))))
(assert (not (and (not (= x_3_t x_2_t)) (= turnLeft_3_ true))))
(assert (not (and (not (= x_2_t x_1_t)) (= turnLeft_2_ true))))
(assert (not (and (not (= x_1_t x_0_t)) (= turnLeft_1_ true))))
(assert (not (and (not (= y_4_t y_3_t)) (= turnLeft_4_ true))))
(assert (not (and (not (= y_3_t y_2_t)) (= turnLeft_3_ true))))
(assert (not (and (not (= y_2_t y_1_t)) (= turnLeft_2_ true))))
(assert (not (and (not (= y_1_t y_0_t)) (= turnLeft_1_ true))))
(assert (not (and (not (= theta_4_t theta_3_t)) (= turnLeft_4_ true))))
(assert (not (and (not (= theta_3_t theta_2_t)) (= turnLeft_3_ true))))
(assert (not (and (not (= theta_2_t theta_1_t)) (= turnLeft_2_ true))))
(assert (not (and (not (= theta_1_t theta_0_t)) (= turnLeft_1_ true))))
(assert (not (and (not (= x_0_t x_0_)) (= turnRight_0_ true))))
(assert (not (and (not (= y_0_t y_0_)) (= turnRight_0_ true))))
(assert (not (and (not (= theta_0_t theta_0_)) (= turnRight_0_ true))))
(assert (not (and (not (= x_4_t x_3_t)) (= turnRight_4_ true))))
(assert (not (and (not (= x_3_t x_2_t)) (= turnRight_3_ true))))
(assert (not (and (not (= x_2_t x_1_t)) (= turnRight_2_ true))))
(assert (not (and (not (= x_1_t x_0_t)) (= turnRight_1_ true))))
(assert (not (and (not (= y_4_t y_3_t)) (= turnRight_4_ true))))
(assert (not (and (not (= y_3_t y_2_t)) (= turnRight_3_ true))))
(assert (not (and (not (= y_2_t y_1_t)) (= turnRight_2_ true))))
(assert (not (and (not (= y_1_t y_0_t)) (= turnRight_1_ true))))
(assert (not (and (not (= theta_4_t theta_3_t)) (= turnRight_4_ true))))
(assert (not (and (not (= theta_3_t theta_2_t)) (= turnRight_3_ true))))
(assert (not (and (not (= theta_2_t theta_1_t)) (= turnRight_2_ true))))
(assert (not (and (not (= theta_1_t theta_0_t)) (= turnRight_1_ true))))
(assert (not (and (not (= x_0_t x_0_)) (= straighten_0_ true))))
(assert (not (and (not (= y_0_t y_0_)) (= straighten_0_ true))))
(assert (not (and (not (= theta_0_t theta_0_)) (= straighten_0_ true))))
(assert (not (and (not (= x_4_t x_3_t)) (= straighten_4_ true))))
(assert (not (and (not (= x_3_t x_2_t)) (= straighten_3_ true))))
(assert (not (and (not (= x_2_t x_1_t)) (= straighten_2_ true))))
(assert (not (and (not (= x_1_t x_0_t)) (= straighten_1_ true))))
(assert (not (and (not (= y_4_t y_3_t)) (= straighten_4_ true))))
(assert (not (and (not (= y_3_t y_2_t)) (= straighten_3_ true))))
(assert (not (and (not (= y_2_t y_1_t)) (= straighten_2_ true))))
(assert (not (and (not (= y_1_t y_0_t)) (= straighten_1_ true))))
(assert (not (and (not (= theta_4_t theta_3_t)) (= straighten_4_ true))))
(assert (not (and (not (= theta_3_t theta_2_t)) (= straighten_3_ true))))
(assert (not (and (not (= theta_2_t theta_1_t)) (= straighten_2_ true))))
(assert (not (and (not (= theta_1_t theta_0_t)) (= straighten_1_ true))))
(assert (not (and (= straighten_4_ true) (= mode_4_ 1))))
(assert (not (and (= straighten_3_ true) (= mode_3_ 1))))
(assert (not (and (= straighten_2_ true) (= mode_2_ 1))))
(assert (not (and (= straighten_1_ true) (= mode_1_ 1))))
(assert (not (and (= straighten_0_ true) (= mode_0_ 1))))
(assert (not (and (= turnLeft_4_ true) (= mode_4_ 2))))
(assert (not (and (= turnLeft_3_ true) (= mode_3_ 2))))
(assert (not (and (= turnLeft_2_ true) (= mode_2_ 2))))
(assert (not (and (= turnLeft_1_ true) (= mode_1_ 2))))
(assert (not (and (= turnLeft_0_ true) (= mode_0_ 2))))
(assert (not (and (= turnRight_4_ true) (= mode_4_ 3))))
(assert (not (and (= turnRight_3_ true) (= mode_3_ 3))))
(assert (not (and (= turnRight_2_ true) (= mode_2_ 3))))
(assert (not (and (= turnRight_1_ true) (= mode_1_ 3))))
(assert (not (and (= turnRight_0_ true) (= mode_0_ 3))))
(assert (not (not (> (+ (* (- x_0_ 9) (- x_0_ 9)) (* y_0_ y_0_)) 9))))
(assert (not (not (> (+ (* (- x_4_t 9) (- x_4_t 9)) (* y_4_t y_4_t)) 9))))
(assert (not (not (> (+ (* (- x_3_t 9) (- x_3_t 9)) (* y_3_t y_3_t)) 9))))
(assert (not (not (> (+ (* (- x_2_t 9) (- x_2_t 9)) (* y_2_t y_2_t)) 9))))
(assert (not (not (> (+ (* (- x_1_t 9) (- x_1_t 9)) (* y_1_t y_1_t)) 9))))
(assert (not (not (> (+ (* (- x_0_t 9) (- x_0_t 9)) (* y_0_t y_0_t)) 9))))
(assert (forall_t 1 [0 duration_4_] (not (not (> (+ (* (- x_4_t 9) (- x_4_t 9)) (* y_4_t y_4_t)) 9)))))
(assert (forall_t 1 [0 duration_3_] (not (not (> (+ (* (- x_3_t 9) (- x_3_t 9)) (* y_3_t y_3_t)) 9)))))
(assert (forall_t 1 [0 duration_2_] (not (not (> (+ (* (- x_2_t 9) (- x_2_t 9)) (* y_2_t y_2_t)) 9)))))
(assert (forall_t 1 [0 duration_1_] (not (not (> (+ (* (- x_1_t 9) (- x_1_t 9)) (* y_1_t y_1_t)) 9)))))
(assert (forall_t 1 [0 duration_0_] (not (not (> (+ (* (- x_0_t 9) (- x_0_t 9)) (* y_0_t y_0_t)) 9)))))
(assert (forall_t 2 [0 duration_4_] (not (not (> (+ (* (- x_4_t 9) (- x_4_t 9)) (* y_4_t y_4_t)) 9)))))
(assert (forall_t 2 [0 duration_3_] (not (not (> (+ (* (- x_3_t 9) (- x_3_t 9)) (* y_3_t y_3_t)) 9)))))
(assert (forall_t 2 [0 duration_2_] (not (not (> (+ (* (- x_2_t 9) (- x_2_t 9)) (* y_2_t y_2_t)) 9)))))
(assert (forall_t 2 [0 duration_1_] (not (not (> (+ (* (- x_1_t 9) (- x_1_t 9)) (* y_1_t y_1_t)) 9)))))
(assert (forall_t 2 [0 duration_0_] (not (not (> (+ (* (- x_0_t 9) (- x_0_t 9)) (* y_0_t y_0_t)) 9)))))
(assert (forall_t 3 [0 duration_4_] (not (not (> (+ (* (- x_4_t 9) (- x_4_t 9)) (* y_4_t y_4_t)) 9)))))
(assert (forall_t 3 [0 duration_3_] (not (not (> (+ (* (- x_3_t 9) (- x_3_t 9)) (* y_3_t y_3_t)) 9)))))
(assert (forall_t 3 [0 duration_2_] (not (not (> (+ (* (- x_2_t 9) (- x_2_t 9)) (* y_2_t y_2_t)) 9)))))
(assert (forall_t 3 [0 duration_1_] (not (not (> (+ (* (- x_1_t 9) (- x_1_t 9)) (* y_1_t y_1_t)) 9)))))
(assert (forall_t 3 [0 duration_0_] (not (not (> (+ (* (- x_0_t 9) (- x_0_t 9)) (* y_0_t y_0_t)) 9)))))
(assert (not (not (> (+ (* (- x_0_ 5) (- x_0_ 5)) (* (- y_0_ 7) (- y_0_ 7))) 4))))
(assert (not (not (> (+ (* (- x_4_t 5) (- x_4_t 5)) (* (- y_4_t 7) (- y_4_t 7))) 4))))
(assert (not (not (> (+ (* (- x_3_t 5) (- x_3_t 5)) (* (- y_3_t 7) (- y_3_t 7))) 4))))
(assert (not (not (> (+ (* (- x_2_t 5) (- x_2_t 5)) (* (- y_2_t 7) (- y_2_t 7))) 4))))
(assert (not (not (> (+ (* (- x_1_t 5) (- x_1_t 5)) (* (- y_1_t 7) (- y_1_t 7))) 4))))
(assert (not (not (> (+ (* (- x_0_t 5) (- x_0_t 5)) (* (- y_0_t 7) (- y_0_t 7))) 4))))
(assert (forall_t 1 [0 duration_4_] (not (not (> (+ (* (- x_4_t 5) (- x_4_t 5)) (* (- y_4_t 7) (- y_4_t 7))) 4)))))
(assert (forall_t 1 [0 duration_3_] (not (not (> (+ (* (- x_3_t 5) (- x_3_t 5)) (* (- y_3_t 7) (- y_3_t 7))) 4)))))
(assert (forall_t 1 [0 duration_2_] (not (not (> (+ (* (- x_2_t 5) (- x_2_t 5)) (* (- y_2_t 7) (- y_2_t 7))) 4)))))
(assert (forall_t 1 [0 duration_1_] (not (not (> (+ (* (- x_1_t 5) (- x_1_t 5)) (* (- y_1_t 7) (- y_1_t 7))) 4)))))
(assert (forall_t 1 [0 duration_0_] (not (not (> (+ (* (- x_0_t 5) (- x_0_t 5)) (* (- y_0_t 7) (- y_0_t 7))) 4)))))
(assert (forall_t 2 [0 duration_4_] (not (not (> (+ (* (- x_4_t 5) (- x_4_t 5)) (* (- y_4_t 7) (- y_4_t 7))) 4)))))
(assert (forall_t 2 [0 duration_3_] (not (not (> (+ (* (- x_3_t 5) (- x_3_t 5)) (* (- y_3_t 7) (- y_3_t 7))) 4)))))
(assert (forall_t 2 [0 duration_2_] (not (not (> (+ (* (- x_2_t 5) (- x_2_t 5)) (* (- y_2_t 7) (- y_2_t 7))) 4)))))
(assert (forall_t 2 [0 duration_1_] (not (not (> (+ (* (- x_1_t 5) (- x_1_t 5)) (* (- y_1_t 7) (- y_1_t 7))) 4)))))
(assert (forall_t 2 [0 duration_0_] (not (not (> (+ (* (- x_0_t 5) (- x_0_t 5)) (* (- y_0_t 7) (- y_0_t 7))) 4)))))
(assert (forall_t 3 [0 duration_4_] (not (not (> (+ (* (- x_4_t 5) (- x_4_t 5)) (* (- y_4_t 7) (- y_4_t 7))) 4)))))
(assert (forall_t 3 [0 duration_3_] (not (not (> (+ (* (- x_3_t 5) (- x_3_t 5)) (* (- y_3_t 7) (- y_3_t 7))) 4)))))
(assert (forall_t 3 [0 duration_2_] (not (not (> (+ (* (- x_2_t 5) (- x_2_t 5)) (* (- y_2_t 7) (- y_2_t 7))) 4)))))
(assert (forall_t 3 [0 duration_1_] (not (not (> (+ (* (- x_1_t 5) (- x_1_t 5)) (* (- y_1_t 7) (- y_1_t 7))) 4)))))
(assert (forall_t 3 [0 duration_0_] (not (not (> (+ (* (- x_0_t 5) (- x_0_t 5)) (* (- y_0_t 7) (- y_0_t 7))) 4)))))
(assert (not (not (> (+ (* (- x_0_ 12) (- x_0_ 12)) (* (- y_0_ 9) (- y_0_ 9))) 4))))
(assert (not (not (> (+ (* (- x_4_t 12) (- x_4_t 12)) (* (- y_4_t 9) (- y_4_t 9))) 4))))
(assert (not (not (> (+ (* (- x_3_t 12) (- x_3_t 12)) (* (- y_3_t 9) (- y_3_t 9))) 4))))
(assert (not (not (> (+ (* (- x_2_t 12) (- x_2_t 12)) (* (- y_2_t 9) (- y_2_t 9))) 4))))
(assert (not (not (> (+ (* (- x_1_t 12) (- x_1_t 12)) (* (- y_1_t 9) (- y_1_t 9))) 4))))
(assert (not (not (> (+ (* (- x_0_t 12) (- x_0_t 12)) (* (- y_0_t 9) (- y_0_t 9))) 4))))
(assert (forall_t 1 [0 duration_4_] (not (not (> (+ (* (- x_4_t 12) (- x_4_t 12)) (* (- y_4_t 9) (- y_4_t 9))) 4)))))
(assert (forall_t 1 [0 duration_3_] (not (not (> (+ (* (- x_3_t 12) (- x_3_t 12)) (* (- y_3_t 9) (- y_3_t 9))) 4)))))
(assert (forall_t 1 [0 duration_2_] (not (not (> (+ (* (- x_2_t 12) (- x_2_t 12)) (* (- y_2_t 9) (- y_2_t 9))) 4)))))
(assert (forall_t 1 [0 duration_1_] (not (not (> (+ (* (- x_1_t 12) (- x_1_t 12)) (* (- y_1_t 9) (- y_1_t 9))) 4)))))
(assert (forall_t 1 [0 duration_0_] (not (not (> (+ (* (- x_0_t 12) (- x_0_t 12)) (* (- y_0_t 9) (- y_0_t 9))) 4)))))
(assert (forall_t 2 [0 duration_4_] (not (not (> (+ (* (- x_4_t 12) (- x_4_t 12)) (* (- y_4_t 9) (- y_4_t 9))) 4)))))
(assert (forall_t 2 [0 duration_3_] (not (not (> (+ (* (- x_3_t 12) (- x_3_t 12)) (* (- y_3_t 9) (- y_3_t 9))) 4)))))
(assert (forall_t 2 [0 duration_2_] (not (not (> (+ (* (- x_2_t 12) (- x_2_t 12)) (* (- y_2_t 9) (- y_2_t 9))) 4)))))
(assert (forall_t 2 [0 duration_1_] (not (not (> (+ (* (- x_1_t 12) (- x_1_t 12)) (* (- y_1_t 9) (- y_1_t 9))) 4)))))
(assert (forall_t 2 [0 duration_0_] (not (not (> (+ (* (- x_0_t 12) (- x_0_t 12)) (* (- y_0_t 9) (- y_0_t 9))) 4)))))
(assert (forall_t 3 [0 duration_4_] (not (not (> (+ (* (- x_4_t 12) (- x_4_t 12)) (* (- y_4_t 9) (- y_4_t 9))) 4)))))
(assert (forall_t 3 [0 duration_3_] (not (not (> (+ (* (- x_3_t 12) (- x_3_t 12)) (* (- y_3_t 9) (- y_3_t 9))) 4)))))
(assert (forall_t 3 [0 duration_2_] (not (not (> (+ (* (- x_2_t 12) (- x_2_t 12)) (* (- y_2_t 9) (- y_2_t 9))) 4)))))
(assert (forall_t 3 [0 duration_1_] (not (not (> (+ (* (- x_1_t 12) (- x_1_t 12)) (* (- y_1_t 9) (- y_1_t 9))) 4)))))
(assert (forall_t 3 [0 duration_0_] (not (not (> (+ (* (- x_0_t 12) (- x_0_t 12)) (* (- y_0_t 9) (- y_0_t 9))) 4)))))
(assert (not (not (= x_0_ 0))))
(assert (not (not (= y_0_ 0))))
(assert (not (not (= theta_0_ 0))))
(assert (not (not (= mode_0_ 1))))
(assert (not (not (= x_4_t 13))))
(assert (not (not (= y_4_t 0))))



(check-sat)
(exit)

\end{lstlisting}
\EOCCC

\subsubsection{Output}
\begin{center}
  \includegraphics[height=7cm]{car-turn-fig.png}
\end{center}
\begin{lstlisting}
Command: cplus2aspmt car.cp -c maxstep=3 -c query=test

Output:
Solution:
duration_0_ : [ ENTIRE ] = [8.250457763671875, 8.25128173828125]
duration_1_ : [ ENTIRE ] = [0, 0]
duration_2_ : [ ENTIRE ] = [11.80044126510621, 11.80111503601076]
mode_0_ : [ ENTIRE ] = [1, 1]
mode_1_ : [ ENTIRE ] = [1, 1]
mode_2_ : [ ENTIRE ] = [3, 3]
mode_3_ : [ ENTIRE ] = [3, 3]
theta_0_ : [ ENTIRE ] = [0.6918, 0.6918000000000001]
theta_0_t : [ ENTIRE ] = [0.6918, 0.6918000000000001]
theta_1_t : [ ENTIRE ] = [0.6918, 0.6918000000000001]
theta_2_t : [ ENTIRE ] = [-2.031548557285888, -2.03139307087505]
x_0_ : [ ENTIRE ] = [0, 0]
x_0_t : [ ENTIRE ] = [6.353669267319927, 6.354303809342038]
x_1_t : [ ENTIRE ] = [6.353669267319927, 6.354303809342038]
x_2_t : [ ENTIRE ] = [13, 13]
y_0_ : [ ENTIRE ] = [0, 0]
y_0_t : [ ENTIRE ] = [5.263168261764744, 5.263693895267374]
y_1_t : [ ENTIRE ] = [5.263168261764744, 5.263693895267374]
y_2_t : [ ENTIRE ] = [0, 0]
true_a : Bool = true
false_a : Bool = false
qlabel_test_ : Bool = true
straighten_0_ : Bool = false
straighten_1_ : Bool = false
straighten_2_ : Bool = false
turnLeft_0_ : Bool = false
turnLeft_1_ : Bool = false
turnLeft_2_ : Bool = false
turnRight_0_ : Bool = false
turnRight_1_ : Bool = true
turnRight_2_ : Bool = false
wait_0_ : Bool = true
wait_1_ : Bool = false
wait_2_ : Bool = true
delta-sat with delta = 0.00100000000000000
Total time in milliseconds: 296721
\end{lstlisting}

\section{Experiments} \label{sec:experiments}


\BOCC
\begin{table}[h!]
\centering
\begin{tabular}{ p{6cm}||p{3cm}|p{3cm} }
 \hline
 Example & {\sc dReach} & {\sc cplus2ASPMT}\\
\hline
\hline
Water Tank Example(2 steps) & 0.080s/15 & 0.11/16\\
Water Tank Example(4 steps) & 0.083s/20 & 0.33/32\\
Water Tank Example(6 steps) & 0.085s/25 & 4.6/45\\
\hline
\hline
 Two Ball Example(2 steps)  & 0.110s/23 & 0.107s/23\\
 Two Ball Example(4 steps)  & 0.169s/33 & 1.393s/43\\
 Two Ball Example(6 steps)  & 0.285s/43 & 5.69s/61\\
\hline
\hline
 Turning Car example(3 steps) & 5.34s/34 & 8.01s/44 \\
\hline
\end{tabular}
\vspace{0.5cm}
\caption{Runtime comparison of Hybrid Automata solvers - runtime(s)/final number of ground atoms}
\label{tab:time-results}
\end{table}
\EOCC

\BOCC
\begin{table}[h!]
\centering
\begin{tabular}{ c||c|c|c }
\hline
Steps & {\sc dReach} & {\sc cplus2ASPMT} (path given) & {\sc cplus2ASPMT} (goal given but no path)\\
  \hline
  \hline
 \multicolumn{4}{c}{Thermostat Example}\\
 \hline
 6 & .454 & .213 & .243\\
 \hline
 10 & 0.658 & 0.645 & 3.22\\
 \hline
12 & 1.050 & 1.060 & $>10$m\\
\hline
20 & 5.25 &  9.6076 & $>$10m\\
 \hline
 \multicolumn{4}{c}{Car Example}\\ 
 \hline
3 & 0.876 & 1.123 & 8.735\\
\hline
4 & 2.312 & 17.23 & 3m28\\
\hline
5 & 5.765 & 33.533 & $>$10m\\
 \hline 
6 & 7.322 & 2m43 & $>$10m\\
 \hline
\end{tabular}
\vspace{0.5cm}
\caption{Runtime comparison (seconds)}
\label{tab:time-results}
\end{table}

We compare the run time of system ${\tt dReach}$ \cite{kong15dreach} and {\sc cplus2aspmt} on the thermostat \cite{lygeros04lecture} and the car example in Table~\ref{tab:time-results}. 
Both systems use ${\tt dReal}$ internally but ${\tt dReach}$ is optimized for hybrid automata computation.
It generates a compact logical encoding and makes iterative calls to ${\tt dReal}$ to decide reachability properties.  On the other hand {\sc cplus2aspmt} generates a one-time large encoding and calls ${\tt dReal}$ once. 
 From the table we see that both systems have a similar runtime for smaller number of steps but our system takes a much longer time as steps and domain size increase.
 
It may be possible to improve the run time of {\sc cplus2aspmt} by leveraging incremental answer set computation, which we leave for future work.
\EOCC




\begin{table}[h!]
\centering
\begin{tabular}{ c||c|c|c|c|c }
\hline
Steps & 1 & 3 & 6 & 8 & 10 \\
 \hline
 \hline
 ${\tt dReach}$ {(encoding from \cite{bryce15smt})} & 0.098 & 0.225 & 0.690 & 2.123 & 3.143 \\
 \hline
 {\sc cplus2aspmt} & 0.198 & 7.55 & 18.23 & 88.93 & $>600$\\
 \hline
\end{tabular}
\vspace{0.5cm}
\caption{Runtime Comparison (seconds)}
\label{tab:time-results}
\end{table}

We compare the run time of the system in \cite{bryce15smt} and {\sc cplus2aspmt} for the car domain~\cite{fox06modelling} and the result is shown in Table~\ref{tab:time-results}. 
In~\cite{bryce15smt}, the encoding was in the language of ${\tt dReach}$, which calls ${\tt dReal}$ internally. The computation is optimized for pruning invalid paths of a transition system. It filters out invalid paths using  heuristics described in their paper, generates a compact logical encoding, and makes a call to ${\tt dReal}$ to decide reachability properties.  On the other hand {\sc cplus2aspmt} generates a one-time large encoding without filtering paths and calls ${\tt dReal}$ once. 
 From the table we see that the system presented in \cite{bryce15smt} does perform better than {\sc cplus2aspmt}. As steps increases the difference in run time also increases.
 
It may be possible to improve the run time of {\sc cplus2aspmt} by leveraging incremental answer set computation and path heuristics, which we leave for future work.

\BOCC
\subsection{Encoding Size}

\begin{table}[h!]
\centering
\begin{tabular}{ c||c|c|c}
 \hline
 Example & {\sc cplus2ASPMT} & {\sc dReach} & {\sc dReal}\\
 \hline
Water Tank Example(2 steps) & 47 & 33 & 115\\
 \hline
 Two Ball Example(4 steps)  & 45 & 26 & 275\\
\hline
 Turning Car example(3 steps) & 66 & 37 & 304 \\
 \hline
\end{tabular}
\vspace{0.5cm}
\caption{Encoding size (number of lines) comparison}
\label{tab:space-results}
\end{table}

For each of the examples described in the earlier section we also compare their encoding size with SMT solver ${\tt dReal}$ and Hybrid Automata solver ${\tt dReach}$ in Table~\ref{tab:space-results}. We clearly observe that our system is much more efficient than SMT solvers like ${\tt dReal}$ in terms of encoding effort. Additionally, it is important to note that in our system, the encoding size remains the size no matter the number of steps required to obtain the solution. This is not the case for its equivalent SMT encoding as SMT input must always be grounded. We notice that ${\tt dReach}$ has a slight advantage in this metric as well. 
\EOCC

%
%
%

\end{appendix}

\end{document}